\theoremstyle{plain}
\newtheorem{theorem}{Theorem}[section]
\newtheorem{lemma}[theorem]{Lemma}
\theoremstyle{definition}
\newtheorem{definition}[theorem]{Definition}
\newtheorem{assumption}[theorem]{Assumption}
\theoremstyle{remark}
\newtheorem{remark}[theorem]{Remark}
\newcommand{\E}{\mathbb{E}}
\DeclareMathOperator*{\argmin}{arg\,min}
\newcommand{\alg}{\mathcal{A}}
\newcommand{\WW}{\mathcal{W}}
\newcommand{\XX}{\mathcal{X}}
\newcommand{\norm}[1]{\left\lVert#1\right\rVert}
\newcommand{\wt}{\widetilde}
\newcommand{\bx}{\mathbf{X}}
\newcommand\blfootnote[1]{%
    \begingroup
    \renewcommand\thefootnote{}\footnote{#1}%
    \addtocounter{footnote}{-1}%
    \endgroup
}
\title{
Private Heterogeneous Federated Learning Without a Trusted Server Revisited: Error-Optimal and Communication-Efficient Algorithms for Convex Losses
}
\author{
Changyu Gao\(^{*,1}\), 
Andrew Lowy\(^{*,1}\), 
Xingyu Zhou\(^{*,2}\), 
Stephen J. Wright\(^{1}\)\\
\\
\(^{1}\)University of Wisconsin-Madison, 
\(^{2}\)Wayne State University \\
\blfootnote{The first three authors are ordered alphabetically. Correspondence to Andrew
Lowy <alowy@wisc.edu>, Changyu Gao <changyu.gao@wisc.edu>}}%
\begin{document}
\maketitle

\begin{abstract}
We revisit the problem of federated learning (FL) with private data from people who do not trust the server or other silos/clients. In this context, every silo (e.g.\ hospital) has data from several people (e.g.\ patients) and needs to protect the privacy of each person's data (e.g.\ health records), even if the server and/or other silos try to uncover this data. Inter-Silo Record-Level Differential Privacy (ISRL-DP) prevents each silo's data from being leaked, by requiring that silo \(i\)'s \textit{communications} satisfy item-level differential privacy. Prior work~\citep{lowy2022private} characterized the optimal excess risk bounds for ISRL-DP algorithms with \textit{homogeneous} (i.i.d.) silo data and convex loss functions. However, two important questions were left open: (1) Can the same excess risk bounds be achieved with \textit{heterogeneous} (non-i.i.d.) silo data? (2) Can the optimal risk bounds be achieved with \textit{fewer communication rounds}? 
In this paper, we give positive answers to both questions. 
We provide novel ISRL-DP FL algorithms that achieve the optimal excess risk bounds in the presence of heterogeneous silo data. Moreover, our algorithms are more \textit{communication-efficient} than the prior state-of-the-art. For smooth loss functions, our algorithm achieves the \textit{optimal} excess risk bound 
and has communication complexity that matches the non-private lower bound.
Additionally, our algorithms are more \textit{computationally efficient} than the previous state-of-the-art. 
\end{abstract}

\section{Introduction}
Federated learning (FL) is a distributed machine learning paradigm in which multiple \textit{silos} (a.k.a.\ clients), such as hospitals or cell-phone users, collaborate to train a global model. In FL, silos store their data locally and exchange focused updates (e.g.\ stochastic gradients), sometimes making use of a central server~\citep{kairouz2021advances}. FL has been applied in myriad domains, from consumer digital products (e.g.\ Google's mobile keyboard~\citep{hard2018federated} and Apple's iOS~\citep{apple2}) to healthcare~\citep{courtiol2019deep}, finance~\citep{fedai19}, and large language models (LLMs)~\citep{hilmkil2021scaling}.

One of the primary reasons for the introduction of FL was to enhance protection of the privacy of people's data~\citep{mcmahan2017originalFL}. Unfortunately, local storage is not sufficient to prevent data from being leaked, because the model parameters and updates  communicated between the silos and the central server can reveal sensitive information \citep{zhu2020deep,gupta2022recovering}. For example, \citet{gupta2022recovering} attacked a FL model for training LLMs to uncover private text data.  

\textit{Differential privacy} (DP)~\citep{dwork2006calibrating} guarantees that private data cannot be leaked. Different variations of DP have been considered for FL.\ \textit{Central DP} prevents the \textit{final trained FL model} from leaking data to an \textit{external adversary}~\citep{jayaraman2018distributed, noble2022differentially}. However, central DP has two major drawbacks: (1) it does not provide a privacy guarantee for each individual silo; and (2) it does not ensure privacy when an attacker/eavesdropper has access to the server or to another silo. 

Another notion of DP for FL is \textit{user-level DP}~\citep{mcmahan17, geyer17, levy2021learning}. User-level DP mitigates the drawback (1) of central DP, by providing privacy for the \textit{complete local data set} of each silo. User-level DP is practical in the \textit{cross-device FL} setting, where each silo is a single person (e.g.\ cell-phone user) with a large number of records (e.g.\ text messages).  However, user-level DP still permits privacy breaches if an adversary has access to the server or eavesdrops on the communications between silos. Moreover, user-level DP is not well-suited for \textit{cross-silo} federated learning, where silos represent organizations like hospitals, banks, or schools that house data from many individuals (e.g.\ patients, customers, or students). In the cross-silo FL context, each person possesses a record, referred to as an ``item,'' which may include sensitive data. Therefore, an appropriate notion of DP for cross-silo FL should safeguard the privacy of each individual record (i.e.\ ``item-level differential privacy'') within silo \(i\), rather than the complete aggregated data of silo \(i\).

Following~\citet{lowy2022private,lowy2023private,virginia} (among others), this work considers \textit{inter-silo record-level DP} (ISRL-DP). ISRL-DP requires that the full transcript of \textit{messages} sent by silo \(i\) satisfy item-level DP, for all silos \(i\). Thus, ISRL-DP guarantees the privacy of each silo's local data, even in the presence of an adversary with access to the server, other silos, or the communication channels between silos and the server. See~\cref{fig: diagram} for an illustration. 
If each silo only has one record, then ISRL-DP reduces to \textit{local DP}~\citep{whatcanwelearnprivately,duchi13}. However, in the FL setting where each silo has many records, ISRL-DP FL is more appropriate and permits much higher accuracy than local DP~\citep{lowy2023private}. Moreover, ISRL-DP implies user-level DP if the ISRL-DP parameters are sufficiently small~\citep{lowy2023private}. Thus, ISRL-DP is a practical privacy notion for cross-silo and cross-device FL when individuals do not trust the server or other silos/clients/devices with their sensitive data.

\begin{figure}
  \centering
  \includegraphics[width = 
  0.5 \textwidth]{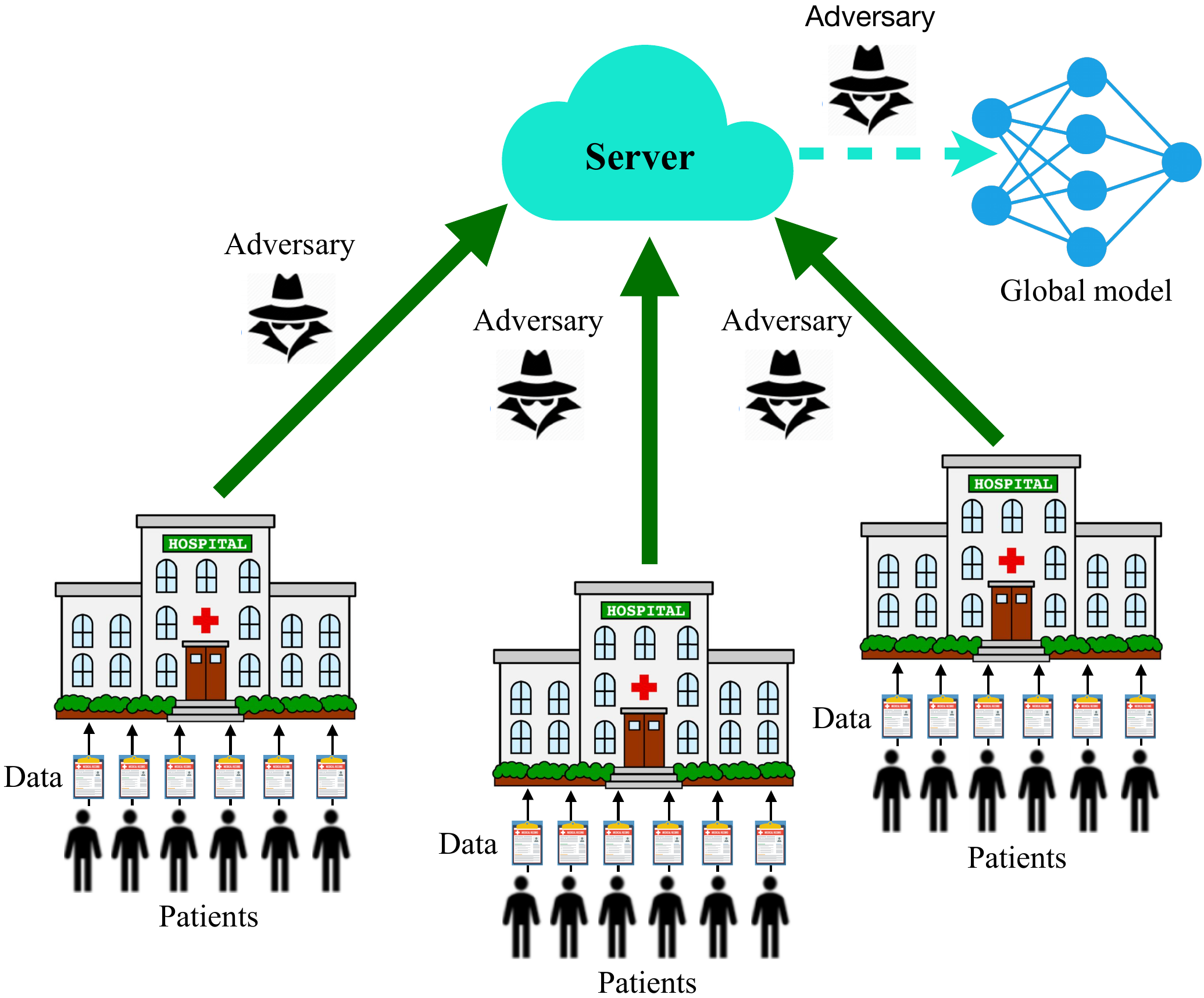}
  \caption{ \footnotesize
ISRL-DP maintains the privacy of each patient's record, provided the patient's \textit{own hospital} is trusted. Silo \(i\)'s messages are item-level DP, preventing data leakage, even if the server/other silos collude to decode the data of silo~\(i\).
}\label{fig: diagram}
\end{figure}

\paragraph{Problem Setup.}
Consider a FL environment with \(N\) silos, where each silo has a local data set with \(n\) samples: \(X_i = (x_{i, 1}, \cdots , x_{i, n})\) for \(i \in [N] := \{1,\ldots,N\}\). At the beginning of every communication round \(r\), silos receive the global model \(w_r\). Silos then utilize their local data to enhance the model, and send local updates to the server (or to other silos for peer-to-peer FL). The server (or other silos) uses the local silo updates to update the global model.

For each silo~\(i\), let \(\mathcal{D}_i\) be an unknown probability distribution on a %
data domain \(\XX\). Let \(f: \mathcal{W} \times \mathcal{X} \to \mathbb{R}\) be a loss function (e.g.\ cross-entropy loss for logistic regression), where \(\WW \subset \mathbb{R}^d\) is a parameter domain. Assume \(f(\cdot, x)\) is convex. Silo \(i\)'s local objective is to minimize its expected population/test loss, which is
\begin{equation} \label{eq: SCO}
F_i(w):= \mathbb{E}_{x_i \sim \mathcal{D}_i}[f(w, x_i)].
\end{equation} 

Our (global) objective is to 
find a model that achieves small error for all silos, by solving the FL problem
\begin{equation}\label{eq:FL}
\tag{FL}
\min_{w \in \mathcal{W}} \left\{F(w):= \frac{1}{N}\sum_{i=1}^N  F_i(w)\right\},
\end{equation}
while maintaining the privacy of each silo's local data under ISRL-DP.\@

Assume that the samples \(\{x_{i,j}\}_{i \in [N], j \in [n]}\) are independent. 
The FL problem is \textit{homogeneous} if the data is i.i.d., i.e., \(\mathcal{D}_i = \mathcal{D}_j\) for all \(i, j \in [N]\). 
In this work, we focus on the more challenging  \textit{heterogeneous} (non-i.i.d.) case: i.e.\ the data distributions \(\{\mathcal{D}_i\}_{i=1}^N\) may be arbitrary. This case arises in many practical FL settings~\citep{kairouz2021advances}. 

\paragraph{Contributions.}
The quality of a private algorithm \(\alg\) for solving Problem~\ref{eq:FL} is measured by its \textit{excess risk} (a.k.a. excess population/test loss), \(\mathbb{E} [F(\mathcal{A}(\mathbf{X}))] - F^*\), where \(F^* = \inf_{w \in \mathcal{W}} F(w)\) and the expectation is over the random draw of the data \(\mathbf{X} = (X_1, \ldots, X_N)\) as well as the randomness of \(\mathcal{A}\).\ \citet{lowy2023private} characterized the minimax optimal excess risk (up to logarithms) for convex ISRL-DP FL in the \textit{homogeneous} (i.i.d.) case: \begin{equation}\label{eq: optimal risk informal}
\wt{\Theta}\left(\frac{1}{\sqrt{N}}\left(\frac{1}{\sqrt{n}} + \frac{\sqrt{d \log(1/\delta)}}{\varepsilon n}\right)\right),
\end{equation}
where \(d\) is the dimension of the parameter space and \(\varepsilon, \delta\) are the privacy parameters. For \textit{heterogeneous} FL, the state-of-the-art excess risk bound is \(O\left(1/\sqrt{Nn} + \left(\sqrt{d\log(1/\delta)}/(\varepsilon n \sqrt{N})\right)^{4/5}\right)\), assuming smoothness of the loss function~\citep{lowy2023private}. 
\citet{lowy2023private} left the following question open: 
\begin{center}
\noindent\fbox{
    \parbox{0.8\linewidth}{
\textbf{Question 1.} Can the optimal ISRL-DP excess risk in~\eqref{eq: optimal risk informal} for solving Problem~\ref{eq:FL} be attained in the presence of \textit{heterogeneous} silo data? 
}}
    \end{center}
\textbf{Contribution 1.} We give a positive answer to \textbf{Question 1}. %
Moreover, we do not require smoothness of the loss function to attain the optimal rate: see~\cref{thm:smoothing}. 

In practical FL settings (especially cross-device settings), it can be expensive or slow for silos to communicate with the server or with each other~\citep{kairouz2021advances}. Thus, another important desiderata for FL algorithms is communication efficiency, which is measured by \textit{communication complexity}: the number of communication rounds \(R\) that are required to achieve excess risk \(\alpha\). In the \textit{homogeneous} setting, the state-of-the-art communication complexity for an algorithm achieving optimal excess risk is due to~\citet[Theorem D.1]{lowy2023private}: \begin{equation}\label{eq: LR smooth communication}
    R_{\text{SOTA}} = 
 \widetilde O\left(\min\left\{Nn, \frac{Nn^2 \varepsilon^2}{d} \right\}\right).
\end{equation}

\begin{center}
\noindent\fbox{
    \parbox{0.8\linewidth}{
\textbf{Question 2.} Can the optimal ISRL-DP excess risk in~\eqref{eq: optimal risk informal} for solving Problem~\ref{eq:FL} be attained in \textit{fewer communication rounds} \(R\), for \(R \ll R_{SOTA}\)? 
}}
    \end{center}
\textbf{Contribution 2.} We answer \textbf{Question 2} positively.
For smooth losses, our algorithm achieves optimal excess risk with significantly improved communication complexity:
\begin{equation}\label{eq: our R smooth}
R_{\text{smooth}} = \widetilde O\left(\min\left\{(Nn)^{1/4}, \frac{N^{1/4} n^{1/2} \varepsilon^{1/2}}{d^{1/4}}\right\}\right).
\end{equation}

Our communication complexity in~\eqref{eq: our R smooth} \textit{matches the non-private lower bound} of~\citet{woodworth2020minibatch} in the high heterogeneity regime (\cref{thm:lower-com}), hinting at the communication-optimality of our algorithm. 

For nonsmooth loss functions, our communication complexity is \begin{equation}\label{eq: our R nonsmooth}
R_{\text{nonsmooth}} = \widetilde O\left(\min\left\{(Nn)^{1/2}, \frac{N^{1/2} n \varepsilon}{d^{1/2}}\right\}\right),
\end{equation}
a major improvement over the prior state-of-the-art bound in~\eqref{eq: LR smooth communication}. 

Moreover, we achieve these improved communication complexity and optimal excess risk bounds \textit{without assuming homogeneity} of silo data.

When \(N=1\) in~\eqref{eq:FL}, ISRL-DP FL reduces to central DP stochastic convex optimization (SCO), which has been studied extensively~\citep{bassily2019private,feldmanPrivateStochasticConvex2020,asi2021private,zhangBringYourOwn2022}. In this centralized setting, communication complexity is usually referred to as \textit{iteration complexity}. Even in the special case of \(N=1\), \textit{our iteration complexity for smooth losses improves over the prior state-of-the-art} result~\citep{zhangBringYourOwn2022}.

Another important property of FL algorithms is \textit{computational efficiency}, which we measure by \textit{(sub)gradient complexity}: the number of stochastic (sub)gradients \(T\) that an algorithm must compute to achieve excess risk \(\alpha\). 

In the \textit{homogeneous} case, the state-of-the-art gradient complexity for an  ISRL-DP FL algorithm that attains optimal excess risk is due to~\citet[Theorem D.1]{lowy2023private}. For smooth losses and \(\varepsilon = \Theta(1)\), \citet{lowy2023private} obtained the follow gradient complexity 
\begin{equation}\label{eq: LR23 smooth gradient complexity}
 T_{\text{SOTA}} \!=\! \widetilde O\left(N^2\min\left\{n, \frac{n^2}{d}\right\} + N^{3/2}\min\left\{n^{3/2}, \frac{n^2}{d^{1/2}}\right\} \right).
\end{equation}
\begin{center}
\noindent\fbox{
    \parbox{0.8\linewidth}{
\textbf{Question 3.} Can the optimal ISRL-DP excess risk in~\eqref{eq: optimal risk informal} for solving Problem~\ref{eq:FL} be attained with \textit{smaller gradient complexity} \(T \ll T_{\text{SOTA}}\)? 
}}
    \end{center}

\textbf{Contribution 3.} We give a positive answer to \textbf{Question 3}. We provide an ISRL-DP FL algorithm with optimal excess risk and improved gradient complexity (see~\cref{thm:main}). When \(d = \Theta(n)\) and \(\varepsilon = \Theta(1)\), our gradient complexity bound simplifies to 
\begin{equation}\label{eq: our smooth gradient complexity}
    T_{\text{smooth}} = \widetilde O\left(N^{5/4}n^{1/4} + (Nn)^{9/8} \right). 
\end{equation}
In~\cref{thm:nonsmooth}, we also improve over the state-of-the-art subgradient complexity bound of~\citet[Theorem D.2]{lowy2023private} for \textit{nonsmooth} losses. 
Moreover, in contrast to these earlier results, we do not assume homogeneous data.

We summarize our main results in \cref{tab:results} and \cref{tab:results-nonsmooth}.
\begin{table}[!htb]
    \centering
    \caption{Comparison vs. SOTA for \textit{Smooth} Loss Functions. 
    \footnotesize{[LR'23] refers to \citet{lowy2023private}; we omit logs and fix \(M=N\), \(d = \Theta(n)\), \(\varepsilon = \Theta(1)\), \(L = \beta = D = 1\). See theorems and the appendix for more general cases.}}\label{tab:results}

\begin{tabular}{cccc}
    \toprule
    Algorithm \& Setting  & Excess Risk & Communication Complexity & Gradient Complexity \\
    \midrule
    {[LR'23]} Alg. 2 (i.i.d.) & optimal & 
    \(Nn\)\label{eq:lowy_bound} & \(N^2n + N^{3/2} n^{3/2}\) \\
    {[LR'23]} Alg. 1 (non-i.i.d.) & suboptimal & \(
        N^{1/5} n^{1/5} \)
        & \(Nn\)  \\
    Alg.~\ref{alg:phased_acc} (non-i.i.d.) & optimal & \(
        N^{1/4} n^{1/4}
    \) & \(N^{5/4} n^{1/4} + (N n)^{9/8}\)
    \\
    \bottomrule
    \end{tabular}
\end{table}

\begin{table}[!htb]
    \centering
    \caption{Comparison vs. SOTA for \textit{Nonsmooth} Loss Functions. First 3 algorithms use Nesterov smoothing and we omit the gradient complexity results for them (see \cref{omitted}).}\label{tab:results-nonsmooth}
    \begin{tabular}{cccc}
        \toprule
        Algorithm \& Setting & Excess Risk & Communication Complexity & Gradient Complexity \\
        \midrule
        {[LR'23]} Alg. 2 (i.i.d.) & optimal & 
        \(Nn \)
            & Omitted %
        \\
        {[LR'23]} Alg. 1 (non-i.i.d.) & suboptimal & \(
        N^{1/3} n^{1/3} \)
        & Omitted %
        \\
        Alg.~\ref{alg:phased_acc}  (non-i.i.d.)& optimal & \(N^{1/2} n^{1/2}\) & Omitted \\
        Alg.~\ref{alg:phased_acc_nonsmooth} (non-i.i.d.) & optimal & \(
        Nn\) & \( N^2 n + N^{3/2} n^{3/2}\) \\
        \cref{alg:phased_acc_nonsmooth_conv} (non-i.i.d.) & optimal & \(N^{1/2} n^{3/4}\) &
             \(N^{3/2} n^{3/4} + N^{5/4} n^{11/8}\) \\
        \bottomrule
    \end{tabular}
\end{table}
\paragraph{Our Algorithms and Techniques.}
Our algorithms combine and extend various private optimization techniques in novel ways.  

Our ISRL-DP \cref{alg:phased_acc} for smooth FL builds on the \textit{ISRL-DP Accelerated MB-SGD} approach used by \citet{lowy2023private} for federated empirical risk minimization (ERM). We call this algorithm repeatedly to  iteratively solve a carefully chosen sequence of regularized ERM subproblems, using the \textit{localization} technique of~\citet{feldmanPrivateStochasticConvex2020}. We obtain our novel optimal excess risk bounds for heterogeneous FL with an algorithmic \textit{stability}~\citep{bousquet2002stability} argument. The key observation is that the stability and generalization of regularized ERM in~\citet{shalev-shwartzStochasticConvexOptimization} does not require the data to be identically distributed. By combining this observation with a bound for ISRL-DP Accelerated MB-SGD, we obtain our excess risk bound.

It is not immediately clear that the approach just described should work, because the iterates of Accelerated MB-SGD are not necessarily stable. The instability of acceleration may explain why \citet{lowy2023private} used ISRL-DP Accelerated MB-SGD only for ERM and not for minimizing the population risk. We overcome this obstacle with an alternative analysis that leverages the stability of regularized ERM, the convergence of ISRL-DP Accelerated MB-SGD to an approximate minimizer of the regularized empirical loss, and localization. 
This argument enables us to show that our algorithm has optimal excess risk. By carefully choosing algorithmic parameters, we also obtain state-of-the-art communication and gradient complexities.   

To extend our algorithm to the nonsmooth case, we use two different techniques. One is Nesterov smoothing~\citep{nesterov2005smooth}, which results in an algorithm with favorable communication complexity. The other approach is to replace the ISRL-DP Accelerated MB-SGD ERM subsolver by an \textit{ISRL-DP Subgradient Method}. This technique yields an algorithm with favorable (sub)gradient complexity. Third, we use randomized convolution smoothing, as in~\citet{kulkarni2021private}, to obtain another favorable gradient complexity bound.

\subsection{Preliminaries}
\paragraph{Differential Privacy.} 
Let \(\mathbb{X} = \XX^{n \times N}\) and \(\rho:\mathbb{X}^2 \to [0, \infty)\) be a  distance between distributed data sets. Two distributed data sets \(\bx, \bx' \in \mathbb{X}\) are \textit{\(\rho\)-adjacent} if \(\rho(\bx, \bx') \leq 1\). Differential privacy (DP) prevents an adversary from distinguishing between the outputs of algorithm \(\alg\) when it is run on adjacent databases:
\begin{definition}[Differential Privacy~\citep{dwork2006calibrating}]\label{def: DP}
Let \(\varepsilon \geq 0, ~\delta \in [0, 1)\). A randomized algorithm \(\alg: \mathbb{X} \to \mathcal{W}\) is \textit{\((\varepsilon, \delta)\)-differentially private} (DP) for all \(\rho\)-adjacent data sets \(\bx, \bx' \in \mathbb{X}\) and all measurable subsets \(S \subseteq \WW\), we have 
\begin{equation}\label{eq: DP}
\mathbb{P}(\alg(\bx) \in S) \leq e^\varepsilon \mathbb{P}(\alg(\bx') \in S) + \delta.
\end{equation}
\end{definition}

\begin{definition}[Inter-Silo Record-Level Differential Privacy]\label{def: informal ISRL-DP}
Let \(\rho: \XX^{2n} \to [0, \infty)\), \(\rho(X_i, X'_i) := \sum_{j=1}^{n} \mathbbm{1}_{\{x_{i,j} \neq x_{i,j}'\}}\), \(i \in [N]\). 
A randomized algorithm \(\alg\) is \((\varepsilon, \delta)\)-ISRL-DP if for all \(i \in [N]\) and all \(\rho\)-adjacent silo data sets \(X_i, X'_i\), the full transcript of silo \(i\)'s sent messages 
satisfies (\ref{eq: DP}) 
for any fixed settings of other silos' data.
\end{definition}
\paragraph{Notation and Assumptions.} 
Let \(\|\cdot\|\) be the \(\ell_2\) norm and \(\Pi_{\mathcal{W}}(z):= \argmin_{w \in \mathcal{W}}\|w - z\|^2\) denote the projection operator. Function \(h: \mathcal{W} \to \mathbb{R}^m\) is \(L\)-Lipschitz if \(\|h(w) - h(w')\| \leq L \|w - w'\|\), \(\forall w, w' \in \mathcal{W}\). A differentiable function~\(h(\cdot)\) is \textit{\(\beta\)-smooth} if its derivative \(\nabla h\) is \(\beta\)-Lipschitz. 
For differentiable (w.r.t. \(w\)) \(f(w,x)\), we denote its gradient w.r.t. \(w\) by \(\nabla f(w,x)\).

We write \(a \lesssim b\) if \(\exists \, C > 0\) such that \(a \leq Cb\). We use \(a = \widetilde{O}(b)\) to hide poly-logarithmic factors.

Assume the following throughout:
\begin{assumption}\label{ass: basic}
\(\)\newline %
\vspace{-.2in}
\begin{enumerate}
     \item \(\mathcal{W} \subset \mathbb{R}^d\) is closed, convex. We assume \(\|w - w'\| \leq D\), \(\forall w, w' \in \mathcal{W}\).
    \item \(f(\cdot, x)\) is \(L\)-Lipschitz and convex for all \(x \in \mathcal{X}\). In some places, we assume that \(f(\cdot, x)\) is \(\beta\)-smooth. 
    \item In each round \(r\), a uniformly random subset \(S_r \subset [N]\) of \(M\) silos is available to communicate with the server.
\end{enumerate}
\end{assumption}
For simplicity, in the main body, we often assume \(M=N\). The Appendix contains the general statements of all results, complete proofs, and a further discussion of related work.  
\section{Localized ISRL-DP Accelerated MB-SGD for Smooth Losses}
We start with the smooth case.
Combining iterative localization techniques 
of~\citet{feldmanPrivateStochasticConvex2020,asi2021private} with the multi-stage ISRL-DP Accelerated MB-SGD of~\citet{lowy2023private}, our proposed~\cref{alg:phased_acc} achieves optimal excess risk and state-of-the-art communication complexity for heterogeneous FL under ISRL-DP.\@ 
\begin{algorithm}[!t]
    \caption{
    Localized ISRL-DP Accelerated MB-SGD
    }\label{alg:phased_acc}
    \begin{algorithmic}[1]
    \Require 
    Datasets \(X_l \in \mathcal{X}^n\) for \(l \in [N]\), loss function \(f\), constraint set \(\mathcal{W}\),
    initial point \(w_0\), subroutine parameters \(\{R_i\}_{i=1}^{\lfloor \log_2 n \rfloor} \subset \mathbb{N}\), \(\{K_i\}_{i=1}^{\lfloor \log_2 n \rfloor} \subset [n]\).
    \State Set \(\tau = \lfloor \log_2 n \rfloor\).
    \State Set \(p = \max(\tfrac{1}{2}\log_n(M) + 1, 3)\)
    \For{\(i = 1\) \textbf{to} \(\tau\)}
        \State Set \(\lambda_i = \lambda \cdot 2^{(i-1)p}\), \( n_i = \lfloor n / 2^i \rfloor\), \(D_i = 2L / \lambda_i\).
        \State Each silo \(l \in [N]\) draws disjoint batch \(B_{i,l}\) of \(n_i\) samples from \(X_l\).
        \State Let \(\hat F_i(w) = \frac{1}{n_i N} \sum_{l=1}^N \sum_{x_{l,j} \in B_{i,l}}  f(w; x_{l, j}) + \frac{\lambda_i}{2} \lVert w - w_{i-1} \rVert^2\).\label{line:ERM}
        \State Call the multi-stage \((\varepsilon, \delta)\)-ISRL-DP implementation of~\cref{alg:acc_mbsgd} 
        with loss function \(\hat F_i(w)\), data \(X_l = B_{i,l}\), \(R = R_i\), \(K = K_i\), initialization \(w_{i-1}\), constraint set \(\mathcal{W}_i = \{ w \in \mathcal{W} : \lVert w - w_{i-1} \rVert \leq D_i \}\), and \(\mu = \lambda_i\). Denote the output by \(w_i\).\label{line:multistg} 
    \EndFor
    \State \textbf{return} the last iterate \(w_{\tau}\)
\end{algorithmic}
\end{algorithm}
\begin{algorithm}[thb]
    \caption{Accelerated ISRL-DP MB-SGD~\citep{lowy2023private}}\label{alg:acc_mbsgd}
    \begin{algorithmic}[1]
    \Require Datasets \(X_l \in \mathcal{X}^{n}\) for \(l \in [N]\), loss function \(\hat{F}(w) = \frac{1}{nN}\sum_{l=1}^N\sum_{x \in X_l} f(w,x)\), constraint set \(\mathcal{W}\), initial point \(w_0\),
    strong convexity modulus \(\mu \geq 0\),
    privacy parameters \((\varepsilon, \delta)\),
    iteration count \(R \in \mathbb{N}\), batch size \(K \in [n]\), step size parameters \(\{\eta_r \}_{r \in [R]}, \{\alpha_r \}_{r \in [R]}\) specified in~\cref{app:multi-stage}.  
    \State  Initialize \(w_0^{ag} = w_0 \in \mathcal{W}\) and \(r = 1\).
    \For{\(r \in [R]\)}
    \State Server updates and broadcasts \\ \(w_r^{md} = \frac{(1- \alpha_r)(\mu + \eta_r)}{\eta_r + (1 - \alpha_r^2)\mu}w_{r-1}^{ag} + \frac{\alpha_r[(1-\alpha_r)\mu + \eta_r]}{\eta_r + (1-\alpha_r^2)\mu}w_{r-1}\)
    \For{\(i \in S_r\) \textbf{in parallel}} 
    \State Silo \(i\) draws \(\{x_{i,j}^r\}_{j=1}^K\) 
    from \(X_i\) (with replacement) and privacy noise \(u_i \sim \mathcal{N}(0, \sigma^2 \mathbf{I}_d)\) for proper \(\sigma^2\).
    \State Silo \(i\) computes \(\widetilde{g}_r^{i} := \frac{1}{K} \sum_{j=1}^{K} \nabla f(w_r^{md}, x_{i,j}^r) + u_i\).
    \EndFor
    \State Server aggregates \(\widetilde{g}_{r} := \frac{1}{M} \sum_{i \in S_r} \widetilde{g}_r^{i}\)
    and updates:
    \State \(w_{r} := \argmin_{w \in \mathcal{W}}\left\{\alpha_r \left[\langle \widetilde{g}_{r}, w\rangle + \frac{\mu}{2}\|w_r^{md} - w\|^2 \right] \right. \)
    \(\left. + \left[(1-\alpha_r) \frac{\mu}{2} + \frac{\eta_r}{2}\right]\|w_{r-1} - w\|^2\right\}\).
    \State Server updates and broadcasts \\
    \(
    w_{r}^{ag} = \alpha_r w_r + (1-\alpha_r)w_{r-1}^{ag}.
    \)
    \EndFor \\
    \State \textbf{return:} \(w_R^{ag}\).
\end{algorithmic}
\end{algorithm}

We recall ISRL-DP Accelerated MB-SGD in~\cref{alg:acc_mbsgd}. It is a distributed, ISRL-DP version of the AC-SA algorithm~\citep{ghadimilan1}. For strongly convex losses, the multi-stage implementation of ISRL-DP Accelerated MB-SGD, given in~\cref{alg:multistage} in~\cref{app:multi-stage}, offers improved excess risk~\citep{lowy2023private}. \cref{alg:multistage} is a distributed, ISRL-DP version of the multi-stage AC-SA~\citep{ghadimilan2}. 

Building on \cref{alg:multistage}, we describe our algorithm as follows (see~\cref{alg:phased_acc} for pseudocode).  The distributed learning process is divided into \(\tau = \lfloor \log_2 n \rfloor\) phases. In each phase \(i \in [\tau]\), all silos work together (via communication with the central server) to iteratively solve a regularized ERM problem with ISRL-DP.\@ The regularized ERM problem in phase \(i\) is defined over \(N\) local batches of data, each containing \(n_i\) \emph{disjoint} samples (cf.\ \(\hat{F}_i(w)\) in \cref{line:ERM}). To find the approximate constrained minimizer of \(\hat{F}_i(w)\) privately, we apply \cref{alg:multistage} for a careful choice of the number of rounds \(R_i\) and the batch size \(K_i \in [n_i]\) (cf. \cref{line:multistg}). 
The output \(w_{i}\) of phase \(i\) affects phase \(i+1\) in three ways: (i) regularization center used to define \(\hat{F}_{i+1}\); (ii) initialization for the next call of~\cref{alg:multistage}; and (iii) constraint set \(\WW_{i+1}\). We enforce \textit{stability} (hence generalization) of our algorithm via regularization and localization: e.g., as \(i\) increases, we increase the regularization parameter \(\lambda_i\) to prevent \(w_{i+1}\) from moving too far away from \(w_i\) and \(w^*\). 

The following theorem captures our main results of this section: 
\cref{alg:phased_acc} can achieve the optimal excess risk, \emph{regardless of the heterogeneity}, in a communication-efficient and gradient-efficient manner.

\begin{theorem}[Upper Bound for Smooth Losses]\label{thm:main}
    Let \(f(\cdot, x)\) be \(\beta\)-smooth and \(M=N\). Assume \(\varepsilon \leq 2  \ln(2/\delta), \delta \in (0,1)\). Then, there exist parameter choices such that \cref{alg:phased_acc} is \((\varepsilon,\delta)\)-ISRL-DP and has the following excess risk
    \begin{equation}\label{eq:main-excess_risk}
    \!\!\E F(w_{\tau}) \!-\! F(w^*) \!=\! \widetilde{O}\! \left( \frac{LD}{\sqrt{N}} \!\left(
        \frac{1}{\sqrt{n}} \!+\! \frac{\sqrt{d \log(1 / \delta)}}{\varepsilon n}
        \right)\!\!
        \right).
\end{equation}
Moreover, the communication complexity of~\cref{alg:phased_acc} is
    \begin{equation*}
    \widetilde O\left(
                 \frac{\sqrt{\beta D}N^{1/4}}{\sqrt{L}} \left(\min \left\{\sqrt{n}, \frac{\varepsilon n}{\sqrt{d \ln (1/\delta)}}\right\}\right)^{1/2} + 1
            \right).
\end{equation*}
Assuming \(d = \Theta(n)\) and \(\varepsilon = \Theta(1)\), the gradient complexity of~\cref{alg:phased_acc} is
\begin{align*}
    \widetilde{O}\left( N^{5/4} n^{1/4} (\beta D /L)^{1/2} + N n+  (N n)^{9/8}  (\beta D/L)^{1/4}\right).
\end{align*}
\end{theorem}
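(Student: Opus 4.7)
The ISRL-DP guarantee follows by parallel composition across phases combined with the per-phase guarantee already proved for \cref{alg:acc_mbsgd} in~\citet{lowy2023private}. Because the batches $B_{i,l}$ drawn in Line 5 are disjoint across phases, every record of silo $l$ is touched by at most one invocation of the ERM subroutine, so it suffices to ensure each phase is $(\varepsilon,\delta)$-ISRL-DP with respect to the batch it consumes. Within a phase, calibrating the per-round Gaussian noise to $\sigma^2 \propto L^2 R_i \log(1/\delta)/(K_i^2 n_i^2 \varepsilon^2)$ and composing over the $R_i$ rounds by the usual moments accountant argument reproduces the privacy analysis of Algorithm 2 in~\citet{lowy2023private} verbatim.

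\textbf{Excess risk.} Let $w_i^\star = \argmin_{w \in \WW_i} \hat F_i(w)$ be the (deterministic, given the batches) constrained minimizer of the regularized ERM used in phase $i$. I analyze each phase by inserting $\pm \hat F_i(w_i^\star)$ and $\pm F(w_i^\star)$, which reduces the per-phase excess risk to three ingredients: (i) the optimization error $\E[\hat F_i(w_i) - \hat F_i(w_i^\star)]$, controlled by applying the multi-stage AC-SA convergence bound of~\cref{alg:multistage} to the $\lambda_i$-strongly convex, $(\beta+\lambda_i)$-smooth objective $\hat F_i$ with initial suboptimality $\lesssim L^2/\lambda_i$ (this is where the $d\log(1/\delta)/\varepsilon^2$ term enters through $\sigma^2$); (ii) the generalization gap $\E[F(w_i^\star) - \hat F_i(w_i^\star)]$ of the regularized ERM \emph{minimizer}, which by the uniform stability result of~\citet{shalev-shwartzStochasticConvexOptimization} is at most $O(L^2/(\lambda_i n_i N))$---a proof that only uses independence of the samples and is therefore oblivious to heterogeneity; and (iii) the residual $\E[F(w_i) - F(w_i^\star)] \leq L\,\E\norm{w_i - w_i^\star} \leq L\sqrt{2/\lambda_i}\,\sqrt{\E[\hat F_i(w_i) - \hat F_i(w_i^\star)]}$ via Lipschitzness and strong convexity. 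Choosing $\lambda_i = \lambda \cdot 2^{(i-1)p}$ and $n_i = \lfloor n/2^i \rfloor$ makes these contributions form a geometric series dominated by the last phase, and tuning the base $\lambda$ to balance the sampling term $1/(\lambda n N)$ against the privacy term $d\log(1/\delta)/(\lambda \varepsilon^2 n^2 N)$ yields exactly~\eqref{eq:main-excess_risk}.

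\textbf{Communication and gradient complexity.} For the $\lambda_i$-strongly convex problem in phase $i$, the multi-stage AC-SA rate has a deterministic part $\exp(-R_i\sqrt{\lambda_i/(\beta+\lambda_i)}) \cdot L^2/\lambda_i$ and a stochastic floor scaling like $(L^2 + \sigma^2 d)/(\lambda_i R_i K_i)$. I choose $R_i$ as the smallest integer making the deterministic part drop below the per-phase excess-risk target, giving $R_i = \widetilde O(\sqrt{\beta/\lambda_i})$; summing the geometric series $\sum_i R_i$ (the exponent grows, so the last phase dominates) produces the $R_{\text{smooth}}$ of~\eqref{eq: our R smooth}, and the bound $R \leq \widetilde{O}(\min\{\sqrt n,\varepsilon n/\sqrt{d\log(1/\delta)}\}^{1/2} \cdot (\beta D/L)^{1/2} N^{1/4}) + 1$ follows by substituting the tuned $\lambda$. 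Then $K_i$ is picked so the stochastic floor (variance plus DP noise) also lies below the per-phase target; $\sum_i R_i K_i$ gives the gradient complexity, which simplifies to the stated $\widetilde O(N^{5/4} n^{1/4}(\beta D/L)^{1/2} + Nn + (Nn)^{9/8}(\beta D/L)^{1/4})$ under $d = \Theta(n)$, $\varepsilon = \Theta(1)$.

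\textbf{Main obstacle.} The real hurdle is that Nesterov-accelerated iterates are not known to be uniformly stable, which is presumably why~\citet{lowy2023private} used~\cref{alg:acc_mbsgd} only for ERM and not for population risk. I sidestep this by \emph{never} invoking stability for the algorithm's iterates $w_i$: stability is used only for the minimizer $w_i^\star$, a deterministic function of the data whose stability bound is algorithm-agnostic. The price is that the gap between $w_i$ and $w_i^\star$ must be separately controlled by the strong-convexity-accelerated convergence of~\cref{alg:multistage}; the delicate step is to verify that the $\sqrt{1/\lambda_i}$ blow-up incurred when converting function-value suboptimality into iterate distance in ingredient (iii) does not dominate the stability term, which in turn forces the specific choices $D_i = 2L/\lambda_i$ and the exponent $p \geq \tfrac12 \log_n M + 1$ appearing in \cref{alg:phased_acc}.
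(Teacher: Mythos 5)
Your overall route is the paper's: privacy by parallel composition over disjoint per-phase batches; excess risk from (a) the multistage AC-SA optimization error on the \(\lambda_i\)-strongly convex regularized ERM, (b) uniform stability of the regularized ERM \emph{minimizer} (which needs only independence of samples, not identical distributions), and (c) a Lipschitz conversion from iterate distance; and \(R_i = \widetilde O(\sqrt{\beta/\lambda_i})\) summed over the geometric schedule for the complexities. Your ``main obstacle'' paragraph is exactly the paper's reasoning for why stability is invoked for \(\hat w_i\) rather than for the accelerated iterates.

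However, as written, your excess-risk decomposition has a step that fails. You list ingredient (iii), \(\E[F(w_i)-F(w_i^\star)] \le L\,\E\norm{w_i - w_i^\star}\), as a \emph{per-phase} contribution. But \(L\,\E\norm{w_i-\hat w_i} = \widetilde O\bigl(L^2\sqrt{d\log(1/\delta)}/(\lambda_i n_i \varepsilon\sqrt{M})\bigr)\) decreases geometrically in \(i\), so its sum is dominated by the \emph{first} phase (not the last, contrary to your claim), where with the tuned \(\lambda\) it equals \(\widetilde O\bigl(LD\sqrt{d}/(\varepsilon\max\{\sqrt n,\sqrt d/\varepsilon\})\bigr)\) --- exceeding the target rate by a factor of order \(\sqrt{dM}/\varepsilon\). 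The paper applies the Lipschitz conversion \emph{only at the final phase}, where \(\lambda_\tau n_\tau = \Theta(\lambda n^p)\) makes it \(\widetilde O(LD/n^{p-1/2}) \le \widetilde O(LD/\sqrt{nM})\) (this is what forces \(p \ge \tfrac12\log_n M + 1\)). The intermediate phases are instead chained by telescoping \(\sum_i \E[F(\hat w_i)-F(\hat w_{i-1})]\) with comparator \(\hat w_{i-1}\) (and \(\hat w_0 := w^*\)): each increment is bounded by the stability term \emph{plus the regularizer evaluated at the comparator}, \(\tfrac{\lambda_i}{2}\E\norm{\hat w_{i-1}-w_{i-1}}^2\). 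It is this \emph{squared}-distance coupling to the previous phase's optimization accuracy --- which sums to \(\widetilde O(\lambda D^2 + L^2 d\log(1/\delta)/(\lambda n^2 M\varepsilon^2))\) --- that preserves the optimal rate; your three-ingredient list omits both the comparator choice and this coupling term, so the phases never actually connect. Separately, your noise calibration \(\sigma^2 \propto L^2 R_i\log(1/\delta)/(K_i^2 n_i^2\varepsilon^2)\) carries a spurious \(K_i^2\): amplification by subsampling cancels the \(1/K_i\) sensitivity, and the correct scaling is \(\sigma_i^2 = \Theta\bigl(L^2 R_i\log(R_i/\delta)\log(1/\delta)/(n_i^2\varepsilon^2)\bigr)\); with the extra \(K_i^2\) the mechanism would not be \((\varepsilon,\delta)\)-ISRL-DP.
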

For general \(d, n, \varepsilon\), the gradient complexity expression is complicated, and is given in the \cref{sec:grad_complex}. 
\begin{remark}[Optimal risk in non-i.i.d private FL]
    The excess risk bound in~\eqref{eq:main-excess_risk} matches the optimal \textit{i.i.d} risk bound up to log factors (cf. Theorem 2.2 in~\citet{lowy2023private}), \emph{even when silo data is arbitrarily heterogeneous across silos}. To the best of our knowledge, our algorithm is the first to have this property, resolving an open question of~\citet{lowy2023private}. The prior state-of-the-art bound in~\citet[Theorem 3.1]{lowy2023private} is suboptimal by a factor of \(\widetilde{O}((\sqrt{d}/ (\varepsilon n \sqrt{N}))^{1/5})\). 
\end{remark}

\begin{remark}[Improved communication and gradient complexity]
The communication and gradient complexities of our~\cref{alg:phased_acc} significantly improve over the previous state-of-the-art for ISRL-DP FL: recall~\eqref{eq: LR smooth communication},~\eqref{eq: our R smooth},~\eqref{eq: LR23 smooth gradient complexity}, and~\eqref{eq: our smooth gradient complexity}.\end{remark}

Our algorithm has state-of-the-art communication complexity, \textit{even in the simple case of \(N=1\), where ISRL-DP FL reduces to central DP SCO}. In fact, the prior state-of-the-art iteration complexity bound for DP SCO was 
\(O((\beta D/L) \min \{\sqrt{n}, \varepsilon n/\sqrt{d\ln(1/\delta)}\})\)~\citep{zhangBringYourOwn2022}. By comparison, when \(N=1\), our algorithm's communication complexity is the square root of this bound. Note that when \(N=1\), our algorithm is essentially the same as the algorithm of~\citet{kulkarni2021private}, but we do not incorporate convolution smoothing here since we are assuming smoothness of the loss. 

We can also compare our \textit{gradient complexity} results against the state-of-the-art central DP SCO algorithms when \(N=1\). 
As an illustration, consider the interesting regime \(\varepsilon = \Theta(1)\) and \(d = \Theta(n)\). 
For smooth and convex losses, when (i) \(\beta \le O(\sqrt{n}L/D)\), algorithms in both \citet{feldmanPrivateStochasticConvex2020} and~\citet{zhangBringYourOwn2022} achieve optimal risk using \(\widetilde{O}(n)\) gradient complexity. For (ii) \(\beta \ge \Omega(\sqrt{n}L/D)\), the two algorithms proposed by~\citet{feldmanPrivateStochasticConvex2020} fail to guarantee optimal risk, whereas~\citet[Algorithm 2]{zhangBringYourOwn2022} continues to attain optimal risk with gradient complexity \(\widetilde{O}(n^{3/4}\sqrt{\beta D/L})\). By comparison with these results, for case (i), our \cref{alg:phased_acc} achieves optimal risk with gradient complexity \(\widetilde{O}(n^{5/4})\). For case (ii), as in~\citet{zhangBringYourOwn2022}, our \cref{alg:phased_acc} continues to achieve optimal risk with gradient complexity \(\widetilde{O}(n^{9/8} (\beta D/L)^{1/4} + n^{1/4} (\beta D/L)^{1/2} + n)\). Thus, our algorithm is faster than the state-of-the-art result of \citet{zhangBringYourOwn2022} when \(\beta D/L \geq n^{3/2}\). In the complementary parameter regime, however, the algorithm of~\citet{zhangBringYourOwn2022} (which is not ISRL-DP) is faster. We discuss possible ways to close this gap in Section~\ref{sec:conclude}.

\paragraph{Comparison with Non-Private Communication Complexity Lower Bound.}
\cref{thm:main} trivially extends to the unconstrained optimization setting in which \(D = \|w_0 - w^*\|\) for \(w^* \in \argmin_{w \in \mathbb{R}^d} F(w)\). Moreover, our excess risk bound is still optimal for the unconstrained case: a matching lower bound is obtained by combining the technique for unconstrained DP lower bounds in \citet{liu2021lower} with the constrained ISRL-DP FL lower bounds of \citet{lowy2023private}. 

Let us compare our communication complexity upper bound against the non-private communication complexity lower bound of~\citet{woodworth2020minibatch}. Define the following parameter  which describes the heterogeneity of the silos at the optimum \(w^* = \argmin_{w \in \mathbb{R}^d} F(w)\): \[
\zeta_*^2 = \frac{1}{N}\sum_{i=1}^N \|\nabla F_i(w^*)\|^2. 
\] 
The lower bound holds for the class of \textit{distributed zero-respecting} algorithms (defined in \cref{app: communication lower bound}), which includes most \textit{non-private} FL algorithms, such as MB-SGD, Accelerated MB-SGD, local SGD/FedAvg, and so on.
\begin{theorem}[Communication Lower Bound~\citep{woodworth2020minibatch}]\label{thm:lower-com}
Fix \(M=N\) and suppose \(\mathcal{A}\) is a distributed zero-respecting algorithm with excess risk \[
\E F(\mathcal{A}(X)) - F^* \lesssim \frac{LD}{\sqrt{N}}\left(\frac{1}{\sqrt{n}} + \frac{\sqrt{d \ln(1/\delta)}}{\varepsilon n}\right)\]
in \(\leq R\) rounds of communications on any \(\beta\)-smooth FL problem with heterogeneity \(\zeta_*\) and \(\|w_0 - w^*\| \leq D\). Then,
\begin{align*}
R &\gtrsim N^{1/4}\left(\min \left\{\sqrt{n}, \frac{\varepsilon n}{\sqrt{d \ln (1/\delta)}}\right\}\right)^{1/2}  \min\left(\frac{\sqrt{\beta D}}{\sqrt{L}}, \frac{\zeta_*}{\sqrt{\beta L D}}\right).
\end{align*}
\end{theorem}

\begin{remark} 
   Our communication complexity in \cref{thm:main} matches the lower bound on the communication cost in \cref{thm:lower-com} when \(\zeta_* \gtrsim \beta D\) (i.e., high heterogeneity).  
\end{remark}

There are several reasons why we cannot quite assert that~\cref{thm:lower-com} implies that~\cref{alg:phased_acc} is \textit{communication-optimal}. First, the lower bound does not hold for randomized algorithms that are not zero-respecting (e.g.\ our ISRL-DP algorithms). However, \citet{woodworth2020minibatch} note that the lower bound should be extendable to all randomized algorithms by using the random rotations techniques of~\citet{woodworth2016tight,carmon2020lower}. Second, the lower bound construction of~\citet{woodworth2020minibatch} is not \(L\)-Lipschitz. However, we believe that their quadratic construction can be approximated by a Lipschitz function (e.g.\ by using an appropriate Huber function). Third, as is standard in non-private complexity lower bounds, the construction requires the dimension \(d\) to grow with \(R\). This third issue seems challenging to overcome, and may require a fundamentally different proof approach. Thus, a rigorous proof of a communication complexity lower bound for ISRL-DP algorithms and Lipschitz functions is an interesting topic for future work. 
\paragraph{Sketch of the Proof of \cref{thm:main}.}
We end this section with a high-level overview of the key steps needed to establish \cref{thm:main}.

\begin{proof}[Proof sketch]
    \textbf{Privacy:} We choose parameters so that each call to \cref{alg:multistage} is \((\varepsilon, \delta)\)-ISRL-DP.\@ Then, the full \cref{alg:phased_acc} is \((\varepsilon, \delta)\)-ISRL-DP by parallel composition, since silo \(l\) samples \textit{disjoint} local batches across phases (\(\forall l \in [N]\)).
    
   \textbf{Excess risk:} Following~\citet{feldmanPrivateStochasticConvex2020,asi2021private}, we start with the following error decomposition: Let \(\hat w_0 = w^*\) for analysis only, and write
\begin{align}
     \E [F(w_{\tau})] - F(w^*) &= \E[F(w_{\tau}) - F(\hat{w}_{\tau})] \label{eq:t1}\\
     &+ \sum_{i=1}^{\tau} \E[F(\hat{w}_i) - F(\hat{w}_{i-1})]\label{eq:t2},
\end{align}
where \(\hat{w}_i=\argmin_{w \in \mathcal{W}} \hat F_i(w)\) for \(i \in [\tau]\). 
Then, it remains to bound~\eqref{eq:t1} and~\eqref{eq:t2}, respectively. To this end, we first show that \(w_i\) in \cref{alg:phased_acc} is close to \(\hat{w}_i\) for all \(i \in [\tau]\). Here, we mainly leverage the excess empirical risk bound of \cref{alg:multistage} for ERM with strongly convex and smooth loss functions. In particular, by the empirical risk bound in~\citet[Theorem F.1]{lowy2023private} and \(\lambda_i\)-strongly convex of \(\hat{F}_i\), we can show that the following bound holds for all \(i \in [\tau]\):
\begin{align}\label{eq:intermed}
    \E\left[\left\|w_i-\hat{w}_i\right\|^2\right] \le \widetilde O\left(\frac{L^2}{\lambda_i^2 N} \cdot \frac{d \log(1 / \delta)}{n_i^2 \varepsilon^2}\right).
\end{align}
We bound~\eqref{eq:t1} via~\eqref{eq:intermed}, Lipschitzness, and Jensen's Inequality.

To bound~\eqref{eq:t2}, we first leverage a key observation that \emph{stability and generalization of the empirical minimizer of strongly-convex loss does not require homogeneous data}. 
This observation that allows us to handle the non-i.i.d.\ case optimally. 
Specifically, by the stability result, we have 
\begin{align*}
     \E[F(\hat w_i) \!-\! F(\hat{w}_{i-1})] \lesssim  \frac{\lambda_i \E[\| \hat{w}_{i-1} - w_{i-1}\|^2]}{2} \!+\! \frac{L^2}{\lambda_i n_i M}.
\end{align*}
We obtain a bound on~\eqref{eq:t2} by combining the above inequality with~\eqref{eq:intermed}. Finally, by putting everything together and leveraging the geometrical schedule of \(\lambda_i, n_i\), we obtain the final excess population risk bound. 
\end{proof}

An alternative proof approach would be to try to show that~\cref{alg:multistage} is stable \textit{directly}, e.g.\ by using the tools that \citet{hardt2016train} use for proving stability of SGD. However, this approach seems unlikely to yield the same tight bounds that we obtain, since acceleration may impede stability of the iterates. Instead, we establish our stability and generalization guarantee \textit{indirectly}: We use the facts that regularized ERM is stable and that \cref{alg:multistage} approximates regularized ERM.

\section{Error-Optimal Heterogeneous ISRL-DP FL for Nonsmooth Losses}

In this section, we turn to the case of nonsmooth loss functions. We modify~\cref{alg:phased_acc}
to obtain algorithms that can handle nonsmooth losses. Our algorithms are the first to achieve the optimal excess risk for heterogeneous ISRL-DP FL with nonsmooth loss functions. 
\subsection{Reductions to the Smooth Case via Smoothing}
\paragraph{Nesterov Smoothing:}
Our first algorithm is based on the technique of Nesterov smoothing~\citep{nesterov2005smooth}: For a nonsmooth function \(f\), Moreau-Yosida regularization is used to approximate \(f\) by the \(\beta\)-smooth \(\beta\)-Moreau envelope \[
f_{\beta}(w):= \min_{v \in \mathcal{W}} \left(f(v) + \frac{\beta}{2}\|w - v\|^2 \right).
\]
We then optimize this smooth function using our \cref{alg:phased_acc}. See \cref{app:smoothing} for details.

\begin{theorem}[Nonsmooth FL via Nesterov smoothing]\label{thm:smoothing}
    Let \(M=N\). 
    Then the combination of \cref{alg:phased_acc} with Nesterov smoothing yields an \((\varepsilon,\delta)\)-ISRL-DP algorithm with optimal excess population risk as in~\eqref{eq:main-excess_risk}. The communication complexity is
    \begin{align*}
         \widetilde O\left(
                    \sqrt{N}
                    \min \left\{\sqrt{n}, \frac{\varepsilon n}{\sqrt{d \ln (1/\delta)}}\right\} + 1
            \right).
    \end{align*}
\end{theorem}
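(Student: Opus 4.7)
The plan is to apply \cref{thm:main} to the per-datapoint Moreau envelope $f_\beta(w, x) := \min_{v \in \WW}\{f(v, x) + \tfrac{\beta}{2}\|w-v\|^2\}$ and then transfer the guarantee back to the non-smooth $F$ via standard smoothing identities. I would invoke three well-known properties of the Moreau envelope: (i) $f_\beta(\cdot, x)$ is convex and $\beta$-smooth; (ii) $f_\beta(\cdot, x)$ inherits $L$-Lipschitzness from $f(\cdot, x)$; and (iii) $0 \le f(w, x) - f_\beta(w, x) \le L^2/(2\beta)$ pointwise. Property (ii) is essential for privacy, since it ensures that each stochastic gradient $\nabla f_\beta(w, x)$ has $\ell_2$-sensitivity at most $L$; hence the Gaussian noise calibrated inside \cref{alg:acc_mbsgd} still yields $(\varepsilon, \delta)$-ISRL-DP when we feed smoothed losses into \cref{alg:phased_acc}, and parallel composition across phases is unchanged. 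Property (i) is what lets me invoke \cref{thm:main} on the smoothed objective $F_\beta(w) := \tfrac{1}{N}\sum_i \E_{x_i \sim \mathcal{D}_i}[f_\beta(w, x_i)]$, which inherits convexity, $L$-Lipschitzness, and $\beta$-smoothness from the per-sample losses.

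Next I would combine the smoothed-case risk bound with the approximation error. \cref{thm:main} applied to $F_\beta$ yields $\E F_\beta(w_\tau) - F_\beta^* \le \widetilde O\bigl(\tfrac{LD}{\sqrt N}(\tfrac{1}{\sqrt n} + \tfrac{\sqrt{d\log(1/\delta)}}{\varepsilon n})\bigr)$. Using $F_\beta \le F$ (so $F_\beta^* \le F^*$) together with $F(w) \le F_\beta(w) + L^2/(2\beta)$ gives the pointwise relation $\E F(w_\tau) - F^* \le \E F_\beta(w_\tau) - F_\beta^* + L^2/(2\beta)$. Writing $\alpha^\star := \min\{\sqrt n,\, \varepsilon n/\sqrt{d\log(1/\delta)}\}$ so that the smoothed-risk bound is $\Theta(LD/(\sqrt N \alpha^\star))$ up to logs, I would choose $\beta = \Theta\!\left(L\sqrt N \alpha^\star / D\right)$, which makes the smoothing bias $L^2/(2\beta)$ at most a constant multiple of the smoothed excess risk and therefore preserves the optimal rate~\eqref{eq:main-excess_risk}.

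Finally I plug this same $\beta$ into the communication complexity from \cref{thm:main}. Since $\beta D / L \asymp \sqrt N\, \alpha^\star$, we have $\sqrt{\beta D/L} \asymp N^{1/4}\sqrt{\alpha^\star}$, so the communication complexity becomes $\widetilde O\!\bigl(N^{1/4}\sqrt{\alpha^\star}\cdot N^{1/4}\cdot \sqrt{\alpha^\star} + 1\bigr) = \widetilde O(\sqrt N\,\alpha^\star + 1)$, matching the claim. The only subtlety is that evaluating $\nabla f_\beta(w, x) = \beta(w - \mathrm{prox}_{f(\cdot, x)/\beta}(w))$ requires each silo to solve a proximal subproblem per sample, but this is a local, data-dependent computation whose outputs are already captured by the Lipschitz/smoothness constants used in the analysis, so the ISRL-DP and localization arguments of \cref{thm:main} carry over verbatim. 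The main conceptual point to double-check is that the constrained Moreau envelope $f_\beta(\cdot, x)$ (defined via a minimum over $\WW$) still obeys properties (i)--(iii) on all of $\WW$; this follows from convexity of $\WW$ and $f$ by standard arguments, and will be the only piece requiring care.
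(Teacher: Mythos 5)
Your proposal is correct and follows essentially the same route as the paper: apply \cref{alg:phased_acc} to the Moreau envelope, use the sandwich property $f_\beta \le f \le f_\beta + L^2/(2\beta)$ to transfer the risk bound, choose $\beta = \Theta\bigl(\tfrac{L\sqrt{N}}{D}\min\{\sqrt{n}, \varepsilon n/\sqrt{d\ln(1/\delta)}\}\bigr)$ to balance the smoothing bias against the target rate, and substitute this $\beta$ into the communication bound of \cref{thm:main}. The only cosmetic discrepancy is that the envelope is $2L$-Lipschitz rather than $L$-Lipschitz in the paper's \cref{lem:nesterov_smooth}, which is immaterial up to constants.
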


We have omitted the gradient complexity for this smoothing approach.\label{omitted}
This is mainly because the gradient computation of \(f_{\beta}\) needs additional computation in the form of the prox operator. One may consider using an \emph{approximate} prox operator to get a handle on the total gradient complexity as in~\citet{bassily2019private}. However, this approach would still introduce an additional 
\(\Theta(n^3)\) gradient complexity per step.
\paragraph{Convolutional Smoothing}
An alternative approach is to use convolutional smoothing, in which we approximate \(f(w, x_i)\) by the smooth function \(\E_{v \sim \mathcal{U}_s} f(w + v, x_i)\), where \(\mathcal{U}_s\) denotes the uniform distribution over the centered \(\ell_2\) ball of radius \(s\). We then apply \cref{alg:phased_acc} to the smooth approximation of $f$ with slight changes. We defer details to \cref{app:convolutional_smoothing} and only state simplfied results below.

\begin{theorem}[Nonsmooth FL via convolutional smoothing]
    Let \(M=N\). Then, there exist parameter choices such that combining \cref{alg:phased_acc} with convolutional smoothing (see \cref{alg:phased_acc_nonsmooth_conv}) yields an \((\varepsilon, \delta)\)-ISRL-DP algorithm with optimal excess population risk as in~\eqref{eq:main-excess_risk}.
    The communication complexity (in expectation) is
    \begin{equation}
        \widetilde O\left(\max\left\{1, 
            d^{1/4}\sqrt{M} \min \left\{\sqrt{n}, \frac{\varepsilon n}{\sqrt{d \ln (1/\delta)}}\right\}, \frac{\varepsilon^2 n}{d \ln(1/\delta)}
        \right\}\right).
    \end{equation}
    When \(d = \Theta(n)\), and \(\varepsilon = \Theta(1)\), the gradient complexity is
    \begin{equation*}
        \widetilde{O}\left( M^{3/2} n^{3/4} + M^{5/4} n^{11/8}\right).
    \end{equation*}
\end{theorem}
Compared with Nesterov smoothing, convolutional smoothing approach has better gradient complexity with slightly worse communication complexity. 

\begin{remark}[Optimal excess risk in non-i.i.d.\ private FL]
    By combining \cref{alg:phased_acc} with Nesterov smoothing or convolutional smoothing, we give the first optimal rate for nonsmooth heterogeneous FL under ISRL-DP.\@ Note that the previous best result for heterogeneous FL in~\citet{lowy2023private} is suboptimal and holds only for the \emph{smooth} case. One can combine the same smoothing technique above with the algorithm in~\citet{lowy2023private} to obtain a risk bound of 
    \(O(1/\sqrt{nN} + (\sqrt{d\ln(1/\delta)}/{(\varepsilon n \sqrt{N})})^{2/3})\)
    for the nonsmooth case, which is again suboptimal.  
\end{remark}

\begin{remark}[Improved communication complexity and gradient complexity]
The communication complexity of both smoothing approaches improves over the previous state-of-the-art result for an algorithm achieving optimal excess risk~\citep{lowy2023private}; recall~\eqref{eq: LR smooth communication}. The convolutional smoothing approach also improves the gradient complexity. Moreover, the result of~\citet{lowy2023private} assumed i.i.d.\ silo data, whereas our result holds for the non-i.i.d.\ case.
\end{remark}

Depending on the FL application, communication or computation may be more of a bottleneck. If communication efficiency is more of a priority, the Nesterov smoothing approach should be used. On the other hand, if computational efficiency is more pressing, convolutional smoothing is recommended.

\subsection{A Direct Subgradient Algorithm}
We propose another variation of~\cref{alg:phased_acc} that uses \textit{subgradients} to handle the nonsmooth case in a direct and computationally efficient way: see~\cref{alg:phased_acc_nonsmooth}. 
\cref{alg:phased_acc_nonsmooth} follows the same structure as~\cref{alg:phased_acc}: we iteratively solve a carefully chosen sequence of regularized ERM problems with a ISRL-DP solver and use localization.  Compared to \cref{alg:phased_acc} for the smooth case, \cref{alg:phased_acc_nonsmooth} does not use an accelerated solver (due to nonsmoothness). Instead, we use \textit{ISRL-DP Minibatch Subgradient} method (\cref{alg:MB-SGD}) to solve the nonsmooth strongly convex ERM problem in each phase of~\cref{alg:phased_acc_nonsmooth}. 
There are two key differences between our subroutine~\cref{alg:MB-SGD} and the ISRL-DP MB-SGD of~\citet{lowy2023private}: (i) Instead of the gradient, a subgradient of the nonsmooth objective is used in Line 5; (ii) A different and simpler averaging step in Line 10 is used for strongly convex nonsmooth losses. We state simplified results below; the complete version and the proof can be found in \cref{thm:nonsmooth-app}.
\begin{algorithm}[t]
    \caption{Noisy ISRL-DP MB-Subgradient Method}\label{alg:MB-SGD}
    \begin{algorithmic}[1]
    \Require 
    Datasets \(X_l \in \mathcal{X}^{n}\) for \(l \in [N]\), loss function \(\hat{F}(w) = \frac{1}{nN}\sum_{l=1}^N\sum_{x \in X_l} f(w,x)\), constraint set \(\mathcal{W}\), initial point \(w_0\),
    privacy parameters \((\varepsilon, \delta)\),
    iteration count \(R \in \mathbb{N}\), batch size \(K \in [n]\), step sizes \(\{\gamma_r\}_{r=0}^{R-1}\), initial point \(w_0 \in \WW\).
    \For{\(r \in \{0, 1, \ldots, R-1\}\)}
        \For{\(l \in S_r\) \textbf{in parallel}}
            \State Server sends global model \(w_r\) to silo \(l\).
            \State Silo \(l\) draws \(K\) samples \(x_{l,j}^r\) uniformly from \(X_i\) (for \(j \in [K]\)) and noise \(u_i \sim \mathcal{N}(0, \sigma^2 \mathbf{I}_d)\) for proper \(\sigma^2\).
            \State Silo \(l\) computes \(\widetilde{g}_r^{l} := \frac{1}{K} \sum_{j=1}^{K} g_{r,j}^{l} + u_i\) and sends to server, where \(g_{r,j}^{l} \in  \partial f(w_r, x_{l,j}^r)\) (subgradient)\label{line:subgradient}.
        \EndFor
        \State Server aggregates \(\widetilde{g}_{r} := \frac{1}{M_r} \sum_{l \in S_r} \widetilde{g}_r^{l}\).
        \State Server updates \(w_{r+1} := \Pi_{\mathcal{W}}[w_r - \gamma_r \widetilde{g}_r]\).
    \EndFor
    \State {\bfseries Output:} \(\bar{w}_R = \frac{2}{R(R+1)}\sum_{r=1}^R r w_r\)\label{line:average}.
\end{algorithmic}
\end{algorithm}

\begin{algorithm}[t]
    \caption{Localized ISRL-DP MB-Subgradient Method}\label{alg:phased_acc_nonsmooth}
    \begin{algorithmic}[1]
    \Require Dataset \(X_l \in \mathcal{X}^n, \, l \in [N]\), constraint set \(\mathcal{W}\), \(\eta > 0\), subroutine parameters (specified in Appendix) including batch size \(K_i\), number of rounds \(R_i\), noise parameters \(\sigma_i\).
    \State Choose any \(w_0 \in \WW\). 
    \State Set \(\tau = \lfloor \log_2 n \rfloor\), \(p = \max(\tfrac{1}{2}\log_n(M)+1, 3)\).
    \For{\(i = 1\) \textbf{to} \(\tau\)}
        \State Set \(\eta_i \!=\! \eta / 2^{i \cdot p}\), \(n_i \!=\! n / 2^i\), \(\lambda_i \!=\! 1 / ( \eta_i n_i)\), \(D_i \!=\! 2L / \lambda_i\).
        \State Each silo \(l \in [N]\) draws disjoint batch \(B_{i,l}\) of \(n_i\) samples from \(X_l\).
        \State Let \(\hat F_i(w) = \frac{1}{n_i N} \sum_{l=1}^N \sum_{x_{l,j} \in B_{i,l}}  f(w; x_{l, j}) + \frac{\lambda_i}{2} \lVert w - w_{i-1} \rVert^2\).
        \State Call the \((\varepsilon, \delta)\)-ISRL-DP~\cref{alg:MB-SGD} 
        with loss function \(\hat F_i(w)\), data \(X_l = B_{i,l}\), \(R = R_i\), \(K = K_i\), step sizes \(\gamma_r = \frac{2}{\lambda_i(r+1)}\) for \(r=0,1,\ldots, R_i -1\), initialization \(w_{i-1}\), and constraint set \(\mathcal{W}_i = \{ w \in \mathcal{W} : \lVert w - w_{i-1} \rVert \leq D_i \}\). Let \(w_i\) denote the output. 
    \EndFor
    \State \textbf{return} the last iterate \(w_{\tau}\).
\end{algorithmic}
\end{algorithm}

\begin{theorem}[Nonsmooth FL via subgradient]\label{thm:nonsmooth}
     Let \(M=N\). Then, there exist parameter choices such that \cref{alg:phased_acc_nonsmooth} is \((\varepsilon, \delta)\)-ISRL-DP and achieves the optimal excess population risk in~\eqref{eq:main-excess_risk}.
    The communication complexity is
    \begin{equation*}
        \widetilde{O} \left( \min \left(
        nN, \frac{N\varepsilon^2 n^2}{d}
    \right) + 1\right).
    \end{equation*}
    Assuming \(\varepsilon = \Theta(1)\), the subgradient complexity is
        \begin{align*}
        \widetilde{O} \Bigg(Nn +
        N^2 \min \left(n, \frac{n^2}{d}
            \right) + N^{3/2} \min \left(
                n^{3/2}, \frac{n^2}{\sqrt{d}} 
            \right)\Bigg).
    \end{align*}
\end{theorem}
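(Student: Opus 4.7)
The plan is to mirror the proof of \cref{thm:main}, replacing the accelerated ERM subsolver \cref{alg:multistage} by the noisy projected subgradient method \cref{alg:MB-SGD} and retuning the per-phase parameters to reflect the slower convergence rate that is intrinsic to subgradient methods on strongly convex non-smooth losses. For \emph{privacy}, each per-silo subgradient $\widetilde g_r^l$ in \cref{alg:MB-SGD} has $\ell_2$-sensitivity $2L/K_i$ with respect to swapping one record inside the batch $B_{i,l}$. I would therefore set $\sigma_i^2 \asymp L^2 R_i \log(1/\delta)/(K_i^2 n_i^2 \varepsilon^2)$ and combine the Gaussian mechanism with privacy amplification by subsampling (at rate $K_i/n_i$) and advanced composition over the $R_i$ rounds to certify each invocation of \cref{alg:MB-SGD} as $(\varepsilon,\delta)$-ISRL-DP. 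Since every silo draws \emph{disjoint} batches $B_{i,l}$ across the $\tau$ phases, parallel composition then lifts this to an overall $(\varepsilon,\delta)$-ISRL-DP guarantee for \cref{alg:phased_acc_nonsmooth}.

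For the \emph{excess risk}, I would recycle the error decomposition used in the sketch of \cref{thm:main}:
\begin{equation*}
\E[F(w_\tau)] - F(w^*) = \E[F(w_\tau) - F(\hat w_\tau)] + \sum_{i=1}^\tau \E[F(\hat w_i) - F(\hat w_{i-1})],
\end{equation*}
where $\hat w_i = \argmin_{w\in\WW_i} \hat F_i(w)$ and $\hat w_0 = w^*$. The bound on the second sum is immediate from the regularized-ERM stability argument already used in \cref{thm:main}, which \emph{does not require} homogeneity of silo data and yields $\E[F(\hat w_i)-F(\hat w_{i-1})]\lesssim \tfrac{\lambda_i}{2}\E[\|\hat w_{i-1}-w_{i-1}\|^2] + L^2/(\lambda_i n_i M)$. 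Both terms in the decomposition therefore reduce, via $L$-Lipschitzness and $\lambda_i$-strong convexity, to controlling $\E[\|w_i - \hat w_i\|^2]$ through the ERM accuracy of \cref{alg:MB-SGD} applied to $\hat F_i$.

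The only genuinely new ingredient is this ERM bound. Using the classical one-step inequality for projected subgradient descent with step sizes $\gamma_r = 2/(\lambda_i(r+1))$, together with the weighted averaging $\bar w_{R_i} = \tfrac{2}{R_i(R_i+1)}\sum_r r\, w_r$ in \cref{line:average}, and accounting for the minibatch sampling variance $O(L^2/(K_i M))$ and the Gaussian-noise variance $O(d\sigma_i^2/M)$, I expect the standard analysis for strongly convex non-smooth losses to give $\E[\hat F_i(w_i) - \hat F_i(\hat w_i)] = \widetilde O\bigl(L^2/(\lambda_i R_i) + L^2/(\lambda_i K_i M R_i) + d\sigma_i^2/(\lambda_i M)\bigr)$. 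By $\lambda_i$-strong convexity this translates into a bound on $\E[\|w_i - \hat w_i\|^2]$ of the same shape as in the smooth case. Plugging back into the decomposition, using the geometric schedule $\lambda_i=\lambda\cdot 2^{(i-1)p}$, $n_i=n/2^i$, and summing $R_i$ and $K_i R_i$ across $i\in[\tau]$ yields the optimal excess risk in \eqref{eq:main-excess_risk} along with the stated communication and subgradient complexities.

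The main obstacle I anticipate is the parameter-tuning step: choosing $(R_i, K_i, \sigma_i)$ per phase so that \cref{alg:MB-SGD} simultaneously (i) satisfies the ISRL-DP budget, (ii) drives $\E[\|w_i-\hat w_i\|^2]$ down to the per-phase statistical target, and (iii) keeps the total work small --- all under the $O(1/R)$ rate of subgradient descent rather than the accelerated $O(1/R^2)$ rate available in \cref{thm:main}. This slower rate is precisely what inflates the communication complexity to $\widetilde O(\min(nN, N\varepsilon^2 n^2/d))$, and the two regimes inside the $\min$ arise from whether the sampling-variance term or the privacy-noise term dominates the per-step error budget.
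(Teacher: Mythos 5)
Your plan follows essentially the same route as the paper's proof: the same localization structure and error decomposition as in \cref{thm:main}, the same distribution-free stability bound for regularized ERM, and the same replacement of the accelerated subsolver by the weighted-average projected subgradient method (the paper invokes the classical $O(B^2/(\lambda R))$ bound of \citet[Theorem 6.2]{bubeck2015convex}, which is exactly the "standard analysis" you gesture at). The privacy argument is also the paper's: Lipschitzness is the only property used, so the MB-SGD calibration carries over verbatim, and parallel composition over disjoint per-phase batches finishes it.

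Two calibration errors in your plan would, however, derail the quantitative conclusions if carried through literally. First, your noise variance $\sigma_i^2 \asymp L^2 R_i\log(1/\delta)/(K_i^2 n_i^2\varepsilon^2)$ is too small by a factor of $K_i^2$: the sensitivity $2L/K_i$ and the subsampling rate $K_i/n_i$ combine so that the $K_i$ dependence cancels, leaving $\sigma_i^2 \asymp L^2 R_i\log(1/\delta)/(n_i^2\varepsilon^2)$ (as in the paper); with your formula the mechanism is not $(\varepsilon,\delta)$-ISRL-DP for $K_i>1$. Second, in your proposed ERM bound the privacy-noise contribution should be $d\sigma_i^2/(\lambda_i M R_i)$, not $d\sigma_i^2/(\lambda_i M)$: the second moment $B^2 = O(L^2 + d\sigma_i^2/M)$ enters the strongly convex subgradient bound as $B^2/(\lambda_i R_i)$, and the extra $1/R_i$ is what cancels the $R_i$ inside $\sigma_i^2$ to leave the phase-independent term $dL^2\log(1/\delta)/(\lambda_i M n_i^2\varepsilon^2)$. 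Without that cancellation the choice $R_i = \min(Mn_i, M\varepsilon^2 n_i^2/d)+1$ does not balance the optimization and noise terms, and the stated communication and subgradient complexities do not follow. With these two fixes your argument reproduces the paper's proof.
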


\begin{remark}[Improved gradient complexity]
The above subgradient complexity improves over the previous state-of-the-art gradient complexity \citep{lowy2023private} for an ISRL-DP FL algorithm with optimal
excess risk. \citet{lowy2023private} apply Nesterov smoothing to ISRL-DP MB-SGD. As discussed
earlier, implementing the smoothing approach is computationally costly. Moreover, the results in \citep{lowy2023private} assume i.i.d. silo data.
\end{remark}

The precise statement and proof of~\cref{thm:nonsmooth} can be found in \cref{thm:nonsmooth-app}. 

\section{Numerical Experiments}\label{app:numerical}
We validate our theoretical findings with numerical experiments on MNIST data. As shown in Figures~\ref{fig:reliable} and~\ref{fig:unreliable}, \textit{our algorithm consistently outperforms \citet{lowy2023private}}. 
We use a similar experimental setup to~\citet{lowy2023private}, as outlined below.

\paragraph{Task/model/data set.} We run binary logistic regression with heterogeneous MNIST data: each of the \(N=25\) silos contains data corresponding to one odd digit class and one even digit class (e.g.\ (1, 0), (3, 2), etc.) and the goal is to classify each digit as odd or even. We randomly sample roughly \(1 / 5\) of MNIST data to expedite the experiments.
We borrow the code from \citet{woodworth2020minibatch} to transform and preprocess MNIST data.

\paragraph{Preprocessing.} We preprocess MNIST data, flatten the images, and reduce the dimension to \(d=50\) using PCA. We use an 80/20 train/test split, yielding a total of \(n=1734\) training samples.

\paragraph{Our algorithm.} Localized ISRL-DP MB-SGD, which is a practical (non-accelerated) variant of our \cref{alg:phased_acc}: For simplicity and to expedite parameter search, we use vanilla MB-SGD in place of accelerated MB-SGD as our regularized ERM subsolver in our implementation of \cref{alg:phased_acc}.
 
\paragraph{Baseline.} We compare our algorithm against the One-pass ISRL-DP MB-SGD of~\citet{lowy2023private}. Recall that~\citet{lowy2023private} did not provide theoretical guarantees for their multi-pass ISRL-DP MB-SGD with heterogeneous silos. 

\paragraph{Hyperparameter tuning and evaluation.} We evaluate the algorithms across a range of privacy parameters \(\varepsilon\) and fix \(\delta = 1 / n^2\). For each algorithm and each setting of \(\varepsilon\), we search a range of step sizes \(\eta\).

\paragraph{Simulating unreliable communication.} In addition to the reliable communication setting where all silos communicate in each round \(M=N=25\), we simulate unreliable communication by randomly selecting a subset of silos to communicate in each round.
In each communication round, \(M=18\) of the \(N=25\) silos are chosen uniformly at random to communicate with the server.

\paragraph{Evaluation.} Each evaluation consists of 5 trials, each with different data due to random sampling. In each trial, for each parameter setting, we repeat 3 runs and choose the hyperparameters with the lowest average loss, and record the average test error as the test error. We plot the average test error and the standard deviation across trials.

\begin{figure}[t]
    \centering
    \begin{minipage}[b]{0.45\textwidth}
        \centering
        \includegraphics[width=\textwidth]{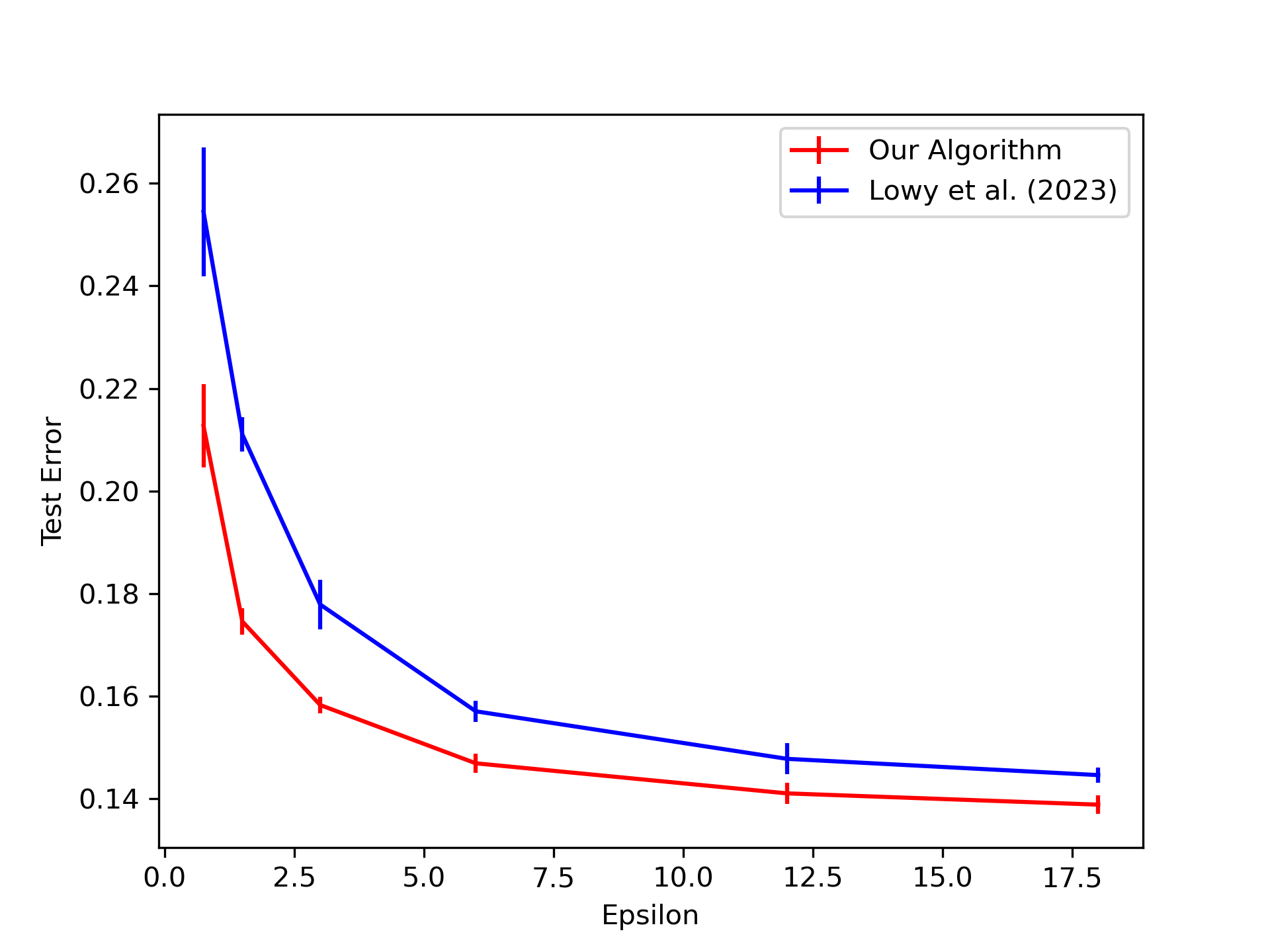}
        \caption{Reliable Communication}\label{fig:reliable}
    \end{minipage}
    \hfill
    \begin{minipage}[b]{0.45\textwidth}
        \centering
        \includegraphics[width=\textwidth]{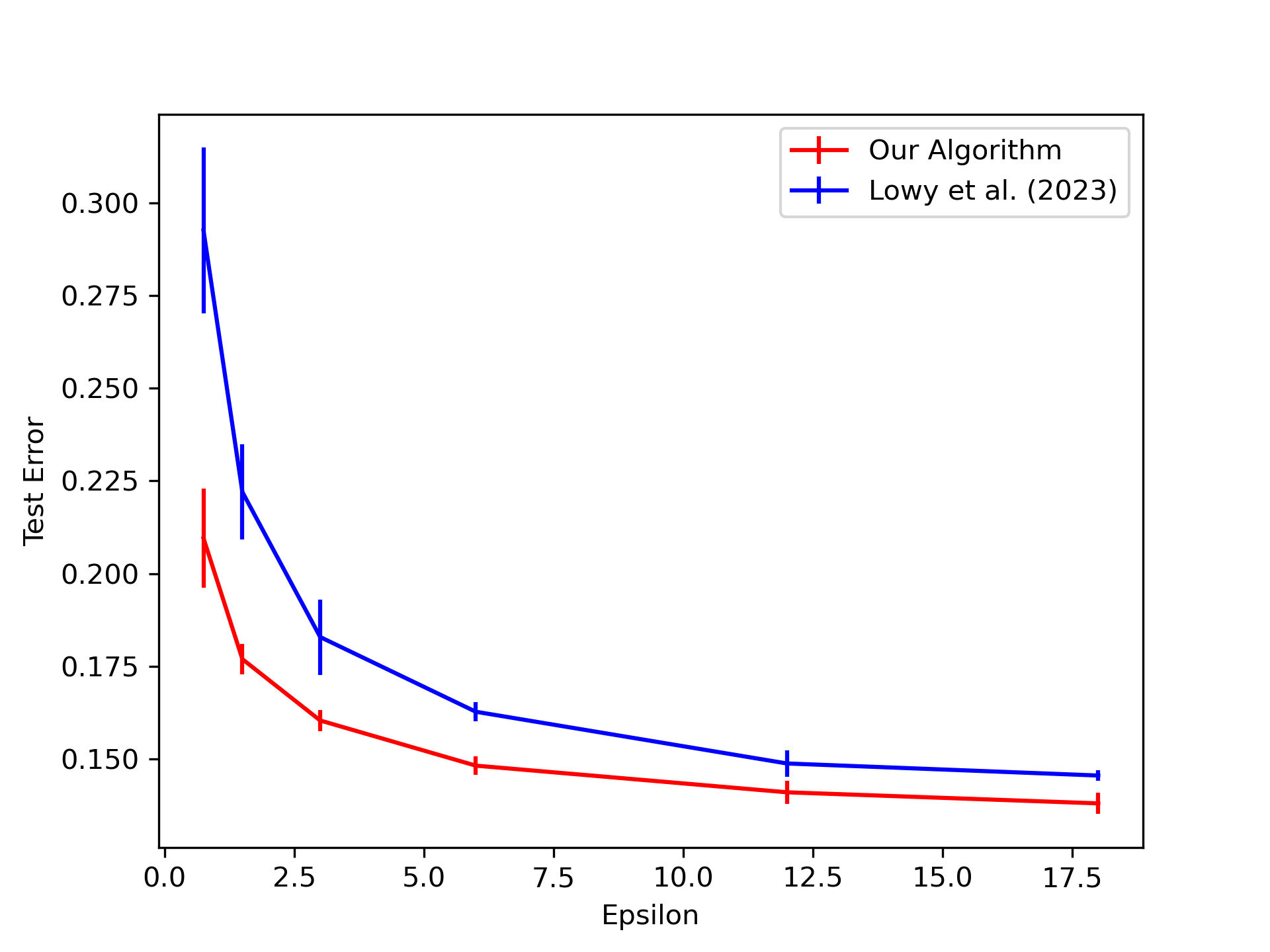}
        \caption{Unreliable Communication}\label{fig:unreliable}
    \end{minipage}
\end{figure}

As shown in the plots, \textit{in both reliable and unreliable communication settings, our localized ISRL-DP MB-SGD algorithm outperforms the baseline one-pass ISRL-DP MB-SGD algorithm across all privacy parameters}.
Despite being an algorithm designed to achieve theoretical guarantees, our algorithm evidently performs well in practice.

\section{Concluding Remarks and Open Questions}\label{sec:conclude}
We have studied private federated learning in the absence of a trusted server. We characterized the minimax optimal excess risk bounds for heterogeneous ISRL-DP FL, answering an open question posed by~\citet{lowy2023private}. Further, our algorithms advanced the state-of-the-art in terms of communication and computational efficiency.
For smooth losses, the communication complexity of our optimal algorithm matches the non-private lower bound.

To conclude, we discuss some open problems that arise from our work.
(1) A rigorous proof of a ISRL-DP communication complexity lower bound. 
(2) Is there an optimal ISRL-DP algorithm with \(O(nN)\) gradient complexity? A promising approach may be to combine~\cref{alg:phased_acc} with ISRL-DP variance-reduction. (Note that the gradient-efficient variance-reduced central DP algorithm of~\citet{zhangBringYourOwn2022} uses output perturbation, which requires a trusted server.)
(3) Is it possible to achieve both optimal communication complexity and optimal gradient complexity simultaneously with a single algorithm? Even in the simpler centralized setting (\(N=1\)), this question is open. 
(4) What are the optimal rates for ISRL-DP FL problems beyond convex and uniformly Lipschitz loss functions?
\section*{Acknowledgements}
XZ is supported in part by NSF CNS-2153220 and CNS-2312835. CG, AL, and SW supported in part by NSF CCF-2023239 and CCF-2224213 and AFOSR FA9550-21-1-0084.

\section*{Impact Statement}
In this research, we advance the field of federated learning (FL) by introducing algorithms that enhance privacy and communication efficiency in settings where trust in a central server is not assumed. Our work has significant implications for industries handling sensitive data, such as healthcare and finance, by offering an improved method for leveraging collective data while safeguarding individual privacy. However, the increased algorithmic complexity of these algorithms could limit their accessibility, especially for organizations with limited resources. Additionally, while our approach reduces the risk of data leakage, it does not entirely eliminate the possibility of misuse or unintended consequences, such as reinforcing existing biases in data. Future research should focus on refining these algorithms to make them more accessible and addressing potential ethical implications. By doing so, we aim to contribute positively to the development of safe and equitable data handling practices in various sectors of society and the economy.
 
\bibliography{refs}

\begin{thebibliography}{48}
\providecommand{\natexlab}[1]{#1}
\providecommand{\url}[1]{\texttt{#1}}
\expandafter\ifx\csname urlstyle\endcsname\relax
  \providecommand{\doi}[1]{doi: #1}\else
  \providecommand{\doi}{doi: \begingroup \urlstyle{rm}\Url}\fi

\bibitem[Apple(2019)]{apple2}
Apple.
\newblock Private federated learning.
\newblock \emph{NeurIPS 2019 Expo Talk Abstract}, 2019.
\newblock URL \url{https://nips.cc/ExpoConferences/2019/schedule?talk_id=40}.

\bibitem[Asi et~al.(2021)Asi, Feldman, Koren, and Talwar]{asi2021private}
H.~Asi, V.~Feldman, T.~Koren, and K.~Talwar.
\newblock Private stochastic convex optimization: Optimal rates in l1 geometry.
\newblock In \emph{International Conference on Machine Learning}, pages 393--403. PMLR, 2021.

\bibitem[Bassily et~al.(2014)Bassily, Smith, and Thakurta]{bst14}
R.~Bassily, A.~Smith, and A.~Thakurta.
\newblock Private empirical risk minimization: Efficient algorithms and tight error bounds.
\newblock In \emph{2014 IEEE 55th annual symposium on foundations of computer science}, pages 464--473. IEEE, 2014.

\bibitem[Bassily et~al.(2019)Bassily, Feldman, Talwar, and Guha~Thakurta]{bassily2019private}
R.~Bassily, V.~Feldman, K.~Talwar, and A.~Guha~Thakurta.
\newblock Private stochastic convex optimization with optimal rates.
\newblock \emph{Advances in neural information processing systems}, 32, 2019.

\bibitem[Bassily et~al.(2021)Bassily, Guzm{\'a}n, and Nandi]{bassily2021non}
R.~Bassily, C.~Guzm{\'a}n, and A.~Nandi.
\newblock Non-euclidean differentially private stochastic convex optimization.
\newblock In \emph{Conference on Learning Theory}, pages 474--499. PMLR, 2021.

\bibitem[Boob and Guzm{\'a}n(2023)]{boob2023optimal}
D.~Boob and C.~Guzm{\'a}n.
\newblock Optimal algorithms for differentially private stochastic monotone variational inequalities and saddle-point problems.
\newblock \emph{Mathematical Programming}, pages 1--43, 2023.

\bibitem[Bousquet and Elisseeff(2002)]{bousquet2002stability}
O.~Bousquet and A.~Elisseeff.
\newblock Stability and generalization.
\newblock \emph{The Journal of Machine Learning Research}, 2:\penalty0 499--526, 2002.

\bibitem[Bubeck et~al.(2015)]{bubeck2015convex}
S.~Bubeck et~al.
\newblock Convex optimization: Algorithms and complexity.
\newblock \emph{Foundations and Trends{\textregistered} in Machine Learning}, 8\penalty0 (3-4):\penalty0 231--357, 2015.

\bibitem[Carmon et~al.(2020)Carmon, Duchi, Hinder, and Sidford]{carmon2020lower}
Y.~Carmon, J.~C. Duchi, O.~Hinder, and A.~Sidford.
\newblock Lower bounds for finding stationary points i.
\newblock \emph{Mathematical Programming}, 184\penalty0 (1-2):\penalty0 71--120, 2020.

\bibitem[Courtiol et~al.(2019)Courtiol, Maussion, Moarii, Pronier, Pilcer, Sefta, Manceron, Toldo, Zaslavskiy, Le~Stang, et~al.]{courtiol2019deep}
P.~Courtiol, C.~Maussion, M.~Moarii, E.~Pronier, S.~Pilcer, M.~Sefta, P.~Manceron, S.~Toldo, M.~Zaslavskiy, N.~Le~Stang, et~al.
\newblock Deep learning-based classification of mesothelioma improves prediction of patient outcome.
\newblock \emph{Nature medicine}, 25\penalty0 (10):\penalty0 1519--1525, 2019.

\bibitem[Duchi et~al.(2013)Duchi, Jordan, and Wainwright]{duchi13}
J.~C. Duchi, M.~I. Jordan, and M.~J. Wainwright.
\newblock Local privacy and statistical minimax rates.
\newblock In \emph{2013 IEEE 54th Annual Symposium on Foundations of Computer Science}, pages 429--438, 2013.
\newblock \doi{10.1109/FOCS.2013.53}.

\bibitem[Dwork et~al.(2006)Dwork, McSherry, Nissim, and Smith]{dwork2006calibrating}
C.~Dwork, F.~McSherry, K.~Nissim, and A.~Smith.
\newblock Calibrating noise to sensitivity in private data analysis.
\newblock In \emph{Theory of cryptography conference}, pages 265--284. Springer, 2006.

\bibitem[FedAI(2019)]{fedai19}
FedAI.
\newblock Webank and swiss re signed cooperation mou.
\newblock \emph{Fed AI Ecosystem}, 2019.
\newblock URL \url{https://www.fedai.org/news/webank-and-swiss-re-signed-cooperation -mou/}.

\bibitem[Feldman et~al.(2020)Feldman, Koren, and Talwar]{feldmanPrivateStochasticConvex2020}
V.~Feldman, T.~Koren, and K.~Talwar.
\newblock Private {{Stochastic Convex Optimization}}: {{Optimal Rates}} in {{Linear Time}}.
\newblock \emph{arXiv:2005.04763 [cs, math, stat]}, May 2020.

\bibitem[Gao and Wright(2023)]{gao2023differentially}
C.~Gao and S.~J. Wright.
\newblock Differentially private optimization for smooth nonconvex erm.
\newblock \emph{arXiv preprint arXiv:2302.04972}, 2023.

\bibitem[Geyer et~al.(2017)Geyer, Klein, and Nabi]{geyer17}
R.~C. Geyer, T.~Klein, and M.~Nabi.
\newblock Differentially private federated learning: {A} client level perspective.
\newblock \emph{CoRR}, abs/1712.07557, 2017.
\newblock URL \url{http://arxiv.org/abs/1712.07557}.

\bibitem[Ghadimi and Lan(2012)]{ghadimilan1}
S.~Ghadimi and G.~Lan.
\newblock Optimal stochastic approximation algorithms for strongly convex stochastic composite optimization i: A generic algorithmic framework.
\newblock \emph{SIAM Journal on Optimization}, 22\penalty0 (4):\penalty0 1469--1492, 2012.
\newblock \doi{10.1137/110848864}.
\newblock URL \url{https://doi.org/10.1137/110848864}.

\bibitem[Ghadimi and Lan(2013)]{ghadimilan2}
S.~Ghadimi and G.~Lan.
\newblock Optimal stochastic approximation algorithms for strongly convex stochastic composite optimization, ii: Shrinking procedures and optimal algorithms.
\newblock \emph{SIAM Journal on Optimization}, 23\penalty0 (4):\penalty0 2061--2089, 2013.
\newblock \doi{10.1137/110848876}.
\newblock URL \url{https://doi.org/10.1137/110848876}.

\bibitem[Gupta et~al.(2022)Gupta, Huang, Zhong, Gao, Li, and Chen]{gupta2022recovering}
S.~Gupta, Y.~Huang, Z.~Zhong, T.~Gao, K.~Li, and D.~Chen.
\newblock Recovering private text in federated learning of language models.
\newblock \emph{Advances in Neural Information Processing Systems}, 35:\penalty0 8130--8143, 2022.

\bibitem[Hard et~al.(2018)Hard, Rao, Mathews, Ramaswamy, Beaufays, Augenstein, Eichner, Kiddon, and Ramage]{hard2018federated}
A.~Hard, K.~Rao, R.~Mathews, S.~Ramaswamy, F.~Beaufays, S.~Augenstein, H.~Eichner, C.~Kiddon, and D.~Ramage.
\newblock Federated learning for mobile keyboard prediction.
\newblock \emph{arXiv preprint arXiv:1811.03604}, 2018.

\bibitem[Hardt et~al.(2016)Hardt, Recht, and Singer]{hardt2016train}
M.~Hardt, B.~Recht, and Y.~Singer.
\newblock Train faster, generalize better: Stability of stochastic gradient descent.
\newblock In \emph{International conference on machine learning}, pages 1225--1234. PMLR, 2016.

\bibitem[Heikkil{\"a} et~al.(2020)Heikkil{\"a}, Koskela, Shimizu, Kaski, and Honkela]{heikkila2020differentially}
M.~A. Heikkil{\"a}, A.~Koskela, K.~Shimizu, S.~Kaski, and A.~Honkela.
\newblock Differentially private cross-silo federated learning.
\newblock \emph{arXiv preprint arXiv:2007.05553}, 2020.

\bibitem[Hilmkil et~al.(2021)Hilmkil, Callh, Barbieri, S{\"u}tfeld, Zec, and Mogren]{hilmkil2021scaling}
A.~Hilmkil, S.~Callh, M.~Barbieri, L.~R. S{\"u}tfeld, E.~L. Zec, and O.~Mogren.
\newblock Scaling federated learning for fine-tuning of large language models.
\newblock In \emph{International Conference on Applications of Natural Language to Information Systems}, pages 15--23. Springer, 2021.

\bibitem[Jayaraman and Wang(2018)]{jayaraman2018distributed}
B.~Jayaraman and L.~Wang.
\newblock Distributed learning without distress: Privacy-preserving empirical risk minimization.
\newblock \emph{Advances in Neural Information Processing Systems}, 2018.

\bibitem[Kairouz et~al.(2021)Kairouz, McMahan, Avent, Bellet, Bennis, Bhagoji, Bonawitz, Charles, Cormode, Cummings, et~al.]{kairouz2021advances}
P.~Kairouz, H.~B. McMahan, B.~Avent, A.~Bellet, M.~Bennis, A.~N. Bhagoji, K.~Bonawitz, Z.~Charles, G.~Cormode, R.~Cummings, et~al.
\newblock Advances and open problems in federated learning.
\newblock \emph{Foundations and Trends{\textregistered} in Machine Learning}, 14\penalty0 (1--2):\penalty0 1--210, 2021.

\bibitem[Kasiviswanathan et~al.(2011)Kasiviswanathan, Lee, Nissim, Raskhodnikova, and Smith]{whatcanwelearnprivately}
S.~P. Kasiviswanathan, H.~K. Lee, K.~Nissim, S.~Raskhodnikova, and A.~Smith.
\newblock What can we learn privately?
\newblock \emph{SIAM Journal on Computing}, 40\penalty0 (3):\penalty0 793--826, 2011.

\bibitem[Kulkarni et~al.(2021)Kulkarni, Lee, and Liu]{kulkarni2021private}
J.~Kulkarni, Y.~T. Lee, and D.~Liu.
\newblock Private non-smooth empirical risk minimization and stochastic convex optimization in subquadratic steps.
\newblock \emph{arXiv preprint arXiv:2103.15352}, 2021.

\bibitem[Levy et~al.(2021)Levy, Sun, Amin, Kale, Kulesza, Mohri, and Suresh]{levy2021learning}
D.~Levy, Z.~Sun, K.~Amin, S.~Kale, A.~Kulesza, M.~Mohri, and A.~T. Suresh.
\newblock Learning with user-level privacy.
\newblock \emph{Advances in Neural Information Processing Systems}, 34:\penalty0 12466--12479, 2021.

\bibitem[Liu and Lu(2021)]{liu2021lower}
D.~Liu and Z.~Lu.
\newblock Lower bounds for differentially private erm: Unconstrained and non-euclidean.
\newblock \emph{arXiv preprint arXiv:2105.13637}, 2021.

\bibitem[Liu et~al.(2022)Liu, Hu, Wu, and Smith]{virginia}
Z.~Liu, S.~Hu, Z.~S. Wu, and V.~Smith.
\newblock On privacy and personalization in cross-silo federated learning.
\newblock \emph{arXiv preprint arXiv:2206.07902}, 2022.

\bibitem[Lowy and Razaviyayn(2023{\natexlab{a}})]{lowy2022private}
A.~Lowy and M.~Razaviyayn.
\newblock Private federated learning without a trusted server: Optimal algorithms for convex losses.
\newblock In \emph{The Eleventh International Conference on Learning Representations}, 2023{\natexlab{a}}.

\bibitem[Lowy and Razaviyayn(2023{\natexlab{b}})]{lowy2023lipschitz}
A.~Lowy and M.~Razaviyayn.
\newblock Private stochastic optimization with large worst-case lipschitz parameter: Optimal rates for (non-smooth) convex losses and extension to non-convex losses.
\newblock In \emph{International Conference on Algorithmic Learning Theory}, pages 986--1054. PMLR, 2023{\natexlab{b}}.

\bibitem[Lowy et~al.(2023{\natexlab{a}})Lowy, Ghafelebashi, and Razaviyayn]{lowy2023private}
A.~Lowy, A.~Ghafelebashi, and M.~Razaviyayn.
\newblock Private non-convex federated learning without a trusted server.
\newblock In \emph{International Conference on Artificial Intelligence and Statistics}, pages 5749--5786. PMLR, 2023{\natexlab{a}}.

\bibitem[Lowy et~al.(2023{\natexlab{b}})Lowy, Gupta, and Razaviyayn]{lowy2023fair}
A.~Lowy, D.~Gupta, and M.~Razaviyayn.
\newblock Stochastic differentially private and fair learning.
\newblock In \emph{The Eleventh International Conference on Learning Representations}, 2023{\natexlab{b}}.

\bibitem[Lowy et~al.(2023{\natexlab{c}})Lowy, Li, Huang, and Razaviyayn]{lowy2023public}
A.~Lowy, Z.~Li, T.~Huang, and M.~Razaviyayn.
\newblock Optimal differentially private learning with public data.
\newblock \emph{arXiv preprint arXiv:2306.15056}, 2023{\natexlab{c}}.

\bibitem[Lowy et~al.(2024)Lowy, Ullman, and Wright]{lowy2024make}
A.~Lowy, J.~Ullman, and S.~J. Wright.
\newblock How to make the gradients small privately: Improved rates for differentially private non-convex optimization.
\newblock \emph{arXiv preprint arXiv:2402.11173}, 2024.

\bibitem[McMahan et~al.(2017)McMahan, Moore, Ramage, Hampson, and y~Arcas]{mcmahan2017originalFL}
B.~McMahan, E.~Moore, D.~Ramage, S.~Hampson, and B.~A. y~Arcas.
\newblock Communication-efficient learning of deep networks from decentralized data.
\newblock In \emph{Artificial Intelligence and Statistics}, pages 1273--1282. PMLR, 2017.

\bibitem[McMahan et~al.(2018)McMahan, Ramage, Talwar, and Zhang]{mcmahan17}
B.~McMahan, D.~Ramage, K.~Talwar, and L.~Zhang.
\newblock Learning differentially private recurrent language models.
\newblock In \emph{International Conference on Learning Representations (ICLR)}, 2018.
\newblock URL \url{https://openreview.net/pdf?id=BJ0hF1Z0b}.

\bibitem[McSherry(2009)]{mcsherry2009privacy}
F.~D. McSherry.
\newblock Privacy integrated queries: an extensible platform for privacy-preserving data analysis.
\newblock In \emph{Proceedings of the 2009 ACM SIGMOD International Conference on Management of data}, pages 19--30, 2009.

\bibitem[Nesterov(2005)]{nesterov2005smooth}
Y.~Nesterov.
\newblock Smooth minimization of non-smooth functions.
\newblock \emph{Mathematical programming}, 103:\penalty0 127--152, 2005.

\bibitem[Noble et~al.(2022)Noble, Bellet, and Dieuleveut]{noble2022differentially}
M.~Noble, A.~Bellet, and A.~Dieuleveut.
\newblock Differentially private federated learning on heterogeneous data.
\newblock In \emph{International Conference on Artificial Intelligence and Statistics}, pages 10110--10145. PMLR, 2022.

\bibitem[Shalev-Shwartz et~al.(2009)Shalev-Shwartz, Shamir, Srebro, and Sridharan]{shalev-shwartzStochasticConvexOptimization}
S.~Shalev-Shwartz, O.~Shamir, N.~Srebro, and K.~Sridharan.
\newblock Stochastic convex optimization.
\newblock In \emph{COLT}, page~5, 2009.

\bibitem[Woodworth and Srebro(2016)]{woodworth2016tight}
B.~E. Woodworth and N.~Srebro.
\newblock Tight complexity bounds for optimizing composite objectives.
\newblock \emph{Advances in neural information processing systems}, 29, 2016.

\bibitem[Woodworth et~al.(2020)Woodworth, Patel, and Srebro]{woodworth2020minibatch}
B.~E. Woodworth, K.~K. Patel, and N.~Srebro.
\newblock Minibatch vs local sgd for heterogeneous distributed learning.
\newblock \emph{Advances in Neural Information Processing Systems}, 33:\penalty0 6281--6292, 2020.

\bibitem[Wu et~al.(2023)Wu, Zhou, Tao, and Wang]{wu2023private}
Y.~Wu, X.~Zhou, Y.~Tao, and D.~Wang.
\newblock On private and robust bandits.
\newblock \emph{arXiv preprint arXiv:2302.02526}, 2023.

\bibitem[Zhang et~al.(2022)Zhang, Thekumparampil, Oh, and He]{zhangBringYourOwn2022}
L.~Zhang, K.~K. Thekumparampil, S.~Oh, and N.~He.
\newblock Bring {{Your Own Algorithm}} for {{Optimal Differentially Private Stochastic Minimax Optimization}}.
\newblock \emph{Advances in Neural Information Processing Systems}, 35:\penalty0 35174--35187, Dec. 2022.

\bibitem[Zhou and Chowdhury(2023)]{zhou2023differentially}
X.~Zhou and S.~R. Chowdhury.
\newblock On differentially private federated linear contextual bandits.
\newblock \emph{arXiv preprint arXiv:2302.13945}, 2023.

\bibitem[Zhu and Han(2020)]{zhu2020deep}
L.~Zhu and S.~Han.
\newblock Deep leakage from gradients.
\newblock In \emph{Federated learning}, pages 17--31. Springer, 2020.

\end{thebibliography}
\bibliographystyle{abbrvnat}

\appendix
\section{Further Discussion of Related Work}

\paragraph{DP Optimization.} There is a large and growing body of work on DP optimization. Most of this work focuses on the centralized setting, with Lipschitz convex loss functions in \(\ell_2\) geometry~\citep{bst14,bassily2019private,feldmanPrivateStochasticConvex2020,zhangBringYourOwn2022} and \(\ell_p\) geometry~\citep{asi2021private,bassily2021non}. Recently, we have started to learn more about other central DP optimization settings, such as DP optimization with non-uniformly Lipschitz loss functions/heavy-tailed data~\citep{lowy2023lipschitz}, non-convex loss functions~\citep{gao2023differentially,lowy2024make}, and min-max games~\citep{boob2023optimal}. There has also been work on the interactions between DP and other ethical desiderata, like fairness~\citep{lowy2023fair} and robustness~\citep{wu2023private}, as well as DP optimization with side access to public data~\citep{lowy2023public}. 
Despite this progress, much less is known about DP distributed optimization/federated learning, particularly in the absence of a trustworthy server. 

\paragraph{DP Federated Learning.} There have been many works attempting to ensure privacy of people's data during the federated learning (FL) process. Some of these works have utilized \textit{user-level differential privacy}~\citep{mcmahan17, geyer17, levy2021learning}, which can be practical for cross-device FL with a trusted server. Several works have also considered inter-silo record-level DP (ISRL-DP) or similar notions to ensure privacy without a trusted server~\citep{heikkila2020differentially, virginia, lowy2023private, lowy2023private,zhou2023differentially}. 

The state-of-the-art theoretical bounds for convex ISRL-DP FL are due to~\citet{lowy2023private} which gave minimax error-optimal algorithms and lower bounds for the i.i.d.\ setting, and suboptimal algorithms for the heterogeneous setting. We close this gap by providing optimal algorithms for the heterogeneous setting. Additionally, we improve over the communication complexity and gradient complexity bounds in~\citet{lowy2023private}.

\section{Multi-Stage Implementation of ISRL-DP Accelerated MB-SGD}\label{app:multi-stage}
\begin{algorithm}[htb]
    \caption{Multi-stage Accelerated Noisy MB-SGD~\citep{lowy2023private} 
    }\label{alg:multistage}
    \begin{algorithmic}[1]
    \Require Inputs: Constraint set \(\WW\), \(L\)-Lipschitz and \(\mu\)-strongly convex loss function \(\widehat{F}\), \(U \in [R]\) such that \(\sum_{k=1}^U R^{(k)} \leq R\) for \(R^{(k)}\) defined below; \(w_0 \in \mathcal{W}\), \(\Delta \geq \widehat{F}(w_0) - \widehat{F}^*\), 
    and \(q_0 = 0\).
    \For{\(k \in [U]\)}
        \State \(R^{(k)} = \left\lceil \max\left\{4 \sqrt{\frac{2\beta}{\mu}}, \frac{128 L^2}{3 \mu \Delta 2^{-(k+1)}} \right\} \right\rceil\)
        \State \(\upsilon_k \!=\! \max \left\{2 \beta,  \left[\frac{\mu V^2}{3 \Delta 2^{-(k-1)} R^{(k)} (R^{(k)} + 1)(R^{(k)} + 2)}\right]^{1/2}\right\}\)
        \State \(\alpha_r = \frac{2}{r + 1}\), \(\eta_r = \frac{4 \upsilon_k}{r(r+1)}\), for \(r \in [R^{(k)}]\). 
        \State Call \cref{alg:acc_mbsgd} with \(R=R^{(k)}\), using \(w_0 = q_{k-1}\), and \(\{\alpha_{r}\}_{r \in [R^{(k)}]}\) and \(\{\eta_{r}\}_{r \in [R^{(k)}]}\) defined above. 
        \State Set \(q_k\) to be the output of stage \(k\).
    \EndFor
    \State \textbf{return} \(q_U\).
\end{algorithmic}
\end{algorithm}

We will need the following result for the excess risk bound, which is due to~\citep{lowy2023private} (Theorem F.1).
\begin{lemma}[Smooth ERM Upper Bound for~\cref{alg:multistage}]\label{thm:lowy_erm_bound}
    Assume \(f(\cdot, x)\) is \(\beta\)-smooth and \(\lambda\)-strongly convex for all \(x\). Let \(\varepsilon \leq 2  \ln(2/\delta), \delta \in (0,1)\). Then, there exist algorithmic parameters such that \cref{alg:multistage} is \((\varepsilon, \delta)\)-ISRL-DP.\@ Moreover, \cref{alg:multistage} has the following excess empirical risk bound
    \begin{equation}\label{eq: sc accel smooth ERM upper}
    \E [\widehat{F}(q_U) - \widehat{F}^*] = \widetilde{O}
    \left(\frac{L^2}{\lambda}\frac{d 
    \ln(1/\delta)
    }{\varepsilon^2 n^2 M}\right),
    \end{equation}
    and the communication complexity is
    \begin{equation}\label{eq:R}
        R =\max\left\{1, \sqrt{\frac{\beta}{\lambda}} \ln\left(\frac{\Delta \lambda M \varepsilon^2 n^2}{L^2 d}\right), \mathbbm{1}_{\{M K < Nn\}} \frac{\varepsilon^2 n^2}{K d \ln(1/\delta)}\right\}. 
    \end{equation}
\end{lemma}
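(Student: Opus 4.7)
The plan is to prove the privacy and the excess-empirical-risk/communication statements separately, then combine them. The algorithm being analyzed is a multi-stage restart of the accelerated noisy minibatch SGD subroutine (\cref{alg:acc_mbsgd}), so both parts reduce to analyzing a single call to \cref{alg:acc_mbsgd} per stage and then summing across the \(U\) stages.

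For privacy, I would observe that in each round, every silo \(i\) releases a single vector \(\widetilde{g}_r^{\,i}\) equal to a \(K\)-sample subsampled average of per-example gradients (each bounded in norm by \(L\) via Lipschitzness) plus independent Gaussian noise \(u_i \sim \mathcal{N}(0,\sigma^2 I_d)\). The \(\ell_2\)-sensitivity of that average under a single-record change in \(X_i\) is \(2L/K\). Applying the subsampled Gaussian mechanism / moments accountant with sampling rate \(q = K/n\) and total iteration count \(T = R\) (summed across all stages), a variance \(\sigma^2 = \Theta\!\bigl(L^2 R \log(1/\delta)/(n^2 \varepsilon^2)\bigr)\) per coordinate suffices for each silo's transcript to be \((\varepsilon,\delta)\)-DP, giving ISRL-DP for the whole algorithm. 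When \(MK \geq Nn\) the algorithm can instead use the full local data deterministically without subsampling, dropping the subsampling amplification and the corresponding term in the bound on \(R\); this is exactly the indicator \(\mathbbm{1}_{\{MK < Nn\}}\) that appears in (\ref{eq:R}).

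For convergence, I would invoke the multi-stage AC-SA guarantee of~\citet{ghadimilan2} specialized to a \(\mu\)-strongly-convex \(\beta\)-smooth objective with stochastic gradient oracle of variance \(V^2\). The per-stage step sizes \(\alpha_r\), \(\eta_r\) and stage length \(R^{(k)}\) are chosen so that expected suboptimality is halved in stage \(k\): after \(U = O(\log_2(\Delta/\mathrm{target}))\) stages,
\[
\mathbb{E}[\widehat F(q_U) - \widehat F^*] \;\lesssim\; \frac{\beta \Delta}{(R^{(U)})^2}\,2^{-U} + \frac{V^2}{\mu R^{(U)}}.
\]
The first term shrinks geometrically once \(R^{(U)} \gtrsim \sqrt{\beta/\mu}\), yielding the \(\sqrt{\beta/\lambda}\,\ln(\cdot)\) contribution to \(R\). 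The per-round variance of the aggregated gradient estimate \(\widetilde g_r\) is
\[
V^2 \;=\; O\!\left(\tfrac{L^2}{MK}\right) + O\!\left(\tfrac{\sigma^2 d}{M}\right) \;=\; O\!\left(\tfrac{L^2}{MK} + \tfrac{L^2 d R \log(1/\delta)}{M n^2 \varepsilon^2}\right),
\]
using the sampling variance bound from \(L\)-Lipschitzness and the aggregated Gaussian variance from privacy. Plugging this into \(V^2/(\mu R^{(U)})\) and choosing \(R\) large enough that the sampling term is dominated by the noise floor (equivalently \(R \gtrsim \varepsilon^2 n^2/(K d \log(1/\delta))\) when subsampling is active) yields the stated excess-risk bound \(\widetilde O(L^2 d \log(1/\delta)/(\lambda \varepsilon^2 n^2 M))\) and the third term of (\ref{eq:R}).

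The main obstacle, in my view, is the bookkeeping that ties the per-stage parameters \(R^{(k)}, \upsilon_k\) to a single global \(R\) and a single global \((\varepsilon,\delta)\) budget: one needs to verify that the \(R^{(k)}\) specified in \cref{alg:multistage} indeed produce geometric contraction of the deterministic error while the cumulative privacy cost across all stages (with noise variance calibrated to the total \(R\)) still yields the claimed bound, and that the step-size \(\upsilon_k\) balances the smoothness and variance terms correctly at each stage. A secondary technical point is justifying that the strong convexity modulus \(\mu = \lambda\) (coming from the regularizer in the outer \cref{alg:phased_acc}) enters the AC-SA guarantee cleanly even though the subroutine's per-step prox uses both \(\mu\) and \(\eta_r\); this is handled by the standard AC-SA prox-recursion inequality. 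Once these pieces are aligned, the three-way max in (\ref{eq:R}) and the excess-risk bound (\ref{eq: sc accel smooth ERM upper}) fall out by taking \(U = \lceil \log_2(\Delta/\mathrm{target})\rceil\) and summing \(R^{(k)}\).
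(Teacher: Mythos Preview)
The paper does not actually prove this lemma; it is introduced with the sentence ``We will need the following result for the excess risk bound, which is due to~\citep{lowy2023private} (Theorem F.1),'' and no argument is given. So there is no proof in the present paper to compare your proposal against.

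That said, your outline is sound and is essentially the approach underlying the cited result. In fact, the paper itself re-derives a close variant of this bound in \cref{app:convolutional_smoothing} (for the convolutional-smoothing subroutine with a different gradient estimator) using exactly the ingredients you name: it invokes the multi-stage AC-SA guarantee, stated there as \cref{lem:multi_stage_bound} in the form \(\mathbb{E}[\widehat F(\widehat w_R^{ag})-\widehat F^*]\lesssim \Delta \exp(-\sqrt{\mu/\beta}\,R) + V^2/(\mu R)\), bounds the aggregated-gradient variance as \(V^2 = O(L^2/(MK) + d\sigma^2/M)\), substitutes the privacy-calibrated \(\sigma^2 = \Theta(L^2 R\ln(1/\delta)/(n^2\varepsilon^2))\), and then reads off the choice of \(R\) that kills the deterministic term and balances the sampling term against the noise floor. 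Your decomposition, variance calculation, and identification of the three terms in the max defining \(R\) all match this route.
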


\section{Precise Statement and Proof of \texorpdfstring{\cref{thm:main}}{\cref{thm:main}}}\label{proof:main}
\begin{theorem}[Precise Statement of \cref{thm:main}]\label{thm:main-app}
Assume \(f(\cdot, x)\) is \(\beta\)-smooth for all \(x\). Let \(\varepsilon \leq 2  \ln(2/\delta)\), \(\delta \in (0,1)\). Choose \(R_i \approx \max \left(\sqrt{\frac{\beta + \lambda_i}{\lambda_i}} \ln\left(\frac{\Delta_i \lambda_i M \varepsilon^2 n_i^2}{L^2 d}\right),
\mathbbm{1}_{\{M K_i  < Nn_i\}} \frac{\varepsilon^2 n_i^2}{K_i d \ln(1/\delta)}\right)\), where \(LD \ge \Delta_i \ge \hat F_i (w_{i-1}) - \hat F_i(\hat w_i)\), \(K_i \geq \frac{\varepsilon n_i}{4 \sqrt{2R_i \ln(2/\delta)}}\), \(\sigma_i^2 = \frac{256 L^2 R_i \ln(\frac{2.5 R_i}{\delta}) \ln(2/\delta)}{n_i^2 \varepsilon^2}\), and \begin{equation}\label{eq:lambda}
    \lambda = \frac{L}{D n\sqrt{M}} \max \left\{\sqrt{n}, \frac{\sqrt{d \ln (1/\delta)}}{\varepsilon}\right\}.
\end{equation}
Then, the output of \cref{alg:phased_acc} is \((\varepsilon, \delta)\)-ISRL-DP and achieves the following excess risk bound:
\begin{equation}\label{eq:excess_risk}
    \E F(w_{\tau}) - F(w^*) = \widetilde{O}\left( \frac{LD}{\sqrt{M}} \left(
        \frac{1}{\sqrt{n}} + \frac{\sqrt{d \log(1 / \delta)}}{\varepsilon n}
        \right)
        \right).
\end{equation}
The communication complexity is
\begin{equation}\label{eq:smooth_comm}
    \widetilde O\left(\max \left\{1, 
                 \frac{\sqrt{\beta D}M^{1/4}}{\sqrt{L}} \left(\min \left\{\sqrt{n}, \frac{\varepsilon n}{\sqrt{d \ln (1/\delta)}}\right\}\right)^{1/2},
                 \mathbbm{1}_{\{M < N\}} \frac{\varepsilon^2 n}{d \ln(1/\delta)}
                 \right \}
            \right),
\end{equation}
when \(K_i = n_i\).
If \(d = \Theta(n)\), \(M=N\), and \(\varepsilon = \Theta(1)\), then the gradient complexity is
\begin{equation*}
     \widetilde{O}\left( N^{5/4} n^{1/4} (\beta D /L)^{1/2} +   N n +  (N n)^{9/8}  (\beta D/L)^{1/4}\right).
\end{equation*}
\end{theorem}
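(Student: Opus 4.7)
The plan is to establish privacy by parallel composition, bound the excess risk via a telescoping decomposition that separates optimization error from stability/generalization error, and then convert the parameter schedule into the stated complexity bounds.

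\textbf{Privacy.} Each phase's subroutine operates on a fresh, disjoint batch $B_{i,l}$ drawn from silo $l$'s dataset. With the stated noise scale $\sigma_i^2$ and batch size $K_i$, the phase-$i$ call to~\cref{alg:multistage} is $(\varepsilon,\delta)$-ISRL-DP on $B_{i,l}$ by~\cref{thm:lowy_erm_bound}. Since the batches across phases are pairwise disjoint subsets of $X_l$, parallel composition of DP yields $(\varepsilon,\delta)$-ISRL-DP for the entire~\cref{alg:phased_acc}.

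\textbf{Excess risk decomposition.} Define $\hat w_i := \argmin_{w \in \mathcal{W}_i}\hat F_i(w)$ and set $\hat w_0 := w^*$. I will telescope
\[
\E F(w_\tau) - F(w^*) = \E[F(w_\tau) - F(\hat w_\tau)] + \sum_{i=1}^\tau \E[F(\hat w_i) - F(\hat w_{i-1})].
\]
For the first term I would apply $L$-Lipschitzness and Jensen, controlling $\E\|w_\tau - \hat w_\tau\|$ via $\lambda_\tau$-strong convexity of $\hat F_\tau$ together with the ERM guarantee of~\cref{thm:lowy_erm_bound}. The same reasoning gives, for every $i$,
\[
\E\|w_i - \hat w_i\|^2 \leq \tfrac{2}{\lambda_i}\,\E[\hat F_i(w_i) - \hat F_i(\hat w_i)] \leq \widetilde O\!\left(\tfrac{L^2}{\lambda_i^2}\cdot\tfrac{d\log(1/\delta)}{n_i^2 M\varepsilon^2}\right).
\]
For the summed term, the crucial observation is that uniform stability of the $\lambda_i$-regularized ERM requires only independence of the samples, not identical distribution, giving a replace-one stability of $O(L^2/(\lambda_i n_i N))$ regardless of heterogeneity. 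Combined with the optimality inequality $\hat F_i(\hat w_i) \leq \hat F_i(\hat w_{i-1})$ and the fact that $\hat w_{i-1}$ is independent of the fresh batch $B_{i,\cdot}$ (so $\E F_{\mathrm{emp},i}(\hat w_{i-1}) = F(\hat w_{i-1})$), a Shalev-Shwartz-style argument yields
\[
\E[F(\hat w_i) - F(\hat w_{i-1})] \leq \tfrac{\lambda_i}{2}\,\E\|\hat w_{i-1} - w_{i-1}\|^2 + \tfrac{4L^2}{\lambda_i n_i N}.
\]

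\textbf{Summation and parameter selection.} Plugging the $\E\|w_{i-1}-\hat w_{i-1}\|^2$ bound into the stability display and summing, the right-hand side becomes a sum of terms of the form $\lambda_i\cdot L^2 d\log(1/\delta)/(\lambda_{i-1}^2 n_{i-1}^2 M\varepsilon^2)$ and $L^2/(\lambda_i n_i N)$. The geometric schedules $\lambda_i = \lambda\cdot 2^{(i-1)p}$ and $n_i = \lfloor n/2^i\rfloor$ with $p = \max(\tfrac12\log_n M + 1, 3)$ are chosen precisely so that both contributions form geometric series in $i$ dominated by a single phase. Choosing $\lambda$ as in~\eqref{eq:lambda} then balances the statistical ($1/\sqrt n$) and privacy ($\sqrt{d\log(1/\delta)}/(\varepsilon n)$) rates to produce~\eqref{eq:excess_risk}.

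\textbf{Complexity and main obstacle.} For communication, I sum the $R_i$ from~\eqref{eq:R}: because $\sqrt{(\beta+\lambda_i)/\lambda_i}$ decays geometrically in $i$, the total is dominated by phase~$1$; plugging $\lambda_1 = \lambda$ in from~\eqref{eq:lambda} yields~\eqref{eq:smooth_comm}. Gradient complexity then follows from $T = N\sum_i R_i K_i$ with $K_i$ near $n_i$ and an analogous geometric summation. The principal conceptual obstacle is reconciling stability with acceleration: the accelerated iterates of~\cref{alg:acc_mbsgd} are not themselves known to be stable, which is presumably why prior work could not attain optimal population risk in the heterogeneous case. I sidestep this by applying stability \emph{only} to the exact regularized ERM minimizer $\hat w_i$, and using~\cref{thm:lowy_erm_bound} to certify that the accelerated subroutine output $w_i$ is quantitatively close to $\hat w_i$ in $\ell_2$. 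The main technical care then lies in choosing the geometric schedules $\lambda_i$, $n_i$ so that both error channels---ERM approximation and regularized-ERM generalization---shrink at compatible rates.
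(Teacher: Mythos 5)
Your proposal is correct and follows essentially the same route as the paper: the same telescoping decomposition with $\hat w_0 = w^*$, the same key observation that uniform stability of regularized ERM needs only independence (not identical distributions), the same use of \cref{thm:lowy_erm_bound} plus strong convexity to certify $\|w_i - \hat w_i\|$ is small, and the same geometric summation over phases for both the risk and the complexity bounds. The only cosmetic differences are constants (the paper tracks the $3L$-Lipschitz constant of the regularized objective on $\mathcal{W}_i$, giving $4(3L)^2/(\lambda_i n_i M)$ rather than your $4L^2/(\lambda_i n_i N)$), which do not affect the rates.
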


\begin{remark}
    In \cref{thm:main}, we state the results only for the case \(M=N\). Here, we do not assume \(M=N\), and present the complete results. 
    The complete analysis of gradient complexity for other regimes can be found in \cref{sec:grad_complex}.
\end{remark}
Before proving the theorem, we need several lemmas.

1. We first show the following result.
\begin{lemma}\label{lem:Fhat_property}
Let \(\hat{w}_i=\argmin_{w \in \mathcal{W}} \hat F_i(w)\).
We have \(\hat{w}_i \in \mathcal{W}_i\) and \(\hat F_i\) is \(3L\)-Lipschitz, \((\beta + \lambda_i)\)-smooth.
\end{lemma}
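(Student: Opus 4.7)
The plan is to verify the three claims (containment, Lipschitzness, smoothness) separately, with smoothness being immediate and the containment being the slightly more delicate piece. Let me write \(\hat f_i(w) := \frac{1}{n_i N}\sum_{l=1}^N \sum_{x_{l,j} \in B_{i,l}} f(w, x_{l,j})\) so that \(\hat F_i(w) = \hat f_i(w) + \frac{\lambda_i}{2}\|w - w_{i-1}\|^2\). By \cref{ass: basic}, each \(f(\cdot, x_{l,j})\) is \(L\)-Lipschitz and \(\beta\)-smooth and convex, so \(\hat f_i\) inherits these properties by convex combination.

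For the smoothness claim, \(\hat f_i\) is \(\beta\)-smooth and \(\frac{\lambda_i}{2}\|w - w_{i-1}\|^2\) is \(\lambda_i\)-smooth (its Hessian is \(\lambda_i I\)), so \(\hat F_i\) is \((\beta + \lambda_i)\)-smooth everywhere on \(\mathcal{W}\). For the Lipschitz claim on \(\mathcal{W}_i\), note that the gradient of the regularizer is \(\lambda_i(w - w_{i-1})\), whose norm is at most \(\lambda_i D_i = 2L\) for any \(w \in \mathcal{W}_i\). Since \(\hat f_i\) has gradients bounded by \(L\) in norm, the triangle inequality gives \(\|\nabla \hat F_i(w)\| \leq L + 2L = 3L\) for \(w \in \mathcal{W}_i\), proving \(3L\)-Lipschitzness on \(\mathcal{W}_i\).

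For the containment \(\hat w_i \in \mathcal{W}_i\), the key is to balance two bounds on \(\hat F_i(w_{i-1}) - \hat F_i(\hat w_i)\). Since \(\hat F_i\) is \(\lambda_i\)-strongly convex on \(\mathcal{W}\) (and \(w_{i-1} \in \mathcal{W}\) by induction on the phase index, with the base case \(w_0 \in \mathcal{W}\) by choice), strong convexity yields
\[
\hat F_i(w_{i-1}) - \hat F_i(\hat w_i) \geq \tfrac{\lambda_i}{2}\|w_{i-1} - \hat w_i\|^2.
\]
On the other hand, using the definition of \(\hat F_i\) and that the regularizer vanishes at \(w_{i-1}\),
\[
\hat F_i(w_{i-1}) - \hat F_i(\hat w_i) = \bigl[\hat f_i(w_{i-1}) - \hat f_i(\hat w_i)\bigr] - \tfrac{\lambda_i}{2}\|\hat w_i - w_{i-1}\|^2 \leq L\|w_{i-1} - \hat w_i\| - \tfrac{\lambda_i}{2}\|\hat w_i - w_{i-1}\|^2,
\]
where the inequality uses \(L\)-Lipschitzness of \(\hat f_i\). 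Chaining these two bounds gives \(\lambda_i \|w_{i-1} - \hat w_i\|^2 \leq L\|w_{i-1} - \hat w_i\|\), so \(\|w_{i-1} - \hat w_i\| \leq L/\lambda_i \leq 2L/\lambda_i = D_i\), which is exactly membership in \(\mathcal{W}_i\).

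The routine nature of the bookkeeping is the main point here; no real obstacle arises, but one must remember that the \(3L\)-Lipschitz bound holds only on \(\mathcal{W}_i\) (not on \(\mathcal{W}\)) since the regularizer's gradient is unbounded on \(\mathcal{W}\) in general. The factor of \(2\) slack in the definition \(D_i = 2L/\lambda_i\) is exactly what makes the containment argument clean.
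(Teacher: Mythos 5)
Your proof is correct and follows essentially the same route as the paper: the smoothness and Lipschitz claims are handled identically, and the containment argument rests on the same comparison of \(\hat F_i\) at \(\hat w_i\) versus \(w_{i-1}\) together with \(L\)-Lipschitzness of the unregularized average. The only difference is cosmetic — you additionally invoke \(\lambda_i\)-strong convexity to get the slightly sharper bound \(\|\hat w_i - w_{i-1}\| \le L/\lambda_i\), whereas the paper uses only the optimality inequality \(\hat F_i(\hat w_i) \le \hat F_i(w_{i-1})\) to reach \(2L/\lambda_i = D_i\), which suffices.
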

\begin{proof}
The optimality of \(\hat{w}_i\) implies that
\[
    \frac{1}{n_i M} \sum_{l=1}^M \sum_{j=1}^{n_i} f(\hat w_i; x_{l, j}) + \frac{\lambda_i}{2} \left\|\hat w_i-w_{i-1}\right\|^2 \le
    \frac{1}{n_i M} \sum_{l=1}^M \sum_{j=1}^{n_i} f(w_{i-1}; x_{l, j})  + 0.
\]
By rearranging and using the \(L\)-Lipschitzness of \(f(\cdot, x)\), We obtain
\[
\frac{\lambda_i}{2} \left\|\hat{w}_i-w_{i-1}\right\|^2 \leq L\left\|\hat{w}_i-w_{i-1}\right\|.
\]
It follows that \(\hat{w}_i \in \mathcal{W}_i=\left\{w:\left\|w-w_{i-1}\right\| \leq \frac{2 L}{\lambda_i}\right\}\).

For Lipschitzness, the norm of the derivative of the regularizer \(r_i(w) = \frac{\lambda_i}{2} \left\|w-w_{i-1}\right\|^2\) is \(\lambda_i \left\|w-w_{i-1}\right\|\), which is bounded by \(\le \lambda_i D_i = 2L\).
The Hssian of the regularizer is \(\lambda_i I\).

Therefore, \(r_i(w)\) is \(2L\)-Lipschitz and \(\lambda_i\)-smooth.
It follows that \(\hat F_i\) is \(3L\)-Lipschitz and \((\beta + \lambda_i)\)-smooth.
\end{proof}

2. We have the following bounds that relate the private solution \(w_i\) and the true solution \(\hat w_i\) of \(\hat F_i\).
\begin{lemma}\label{lem:iter_risk}
    In each phase \(i\), the following bounds hold:
    \begin{equation}\label{eq:Fi_excess_risk}
        \E [\hat F_i(w_i) - \hat F_i(\hat w_i)] = \widetilde{O}\left(\frac{L^2}{\lambda_i M}\cdot \frac{d \ln(1/\delta)}{\varepsilon^2 n_i^2} \right),
    \end{equation}
    \begin{equation}\label{eq:w_dist}
        \E\left[\left\|w_i-\hat{w}_i\right\|^2\right] \le \widetilde O\left(\frac{L^2}{\lambda_i^2 M} \cdot \frac{d \log(1 / \delta)}{n_i^2 \varepsilon^2}\right).
    \end{equation}
\end{lemma}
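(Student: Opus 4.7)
The whole lemma reduces to a direct application of the excess empirical risk guarantee for \cref{alg:multistage} (namely \cref{thm:lowy_erm_bound}), plus one elementary step using strong convexity to convert the function-value bound into a distance bound. The first bound \eqref{eq:Fi_excess_risk} will follow by instantiating \cref{thm:lowy_erm_bound} on the subproblem $\hat F_i$ actually being solved in phase $i$; the second bound \eqref{eq:w_dist} will follow from $\lambda_i$-strong convexity of $\hat F_i$.

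\textbf{First bound.} The plan is to check that the hypotheses of \cref{thm:lowy_erm_bound} are met for the call to \cref{alg:multistage} on line~\ref{line:multistg}, then plug in the parameters. From \cref{lem:Fhat_property}, $\hat F_i$ is $3L$-Lipschitz and $(\beta+\lambda_i)$-smooth on $\mathcal{W}$, and by construction of the regularizer $\hat F_i$ is $\lambda_i$-strongly convex. The batch $B_{i,l}$ contains $n_i$ samples at each silo, and the disjointness of the batches across phases is what will later give the privacy guarantee via parallel composition. One subtlety is that \cref{alg:multistage} is run over $\mathcal{W}_i = \{w\in\mathcal{W}:\|w-w_{i-1}\|\le D_i\}$ rather than over $\mathcal{W}$; however \cref{lem:Fhat_property} already shows $\hat w_i \in \mathcal{W}_i$, so $\hat w_i = \argmin_{w\in\mathcal{W}_i}\hat F_i(w)$ as well, and the excess risk bound \eqref{eq: sc accel smooth ERM upper} applies with respect to the same $\hat w_i$. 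Substituting $\lambda \leftarrow \lambda_i$, $L \leftarrow 3L$, $n \leftarrow n_i$ into \eqref{eq: sc accel smooth ERM upper} yields exactly \eqref{eq:Fi_excess_risk}, up to the constant absorbed into $\widetilde{O}(\cdot)$.

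\textbf{Second bound.} Since $\hat F_i$ is $\lambda_i$-strongly convex on the convex set $\mathcal{W}$ and $\hat w_i$ is its minimizer on $\mathcal{W}$, the standard consequence of strong convexity at a constrained minimizer gives, for every $w_i \in \mathcal{W}$,
\begin{equation*}
\hat F_i(w_i) - \hat F_i(\hat w_i) \;\ge\; \frac{\lambda_i}{2}\,\|w_i - \hat w_i\|^2 .
\end{equation*}
Taking expectations and combining with \eqref{eq:Fi_excess_risk} then gives
\begin{equation*}
\E\!\left[\|w_i - \hat w_i\|^2\right] \;\le\; \frac{2}{\lambda_i}\,\E\!\left[\hat F_i(w_i) - \hat F_i(\hat w_i)\right] \;=\; \widetilde O\!\left(\frac{L^2}{\lambda_i^2 M}\cdot\frac{d\log(1/\delta)}{\varepsilon^2 n_i^2}\right),
\end{equation*}
which is exactly \eqref{eq:w_dist}. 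The iterate $w_i$ returned by \cref{alg:multistage} is feasible in $\mathcal{W}_i \subseteq \mathcal{W}$, so the strong convexity inequality applies without issue.

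\textbf{Main obstacle.} There is no real obstacle: the lemma is essentially a bookkeeping exercise that invokes \cref{thm:lowy_erm_bound} on the phase-$i$ subproblem with the correctly identified constants. The only thing to double-check is the choice of the algorithmic parameters $R_i, K_i, \sigma_i$ prescribed in \cref{thm:main-app}, which must match the parameter setting in \cref{thm:lowy_erm_bound} (so that the per-phase call is $(\varepsilon,\delta)$-ISRL-DP and achieves \eqref{eq: sc accel smooth ERM upper}); this matching is immediate upon substituting $\lambda = \lambda_i$, $\beta \to \beta + \lambda_i$, $n = n_i$, $L \to 3L$, and $\Delta = \Delta_i \geq \hat F_i(w_{i-1}) - \hat F_i(\hat w_i)$ (bounded by $LD$ using Lipschitzness and the diameter of $\mathcal{W}$). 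These substitutions are exactly the ones made in the statement of \cref{thm:main-app}, so no additional work is needed.
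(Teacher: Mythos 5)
Your proposal is correct and follows essentially the same route as the paper: instantiate \cref{thm:lowy_erm_bound} on the phase-\(i\) regularized ERM subproblem \(\hat F_i\) (with the substitutions \(\lambda \to \lambda_i\), \(n \to n_i\), \(L \to 3L\) justified by \cref{lem:Fhat_property}) to get \eqref{eq:Fi_excess_risk}, then convert to \eqref{eq:w_dist} via \(\lambda_i\)-strong convexity. Your extra checks --- that \(\hat w_i \in \mathcal{W}_i\) so the constrained and unconstrained minimizers coincide, and that the parameter choices of \cref{thm:main-app} match those required by \cref{thm:lowy_erm_bound} --- are details the paper leaves implicit, but they do not change the argument.
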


\begin{proof}
Applying \cref{thm:lowy_erm_bound} to \(\hat F_i\), we have
\begin{equation*}
    \E [\hat F_i(w_i) - \hat F_i(\hat w_i)] = \widetilde{O}\left(\frac{L^2}{\lambda_i M}\cdot \frac{d \ln(1/\delta)
    }{\varepsilon^2 n_i^2} \right).
\end{equation*}
By using the \(\lambda_i\)-strong convexity, we have
\begin{equation*}
    \frac{\lambda_i}{2} \E\left[\left\|w_i-\hat{w}_i\right\|^2\right] \le \E [\hat F_i(w_i) - \hat F_i(\hat w_i)].
\end{equation*}
The bound~\eqref{eq:w_dist} follows.
\end{proof}
3. As a consequence, we have the following bound.

\begin{lemma}\label{lem:excess_risk_wi}
    Let \(w \in \mathcal{W}\). We have
    \begin{equation*}
        \E[F(\hat w_i)] - F(w) \le \frac{\lambda_i \E[\| w - w_{i-1}\|^2]}{2} + \frac{4 \cdot (3L)^2}{\lambda_i n_i M}.
    \end{equation*}
\end{lemma}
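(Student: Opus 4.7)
The plan is a standard ``stability $+$ optimality'' decomposition, with the point being that each step survives when the silo distributions $\mathcal{D}_l$ are arbitrary and distinct. Denote $S = \bigcup_{l=1}^N B_{i,l}$, $F_S(w) = \frac{1}{n_i N}\sum_{l,j} f(w,x_{l,j})$, and $r(w) = \tfrac{\lambda_i}{2}\|w - w_{i-1}\|^2$, so that $\hat F_i = F_S + r$. I write
\[
\E[F(\hat w_i)] - F(w) = \E[F(\hat w_i) - F_S(\hat w_i)] + \E[F_S(\hat w_i) - F_S(w)] + \E[F_S(w) - F(w)],
\]
and handle the three pieces separately. The middle piece is immediate from optimality of $\hat w_i$ for $\hat F_i$: $F_S(\hat w_i) + r(\hat w_i) \le F_S(w) + r(w)$, so dropping $r(\hat w_i) \ge 0$ yields $\E[F_S(\hat w_i) - F_S(w)] \le \tfrac{\lambda_i}{2}\E\|w - w_{i-1}\|^2$. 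The third piece vanishes because the batches $B_{i,l}$ in phase $i$ are sampled disjointly from all earlier-phase data; in the intended application $w$ is either deterministic or measurable with respect to earlier phases, hence independent of $S$, and linearity of expectation silo-by-silo gives $\E[F_S(w)] = \tfrac{1}{N}\sum_l F_l(w) = F(w)$. Crucially, no homogeneity is used here---only independence across silos.

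The substantive step, and the main obstacle, is the generalization term. The plan is to deploy the classical uniform-stability argument for $\lambda_i$-strongly convex regularized ERM in the style of~\citet{shalev-shwartzStochasticConvexOptimization}, verifying that it uses only independence of the samples and not identical distribution. Fix $(l,j)$, draw a ghost $x'_{l,j} \sim \mathcal{D}_l$ independent of $S$, form $S^{(l,j)}$ by swapping $x_{l,j} \leftrightarrow x'_{l,j}$, and let $\hat w_i^{(l,j)}$ denote the minimizer of the corresponding regularized ERM $\hat F_i^{(l,j)}$ over $\mathcal{W}$. Since $\hat F_i - \hat F_i^{(l,j)} = \tfrac{1}{n_i N}\bigl(f(\cdot,x_{l,j}) - f(\cdot,x'_{l,j})\bigr)$ is $\tfrac{2L}{n_i N}$-Lipschitz and both $\hat F_i$ and $\hat F_i^{(l,j)}$ are $\lambda_i$-strongly convex, summing the two strong-convexity inequalities at the two minimizers yields the uniform stability bound $\|\hat w_i - \hat w_i^{(l,j)}\| \le \tfrac{2L}{\lambda_i n_i N}$.

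To convert stability into generalization, I exploit exchangeability of $(x_{l,j}, x'_{l,j})$: both are $\mathcal{D}_l$-distributed and independent of the rest, so renaming gives $\E[f(\hat w_i, x_{l,j})] = \E[f(\hat w_i^{(l,j)}, x'_{l,j})]$. Subtracting this from $\E[f(\hat w_i, x'_{l,j})] = \E[F_l(\hat w_i)]$ and using $L$-Lipschitzness of $f(\cdot, x'_{l,j})$ bounds the per-sample gap by $L \cdot \E\|\hat w_i - \hat w_i^{(l,j)}\| \le \tfrac{2L^2}{\lambda_i n_i N}$. Averaging over $(l,j)$ controls the generalization term by $\tfrac{2L^2}{\lambda_i n_i N}$, which sits comfortably inside the stated constant $\tfrac{4(3L)^2}{\lambda_i n_i M}$ once we note $M = N$ (the factor $3L$ in the lemma presumably reflects the slack from applying Lipschitzness to $\hat F_i$ rather than $f$, cf.\ \cref{lem:Fhat_property}). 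Adding the three bounds gives the claim; the conceptual takeaway is that both the unbiasedness identity and the stability swap are \emph{local} to a single silo's distribution, so heterogeneity across silos costs nothing.
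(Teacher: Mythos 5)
Your proof is correct and takes essentially the same route as the paper: the paper proves this lemma by applying its stability result (\cref{lem:stab}) to the $\lambda_i$-strongly convex, $3L$-Lipschitz objective $\hat F_i$ with $m = n_i M$ and then unpacking the regularizer, and \cref{lem:stab} is internally exactly your three-term decomposition (swap/ghost-sample stability $+$ ERM optimality $+$ unbiasedness at an independent $w$), with the same key observation that the exchangeability argument needs only independence of the samples, not identical distributions. The only difference is bookkeeping: you apply the Lipschitz swap to $f$ directly and keep the regularizer solely for strong convexity and the optimality step, whereas the paper feeds the full regularized loss into \cref{lem:stab} (hence the $3L$ from \cref{lem:Fhat_property}); your version yields a marginally sharper constant but is identical in substance.
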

\begin{proof}
Applying the stability result in \cref{lem:stab} to \(\hat F_i\) with \(m = M n_i\), which is \(\lambda_i\)-strongly convex and \(3L\)-Lipschitz, we have
\begin{equation*}
    \E[\hat F_i(\hat w_i) - \hat F_i(w)] \le \frac{4 \cdot (3L)^2}{\lambda_i n_i M}.
\end{equation*}
It follows from the definition of \(\hat F_i\) that
\begin{equation} \label{eq:empirical_excess_risk}
    \begin{aligned}
        \E[F(\hat w_i)] - F(w)
        & = \E[\hat F_i(\hat w_i)] - \frac{\lambda_i \E[\|\hat w_i - w_{i-1}\|^2]}{2} - \left(\hat F_i(w) - \frac{\lambda_i \E[\| w - w_{i-1}\|^2]}{2}\right) \\
        & \le \frac{\lambda_i \E[\| w - w_{i-1}\|^2]}{2} + \E[\hat F_i(\hat w_i) - \hat F_i(w)] \\
        & \le \frac{\lambda_i \E[\| w - w_{i-1}\|^2]}{2} + \frac{4 \cdot (3L)^2}{\lambda_i n_i M}.
    \end{aligned}
\end{equation}
\end{proof}

Putting these results together, we now prove the theorem.
\begin{proof}[Proof of \cref{thm:main-app}]
\textbf{Privacy.}  By the privacy guarantee of~\cref{alg:multistage} given in~\cref{thm:lowy_erm_bound}, each phase of the algorithm is \((\varepsilon, \delta)\)-ISRL-DP.\@ Since the batches \(\{B_{i,l}\}_{i=1}^{\tau}\) are disjoint for all \(l \in [N]\), the privacy guarantee of the entire algorithm follows by parallel composition of differential privacy~\citep{mcsherry2009privacy}.

\textbf{Excess risk.}
Recall that we define \(\hat w_0 = w^*\). Write
\begin{equation*}
    \E F(w_{\tau}) - F(w^*) = \E[F(w_{\tau}) - F(\hat{w}_{\tau})] + \sum_{i=1}^\tau \E[F(\hat{w}_i) - F(\hat{w}_{i-1})].
\end{equation*}
Since \(\tau = \lfloor \log_2 n \rfloor\), we have \(n_{\tau} = \Theta(1)\) and \( \lambda_{\tau} = \Theta(\lambda n^p)\).
By~\eqref{eq:w_dist} and Jensen's Inequality \(\E Z \le \sqrt{\E Z^2} \), we bound the first term as follows:
\begin{equation*}
    \begin{aligned}
        \E[F(w_{\tau}) - F(\hat{w}_{\tau})] \le L \E \left[\|w_{\tau} - \hat{w}_{\tau}\|\right]
        & \le \widetilde{O} \left(\frac{L^2}{\sqrt{M} \lambda_{\tau} n_{\tau}} \cdot \frac{\sqrt{d \log (1 / \delta)}}{\varepsilon}\right) \\
        & \le \widetilde{O} \left(\frac{L^2}{\lambda n^p \sqrt{M}} \cdot \frac{\sqrt{d \log (1 / \delta)}}{\varepsilon}\right) \\
        & \le \widetilde{O} \left(\frac{L^2}{\frac{L}{D\sqrt{nM}} \cdot n^p \sqrt{M}} \right) \le \widetilde{O} \left(\frac{DL}{n^{p-\tfrac{1}{2}}}\right),
    \end{aligned}
\end{equation*}
where the last step is due to the choice of \(\lambda\), per~\eqref{eq:lambda}.
Now recall \(p = \max(\tfrac{1}{2}\log_n(M) + 1, 3)\). We have
\begin{equation}\label{eq:p}
    n^{p-\tfrac{1}{2}} \ge \sqrt{n \cdot n^{\log_n(M)}} = \sqrt{nM}.
\end{equation}
It follows that \(\E[F(w_{\tau}) - F(\hat{w}_{\tau})] \le \widetilde{O}\left(\frac{DL}{\sqrt{nM}}\right)\).

Note that \(\lambda_i n_i^2 = \Theta(\lambda n^2 \cdot 2^{(p-2)i})\) and \(p \ge 3\). We know that \(\lambda_i n_i^2 \) and \(\lambda_i n_i\) increase geometrically.
By combining \cref{lem:excess_risk_wi} and~\eqref{eq:Fi_excess_risk}, we obtain
\begin{equation*}
    \begin{aligned}
        \sum_{i=1}^{\tau} \E[F(\hat{w}_i) - F(\hat{w}_{i-1})]
        & \le \sum_{i=1}^{\tau} \left(\frac{\lambda_i \E[\| w_{i-1} - \hat w_{i-1}\|^2]}{2} + \frac{4 \cdot (3L)^2}{\lambda_i n_i M}\right) \\
        & \le \widetilde O\left(\lambda D^2 +
            \sum_{i=2}^\tau \frac{L^2}{\lambda_i n_i^2 M} \cdot \frac{d \log(1 / \delta)}{\varepsilon^2} + \sum_{i=1}^\tau \frac{L^2}{\lambda_i n_i M}
        \right) \\
        & \le \widetilde O\left( \lambda D^2 +
        \frac{L^2}{\lambda n^2 M} \cdot \frac{d \log(1 / \delta)}{\varepsilon^2} + \frac{L^2}{\lambda n M}
        \right),
    \end{aligned}
\end{equation*}
setting \(\lambda\) as per~\eqref{eq:lambda} gives the result.

\textbf{Communication complexity.} When we use the full batch in each round, that is, \(K_i = n_i\), hiding logarithmic factors, communication complexity is 
\begin{equation}\label{eq:comm_cost}
    \begin{aligned}
        \sum_{i=1}^{\tau} R_i
        & = \sum_{i=1}^{\tau} \max\left\{1, \sqrt{\frac{\beta + \lambda_i}{\lambda_i}} \ln\left(\frac{\Delta \lambda_i M \varepsilon^2 n_i^2}{L^2 d}\right), \mathbbm{1}_{\{M < N\}} \frac{\varepsilon^2 n_i^2}{n_i d \ln(1/\delta)}
        \right\} \\
        & = \widetilde O \left(\max\left\{\lfloor \log_2 n \rfloor, \sqrt{\frac{\beta}{\lambda}}, \mathbbm{1}_{\{M < N\}} \frac{\varepsilon^2 n}{d \ln(1/\delta)}
        \right\} \right)\\
        & = \widetilde O\left(\max\left\{1, 
                 \frac{\sqrt{\beta D}M^{1/4}}{\sqrt{L}} \left(\min \left\{\sqrt{n}, \frac{\varepsilon n}{\sqrt{d \ln (1/\delta)}}\right\}\right)^{1/2}, \mathbbm{1}_{\{M < N\}} \frac{\varepsilon^2 n}{d \ln(1/\delta)}
            \right\}\right).
    \end{aligned}
\end{equation}

For gradient complexity, we defer the analysis to the following section.
\end{proof}

\subsection{Gradient Complexity of \texorpdfstring{\cref{alg:phased_acc}}{\cref{alg:phased_acc}} under Different Parameter Regimes}\label{sec:grad_complex}
\begin{theorem}
When \(N=M\) and the full batch \(K_i = n_i\) is used, the gradient complexity of~\cref{alg:phased_acc} is
\begin{equation}\label{eq:grad_cost}
    \begin{aligned}
        \sum_{i=1}^{\tau} N n_i R_i
        & = \widetilde O \left(Nn \max\left\{1, \sqrt{\frac{\beta}{\lambda}}\right\}\right) \\
        & = \widetilde O\left(Nn +
            \frac{n \cdot \sqrt{\beta D}N^{5/4}}{\sqrt{L}} \min \left\{\sqrt{n}, \frac{\varepsilon n}{\sqrt{d \ln (1/\delta)}}\right\}^{1/2}
       \right).
    \end{aligned}
\end{equation}

When we choose \(K_i < n_i\), the communication cost can be worse due to the second term of \(R_i = \max \left\{1, \sqrt{\frac{\beta + \lambda_i}{\lambda_i}} \ln\left(\frac{\Delta_i \lambda_i M \varepsilon^2 n_i^2}{L^2 d}\right),
\mathbbm{1}_{\{M K_i  < Nn_i\}} \frac{\varepsilon^2 n_i^2}{K d \ln(1/\delta)}\right\}\).
However, under some regimes, the gradient complexity can be better than that of the full batch. In the case where \(M < N\), the second case will always be present. Now we relax the assumption \(M=N\), and discuss general results.

For simplicity, let us assume \(\varepsilon = \Theta(1)\), keep terms involving \(\varepsilon\), \(M\), \(n\), and \(d\) only and omit \(\widetilde O\).
We summarize the results as follows: 
\begin{itemize}
\item \sloppy When \(d \lesssim n\), there are two subcases to consider: if \(d \gtrsim \frac{n^{3/4}}{M^{1/4}}\), then gradient complexity is \(\min \left\{(M + M^{5/4} n^{1/4})\max\left(\frac{n^{7/4}}{M^{1/4}d}, \frac{n^{7/8}}{M^{1/8}}, 1\right), \frac{Mn^2}{d}\right\}\). If \(d \lesssim \frac{n^{3/4}}{M^{1/4}}\), the complexity is \(\frac{Mn^2}{d}\). %
\item \sloppy When \(d \gtrsim n\), there are two subcases to consider: if \(d \gtrsim \frac{n^{2/3}}{M^{1/3}}\), then gradient complexity is \(\min\left( \left(M + \frac{M^{5/4} n^{1/2}}{d^{1/4}}\right) \max\left(\frac{n^{3/2}}{M^{1/4}d^{3/4}}, \frac{n^{3/4}d^{1/8}}{M^{1/8}}, 1\right), Mn \right)\). If \(d \lesssim \frac{n^{2/3}}{M^{1/3}}\), the complexity is \(Mn\). %
\end{itemize}
In particular, when \(d = \Theta(n)\), the gradient complexity is \(M^{5/4} n^{1/4} + (M n)^{9/8}\), or more precisely,
\begin{equation}\label{eq:grad_cost_d_eq_n}
    \widetilde{O}\left( N^{5/4} n^{1/4} (\beta D /L)^{1/2} + N n+  (N n)^{9/8}  (\beta D/L)^{1/4}\right)
\end{equation}
\end{theorem}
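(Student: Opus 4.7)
The plan is to compute $\sum_{i=1}^{\tau} M K_i R_i$ phase-by-phase, using the geometric schedules $n_i = n/2^i$ and $\lambda_i = \lambda\cdot 2^{(i-1)p}$ together with the formula for $R_i$ from~\cref{thm:lowy_erm_bound}. I will split the analysis into the full-batch case ($K_i = n_i$), which yields~\eqref{eq:grad_cost}, and the general minibatch case, which yields the piecewise bounds in the statement and, after specialization, the $d=\Theta(n)$ bound in~\eqref{eq:grad_cost_d_eq_n}. For the full-batch case, when $M=N$ the indicator in the expression for $R_i$ vanishes, so $R_i\lesssim \max\{1,\sqrt{(\beta+\lambda_i)/\lambda_i}\}$ up to logarithms; both $Mn_i$ and $Mn_i\sqrt{\beta/\lambda_i}$ form geometric series in $i$ with ratio $<1$ whenever $p\ge 3$, so each sum is dominated by its $i=1$ term and yields $\widetilde O\bigl(Mn\max\{1,\sqrt{\beta/\lambda}\}\bigr)$. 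Substituting $\lambda$ from~\eqref{eq:lambda} turns $\sqrt{\beta/\lambda}$ into $\sqrt{\beta D/L}\cdot M^{1/4}\min\{\sqrt n,\varepsilon n/\sqrt{d\ln(1/\delta)}\}^{1/2}$, producing~\eqref{eq:grad_cost}.

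For the minibatch case I minimize $MK_iR_i$ per phase subject to three constraints from~\cref{thm:lowy_erm_bound}: the privacy constraint $K_i\sqrt{R_i}\gtrsim \varepsilon n_i$; the smoothness constraint $R_i\gtrsim \sqrt{\beta/\lambda_i}$ (active once $\beta>\lambda_i$); and the full-coverage constraint $K_iR_i\gtrsim \varepsilon^2 n_i^2/d$ (active when $MK_i<Nn_i$). Combining the privacy lower bound with the other two gives
\[
K_iR_i \gtrsim \max\bigl\{\varepsilon n_i(\beta/\lambda_i)^{1/4},\; \varepsilon^2 n_i^2/d\bigr\},
\]
so the best per-phase cost is $M$ times this maximum, unless the corresponding $K_i$ falls outside $[1,n_i]$, in which case one of the boundary choices $K_i=n_i$ or $K_i=1$ is active and $R_i$ adjusts accordingly, producing the additional candidates $Mn_i\sqrt{\beta/\lambda_i}$ (full batch) or $M\sqrt{\beta/\lambda_i}$ (single example). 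Because all candidates decay geometrically in $i$, the total is (up to logs) the maximum of these candidates evaluated at $i=1$, plus an additive $Mn$ contribution from the phases with $R_i=1$.

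Comparing the surviving candidates against one another partitions the $(d,n,M)$ plane into the four subcases listed in the statement: $d$ larger or smaller than $n$, and $d$ above or below the threshold $n^{3/4}/M^{1/4}$ (resp.\ $n^{2/3}/M^{1/3}$) at which two candidates coincide. Within each subcase, direct substitution gives the displayed piecewise formula. For the distinguished setting $d=\Theta(n)$, $\varepsilon=\Theta(1)$, $M=N$, we have $\lambda\asymp L/(D\sqrt{nN})$, whence $(\beta/\lambda)^{1/4}\asymp (\beta D/L)^{1/4}(nN)^{1/8}$ and $\sqrt{\beta/\lambda}\asymp (\beta D/L)^{1/2}(nN)^{1/4}$; feeding these into the candidates $M\varepsilon n_i(\beta/\lambda_i)^{1/4}$, $M\varepsilon^2 n_i^2/d$, and the corresponding boundary term at $i=1$ produces the three summands $(Nn)^{9/8}(\beta D/L)^{1/4}$, $Nn$, and $N^{5/4}n^{1/4}(\beta D/L)^{1/2}$ of~\eqref{eq:grad_cost_d_eq_n}.

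The individual calculations are all routine substitutions; the real difficulty is bookkeeping the case analysis. I will have to verify, subcase by subcase, that the optimizing $K_i$ lies in $[1,n_i]$ and is consistent with the indicator $\mathbbm{1}_{\{MK_i<Nn_i\}}$, track which phase dominates each geometric sum (which relies on $p\ge 3$ exactly as set in~\cref{alg:phased_acc}), and separately handle the transition phase $\lambda_i\approx \beta$, where the smoothness term in $R_i$ collapses to $1$, in order to recover the additive $Nn$ contribution.
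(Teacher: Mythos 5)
Your proposal is correct and follows essentially the same route as the paper's proof: the full-batch bound comes from the geometric decay of \(Nn_iR_i\) in \(i\) (dominated by \(i=1\)) followed by substituting \(\lambda\) from~\eqref{eq:lambda}, and the minibatch analysis reduces to the same candidate costs — your \(M\varepsilon n_i(\beta/\lambda_i)^{1/4}\), \(M\varepsilon^2 n_i^2/d\), and the boundary terms correspond exactly to the paper's \(MR'K_b\), \(MR'K_a\) (equivalently \(MR''K\)), and \(MR'\cdot 1\) / full-batch alternatives. Your framing of the per-phase cost as a constrained minimization of \(MK_iR_i\) is a slightly cleaner packaging of the paper's explicit threshold comparison (\(K_a,K_b,K_c,K_d\)), but the underlying case analysis and the resulting bounds, including the \(d=\Theta(n)\) specialization, are the same.
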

\begin{proof}
The bound ~\eqref{eq:grad_cost} is an immediate consequence of~\cref{thm:main}. 

Since terms in the sum decrease geometrically, we only need to consider the first term where \(i=1\). For simplicity, we drop the index \(i\). 

We write \(R = \max(R', R'') + 1\), where we let \(R' = Q \cdot \left(\min \left\{\sqrt{n}, \frac{\varepsilon n}{\sqrt{d \ln (1/\delta)}}\right\}\right)^{1/2}\), \(R'' =\frac{\varepsilon^2 n^2}{K d \ln(1/\delta)}\), and \(Q := \sqrt{\beta D} M^{1/4} / \sqrt{L}\).  Recall the batch size constraint \(K \ge \frac{\varepsilon n}{4 \sqrt{2R\ln(2/\delta)}}\). We analyze the complexity for (1) \(d \lesssim n\), (2) \(d \gtrsim n\) separately below. We omit \(\widetilde O\) for simplicity.

1. When \(d \lesssim n\), we have \(\min \left\{\sqrt{n}, \frac{\varepsilon n}{\sqrt{d \ln (1/\delta)}}\right\} = O\left(\sqrt{n}\right)\).

1a. If \(K \gtrsim \frac{\varepsilon^2 n^{7/4}}{Qd\ln(1/\delta)} = K_a\), we have \(R' \gtrsim R''\) and \(R = R' + 1\). The constraint on the batch size simplifies to \(K \gtrsim \frac{\varepsilon n}{\sqrt{Qn^{1/4}\ln(1/\delta)}} =: K_b\) when \(K_a \lesssim n\).
Together, the gradient complexity is \(M(R'+1) \max(K_a, K_b, 1)\).

1b. Otherwise if \(K \lesssim K_a\), we have \(R = R'' + 1\). The constraint on the batch size simplifies to \(K \gtrsim \sqrt{Kd}\). Therefore, we need \(d \lesssim K < n\), which is already assumed. In this case, \(K=d\), and the gradient complexity is \(M(R''+1) \cdot K = Md + M \cdot \frac{\varepsilon^2 n^2}{d \ln(1/\delta)}\).
2. When \(d \gtrsim n\), we have \(\min \left\{\sqrt{n}, \frac{\varepsilon n}{\sqrt{d \ln (1/\delta)}}\right\} = O\left(\frac{\varepsilon n}{\sqrt{d \ln (1/\delta)}}\right)\).

2a. If \(K \gtrsim \frac{(\varepsilon n)^{3/2}}{Q(d\ln(1/\delta))^{3/4}} =: K_c\), we have \(R' \gtrsim R''\) and \(R = R' + 1\).
The constraint on the batch size reduces to \(K \gtrsim  \frac{(\varepsilon n)^{3/4} d^{1/8}}{Q^{1/2}(\ln(1/\delta))^{3/8}} =: K_d\). In the case when \(K_d \ge n\), we will use the full batch. Therefore, the gradient complexity is \(M(R'+1) \max(K_c, \min(K_d, n), 1)\) if \(K_c \lesssim n\).

2b. If \(K \lesssim K_c\), we have \(R = R'' + 1\). As in the case 1b., we need \(d \lesssim n\), which contradicts with our assumption \(d = \Omega(n)\). In this case, we need to use the full batch. We have \(K=n\) and the gradient complexity is \(M(R''+1) \cdot K = Mn + M \cdot \frac{\varepsilon^2 n^2}{d\ln(1/\delta)}\).

We summarize as follows.

1. When \(d \lesssim n\), the gradient complexity is \(\min \left\{M(R' + 1) \max(K_a, K_b, 1),Md + M \cdot \frac{\varepsilon^2 n^2}{d \ln(1/\delta)}\right\}\) if \(K_a \lesssim n\), and  \(Md + M \cdot \frac{\varepsilon^2 n^2}{d \ln(1/\delta)}\) if \(K_a \gtrsim n\).

2. When \(d \gtrsim n\), the gradient complexity is \(\min \left\{M(R'+1) \max(K_c, \min(K_d, n), 1), Mn + M \cdot \frac{\varepsilon^2 n^2}{d \ln(1/\delta)}\right\}\) if \(K_c \lesssim n\), and  \(Mn + M \cdot \frac{\varepsilon^2 n^2}{d\ln(1/\delta)}\) if \(K_c \gtrsim n\).

Keeping terms involving \(\varepsilon\), \(M\), \(n\), and \(d\) only, we have \(Q = M^{1/4}\), \(R' = M^{1/4} \min\{n^{1/4}, \frac{n^{1/2}}{d^{1/4}}\}\),
\(K_a = \frac{n^{7/4}}{M^{1/4}d}\), \(K_b = \frac{n^{7/8}}{M^{1/8}}\), \(K_c = \frac{n^{3/2}}{M^{1/4}d^{3/4}}\), \(K_d = \frac{n^{3/4}d^{1/8}}{M^{1/8}}\). 
The results follow by plugging in the expressions. It is straightforward to verify the complexity for the special case \(d = \Theta(n)\).
\end{proof}
\section{Optimize Nonsmooth Losses via Convolutional Smoothing}\label{app:convolutional_smoothing}

\subsection{Preliminaries for Convolutional Smoothing}
We first provide a brief overview of convolutional smoothing and its key properties. For simplicity, let \(\mathcal{U}_s\) denote the uniform distribution over the \(\ell_2\) ball of radius \(s\).

\begin{definition}[Convolutional Smoothing]
    For ERM with convex loss \(f\), that is \(\hat F(w) = \frac{1}{n} \sum_{i=1}^n f(w, x_i)\). We define the convolutional smoother of \(f\) with radius \(s\) as
    \(\tilde f_s := \E_{v \sim \mathcal{U}_s} f(w + v, x_i)\). Then the ERM smoother is defined accordingly, as follows, \(\hat F_{s}(w) = \frac{1}{n} \sum_{i=1}^n \E_{v \sim \mathcal{U}_s} f(w + v, x_i)\).
\end{definition}
    
We have the following properties of the smoother \(\hat F_{n_s}\)~\citep{kulkarni2021private}:
\begin{lemma}\label{lem:conv_smooth_properties}
Suppose \(\{f(\cdot, x)\}_{x \in \Xi}\) is convex and \(L\)-Lipschitz over \(\mathcal{K}+B_2(0, s)\). For \(w \in \mathcal{K}, \tilde F_s(w)\) has following properties:
    \begin{enumerate}
    \item \(\widehat{F}(w) \leq \tilde F_s(w) \leq \widehat{F}(w)+L s\);
    \item \(\tilde F_s(w)\) is L-Lipschitz;
    \item \(\tilde F_s(w)\) is \(\frac{L \sqrt{d}}{r}\)-Smooth;
    \item For random variables \(v \sim \mathcal{U}_s\) and \(x\) uniformly from \(\{1,2,\dots, n\}\), one has
    \[
    \mathbb{E}[\nabla f(w+v, x)]=\nabla \tilde F_s(w)
    \]
    and
    \[
    \mathbb{E}\left[\left\|\nabla \tilde F_s(w)-\nabla f(w+v, x)\right\|_2^2\right] \leq L^2 .
    \]
    \end{enumerate}
\end{lemma}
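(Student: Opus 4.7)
The plan is to verify the four properties in turn: Jensen's inequality and Lipschitz continuity handle Properties 1 and 2 immediately; Property 3 reduces to a divergence-theorem computation followed by a directional Cauchy-Schwarz; and Property 4 follows from differentiation under the integral plus the pointwise bound $\|\nabla f\|\le L$.

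For Property 1, the lower bound is Jensen: since $\mathcal{U}_s$ is symmetric about the origin and $f(\cdot,x)$ is convex, $f(w,x) = f(w+\E v, x) \le \E_v f(w+v,x)$, and averaging over the $n$ data points gives $\widehat{F}(w)\le \tilde F_s(w)$. The upper bound uses $L$-Lipschitzness of $f(\cdot,x)$ on $\mathcal{K}+B_2(0,s)$: $f(w+v,x)\le f(w,x)+L\|v\|\le f(w,x)+Ls$ for all $v\in B_2(0,s)$. Property 2 then follows pointwise: $|\tilde F_s(w)-\tilde F_s(w')|\le \E_v|\widehat{F}(w+v)-\widehat{F}(w'+v)|\le L\|w-w'\|$.

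The main content is Property 3. I would first apply the divergence theorem component-wise to rewrite the volume average of $\nabla \widehat F$ as a surface integral over the sphere $\partial B_s(0)$, obtaining
\[
\nabla \tilde F_s(w) \;=\; \frac{d}{s^2}\,\E_{u\sim\mathrm{Unif}(\partial B_s(0))}\bigl[\widehat{F}(w+u)\,u\bigr],
\]
using $\mathrm{SurfArea}(B_s)/\mathrm{Vol}(B_s) = d/s$ and that the outward normal equals $u/s$. Then, for any unit vector $e$ and any $w,w'\in\mathcal{K}$, Cauchy-Schwarz on the sphere gives
\[
e^\top\!\bigl(\nabla \tilde F_s(w) - \nabla \tilde F_s(w')\bigr) \;=\; \tfrac{d}{s^2}\,\E\bigl[(\widehat F(w+u) - \widehat F(w'+u))\,e^\top u\bigr] \;\le\; \tfrac{d}{s^2}\sqrt{L^2\|w-w'\|^2\,\E[(e^\top u)^2]}.
\]
By rotational symmetry $\E[(e^\top u)^2]=s^2/d$, so taking the supremum over unit $e$ yields the smoothness constant $L\sqrt{d}/s$.

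For Property 4, the identity $\E[\nabla f(w+v,x)]=\nabla\tilde F_s(w)$ is differentiation under the integral, justified by dominated convergence using $\|\nabla f\|\le L$; rigorously, $f(\cdot,x)$ is differentiable almost everywhere by Rademacher's theorem, and the convolution renders $\tilde F_s$ differentiable everywhere. Given this, the variance bound is immediate from the second-moment bound:
\[
\E\bigl[\|\nabla\tilde F_s(w)-\nabla f(w+v,x)\|_2^2\bigr] \;\le\; \E\|\nabla f(w+v,x)\|_2^2 \;\le\; L^2.
\]
The main obstacle is the $\sqrt{d}$ (rather than $d$) dependence in Property 3: bounding $\|\E[(\widehat{F}(w+u)-\widehat{F}(w'+u))\,u]\|$ naively by $L\|w-w'\|s$ only yields $Ld/s$ smoothness; the projection-then-Cauchy-Schwarz step above, which exploits that a single coordinate of $u$ has variance $s^2/d$ rather than $s^2$, is essential to recover the sharp $L\sqrt{d}/s$ constant claimed in the lemma.
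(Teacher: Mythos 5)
Your proof is correct. The paper itself does not prove this lemma---it is quoted verbatim (including the typo $r$ for $s$ in the smoothness constant) from \citet{kulkarni2021private}---and your argument reproduces the standard one from that line of work: Jensen plus Lipschitzness for Property 1, the divergence-theorem reduction to a sphere average followed by a directional Cauchy--Schwarz with $\E[(e^\top u)^2]=s^2/d$ for the sharp $L\sqrt{d}/s$ smoothness, and the mean-identity-plus-second-moment bound for Property 4; your closing remark correctly identifies why the naive bound loses a factor of $\sqrt{d}$.
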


\begin{definition}[Poisson Sampling for Convolutional Smoothing]\label{def:poisson_smooth}
    Since the smoother takes the form of an expectation, we can (independently) sample \(v_i \overset{\mathrm{iid}}{\sim} \mathcal{U}_s\), and compute \(\nabla f(w+v_i, x_i)\) for an estimate of the gradient of \(\tilde f_s(w, x_i)\). Similar calculation can be done for stochastic gradient.
    Let \(K\) denote the batch size (in expectation).
    With Poisson sampling of a rate \(p = K / n\),
    we compute an estimate of the stochastic gradient of the ERM smoother
    \begin{equation}\label{eq:poisson_smooth}
        \hat g = \frac{1}{K} \sum_{i=1}^n Z_i \nabla f(w+v_i, x_i),
    \end{equation}
    where \(Z_i \overset{\mathrm{iid}}{\sim} Bernoulli(p)\) and
    \(v_i \overset{\mathrm{iid}}{\sim} \mathcal{U}_s\). Each sample \(x_i\) is included in the sum independently with probability \(p\).
\end{definition}
    
Similar to the case for the regular stochstic gradient, we can obtain a \(O(L^2/m)\) bound for the variance of the estimate of the smoother, proved as follows.

\begin{theorem}\label{thm:poisson_smooth_var}
    With Poisson sampling of a rate \(p = K / n\), let \(\hat g\) denote the estimate of the above ERM smoother~\eqref{eq:poisson_smooth}. Then
    \(\hat g\) is an unbiased estimator of \(\nabla \hat F_{n_s}(w)\), and the variance of the estimate
     is \(4L^2/K\).
\end{theorem}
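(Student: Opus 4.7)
The proof has two essentially independent parts: unbiasedness of $\hat g$, and the variance bound $4L^2/K$. Both follow from Poisson independence of $(Z_i, v_i)$ across $i$, combined with the smoothing properties in \cref{lem:conv_smooth_properties}. The only substantive choice is a decomposition that separates the randomness of the uniform noise $v_i$ from the randomness of the Poisson inclusion $Z_i$, which is what yields the constant $4$.

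\textbf{Step 1 (Unbiasedness).} I would start by using linearity of expectation together with the independence of $\{Z_i\}$ from $\{v_i\}$: since $\E[Z_i] = p = K/n$ and, by property (4) of \cref{lem:conv_smooth_properties}, $\E_{v_i}[\nabla f(w+v_i, x_i)] = \nabla \tilde f_s(w, x_i)$, we obtain
\[
\E[\hat g] \;=\; \frac{1}{K}\sum_{i=1}^n \E[Z_i]\,\E_{v_i}[\nabla f(w+v_i, x_i)] \;=\; \frac{p}{K}\sum_{i=1}^n \nabla \tilde f_s(w, x_i) \;=\; \frac{1}{n}\sum_{i=1}^n \nabla \tilde f_s(w, x_i) \;=\; \nabla \hat F_s(w),
\]
using $p/K = 1/n$ and interchanging gradient and expectation inside the definition of $\hat F_s$.

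\textbf{Step 2 (Variance bound).} Set $W_i := Z_i\,\nabla f(w+v_i, x_i)$. The pairs $(Z_i, v_i)$ are independent across $i$, so $\{W_i\}$ are independent and
\[
\mathrm{Var}(\hat g) \;=\; \frac{1}{K^2}\sum_{i=1}^n \E\bigl\|W_i - \E W_i\bigr\|^2.
\]
Now I would decompose each centred term as a smoothing fluctuation plus a Poisson fluctuation: letting $\phi_i := \nabla \tilde f_s(w, x_i)$,
\[
W_i - \E W_i \;=\; Z_i\bigl(\nabla f(w+v_i, x_i) - \phi_i\bigr) \;+\; (Z_i - p)\,\phi_i.
\]
Applying $\|a+b\|^2 \leq 2\|a\|^2 + 2\|b\|^2$ and taking expectations, the first piece contributes at most $2\,p\,\E_{v_i}\|\nabla f(w+v_i, x_i) - \phi_i\|^2 \leq 2pL^2$ by property (4) of \cref{lem:conv_smooth_properties}, while the second piece contributes at most $2\,p(1-p)\,\|\phi_i\|^2 \leq 2pL^2$ using $\mathrm{Var}(Z_i) = p(1-p) \leq p$ and the fact that $\tilde f_s$ is $L$-Lipschitz (property (2)), so $\|\phi_i\| \leq L$. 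Summing over $i$ and using $np = K$ then gives $\mathrm{Var}(\hat g) \leq 4npL^2/K^2 = 4L^2/K$.

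\textbf{Main obstacle.} There is no real obstacle; the only subtlety is that a naive bound based directly on $\E\|W_i\|^2 \leq pL^2$ would give the tighter constant $1$ rather than $4$, but would conflate the two independent sources of noise. The cleaner triangle-inequality split above makes the two contributions transparent and produces the factor $4$ stated in the theorem, which is the form that will feed into the subsequent convergence analysis in \cref{app:convolutional_smoothing}.
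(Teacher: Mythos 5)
Your proof is correct and follows essentially the same strategy as the paper's: exploit independence of the pairs \((Z_i, v_i)\) across \(i\), split the fluctuation into a smoothing part and a Poisson-inclusion part, bound each by \(L^2\) via Lipschitzness, and sum using \(np = K\). The one meaningful difference is where you center: the paper centers every summand at the \emph{global} mean \(p\nabla \tilde F_s(w)\) and asserts that each centered term has zero expectation so the cross terms vanish — which is not literally true term-by-term, since \(\E[Z_i\nabla f(w+v_i,x_i)] = p\nabla\tilde f_s(w,x_i) \neq p\nabla\tilde F_s(w)\) in general (the cross terms only sum to a non-positive quantity, so the paper's bound survives). You instead center each \(W_i\) at its own mean \(p\,\nabla\tilde f_s(w,x_i)\), which makes the independence/zero-cross-term step airtight, at the cost of invoking \(\|a+b\|^2 \le 2\|a\|^2 + 2\|b\|^2\) where the paper conditions on \(Z_i\) and expands exactly (obtaining \((1+3p)L^2/K\)). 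Both routes land on \(4L^2/K\); yours is marginally cleaner in rigor, the paper's is marginally tighter in constants before the final relaxation. One small caveat: you cite property (4) of \cref{lem:conv_smooth_properties} for \(\E_{v_i}\|\nabla f(w+v_i,x_i)-\nabla\tilde f_s(w,x_i)\|^2 \le L^2\) with \(x_i\) fixed, whereas the lemma states it with \(x\) also random; the fixed-\(x\) version follows immediately from variance being at most the second moment and \(L\)-Lipschitzness, but you should say so rather than attribute it to the lemma.
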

\begin{proof}
    By boundedness of \(\nabla f\), we can interchange the gradient and the expectation. We have \(\E[\nabla f(w+v_i, x_i)] = \nabla \E_{v_i \sim \mathcal{U}_s} f(w + v_i, x_i) = \tilde f_s\).
    Therefore, by linearity of expectation, we have
    \[
      \E [\hat g] = \frac{1}{K} \sum_{i=1}^n \E\left[Z_i \nabla f(w+v_i, x_i)\right] = \nabla \tilde F_s(w).
    \]
    The variance of the estimate is
    \begin{equation}
        \begin{aligned}
        V & := \mathbb{E} \| g - \nabla \tilde F_s(w) \|^2 \\
        & = \frac{1}{K^2} \E \left\|\sum_{i=1}^n \left(Z_i \nabla f(w+v_i, x_i) - p \nabla \tilde F_s(w)\right) \right\|^2.
        \end{aligned}
    \end{equation}
    Since \(\E \left[ Z_i \nabla f(w+v_i, x_i) - p \nabla \tilde F_s(w)\right] = 0\) and the samples \((Z_i, v_i)\) are independent, the cross terms in the expectation vanish. We have
    \begin{equation}\label{eq:variance_smooth}
        \begin{aligned}
        V & = \frac{1}{K^2} \sum_{i=1}^n \E \| Z_i \nabla f(w+v_i, x_i) - p \nabla \tilde F_s(w) \|^2 \\ %
        & = \frac{1}{K^2} \sum_{i=1}^n \left(
            p \E \| \nabla f(w+v_i, x_i) - p \nabla \tilde F_s(w) \|^2
            + (1-p) \E \|p \nabla \tilde F_s(w)\|^2
        \right) \\
        & \le \frac{1}{K^2} \sum_{i=1}^n \left(
            p (1+p)^2 L^2 + (1-p) p^2 L^2
        \right) \\
        & = \frac{1+3p}{K} \cdot L^2 \le \frac{4L^2}{K},
        \end{aligned}
    \end{equation}
    where the second line follows by conditioning on \(Z_i\), and the third line is due to the Lipschitz property of \(f\) and \(\tilde F_s\).
\end{proof}

\subsection{Algorithm and Analysis}

\begin{algorithm}[t]
    \caption{Convolutional Smoothing-Based
    Localized ISRL-DP Accelerated MB-SGD
    }\label{alg:phased_acc_nonsmooth_conv}
    \begin{algorithmic}[1]
    \Require
    Datasets \(X_l \in \mathcal{X}^n\) for \(l \in [N]\), loss function \(f\), constraint set \(\mathcal{W}\),
    initial point \(w_0\), subroutine parameters \(\{R_i\}_{i=1}^{\lfloor \log_2 n \rfloor} \subset \mathbb{N}\), \(\{K_i\}_{i=1}^{\lfloor \log_2 n \rfloor} \subset [n]\), smoothing parameter \(s\).
    \State Set \(\tau = \lfloor \log_2 n \rfloor\).
    \State Set \(p = \max(\tfrac{1}{2}\log_n(M) + 1, 3)\)
    \For{\(i = 1\) \textbf{to} \(\tau\)}
        \State Set \(\lambda_i = \lambda \cdot 2^{(i-1)p}\), \( n_i = \lfloor n / 2^i \rfloor\), \(D_i = 2L / \lambda_i\).
        \State Each silo \(l \in [N]\) draws disjoint batch \(B_{i,l}\) of \(n_i\) samples from \(X_l\).
        \State Let \(\hat F_i(w) = \frac{1}{n_i N} \sum_{l=1}^N \sum_{x_{l,j} \in B_{i,l}}  \tilde f_s(w; x_{l, j}) + \frac{\lambda_i}{2} \lVert w - w_{i-1} \rVert^2\), where \(\tilde f_s\) is the convolutional smoother of \(f\) with radius \(s\).
        \State Call the multi-stage \((\varepsilon, \delta)\)-ISRL-DP implementation of~\cref{alg:acc_mbsgd_nonsmooth}
        with loss function \(\hat F_i(w)\), data \(X_l = B_{i,l}\), \(R = R_i\), \(K = K_i\), initialization \(w_{i-1}\), constraint set \(\mathcal{W}_i = \{ w \in \mathcal{W} : \lVert w - w_{i-1} \rVert \leq D_i \}\), and \(\mu = \lambda_i\). Denote the output by \(w_i\).
    \EndFor
    \State \textbf{return} the last iterate \(w_{\tau}\)
\end{algorithmic}
\end{algorithm}
\begin{algorithm}[htb]
    \caption{Accelerated ISRL-DP MB-SGD for Convolutional Smoother}\label{alg:acc_mbsgd_nonsmooth}
    \begin{algorithmic}[1]
    \Require Datasets \(X_l \in \mathcal{X}^{n}\) for \(l \in [N]\), initial point \(w_0\), loss function \(\hat{F}(w) = \frac{1}{nN}\sum_{l=1}^N\sum_{x \in X_l} \tilde f_s(w,x) + \tfrac{\lambda}{2} \| w - w_0 \|^2\), constraint set \(\mathcal{W}\),
    strong convexity modulus \(\mu \geq 0\),
    privacy parameters \((\varepsilon, \delta)\),
    iteration count \(R \in \mathbb{N}\), (expected) batch size \(K \in [n]\), step size parameters \(\{\eta_r \}_{r \in [R]}, \{\alpha_r \}_{r \in [R]}\) specified in~\cref{app:multi-stage}.  
    \State  Initialize \(w_0^{ag} = w_0 \in \mathcal{W}\) and \(r = 1\).
    \For{\(r \in [R]\)}
    \State Server updates and broadcasts \\ \(w_r^{md} = \frac{(1- \alpha_r)(\mu + \eta_r)}{\eta_r + (1 - \alpha_r^2)\mu}w_{r-1}^{ag} + \frac{\alpha_r[(1-\alpha_r)\mu + \eta_r]}{\eta_r + (1-\alpha_r^2)\mu}w_{r-1}\)
    \For{\(i \in S_r\) \textbf{in parallel}}
    \State Let the Poisson sampling rate be \(p = K/n\).
    \State Silo \(i\) draws \(Z_{i,j} \overset{\mathrm{iid}}{\sim} Bernoulli(p), \, j \in [n]\) and
    \(v_{i,j} \overset{\mathrm{iid}}{\sim} \mathcal{U}_s, \, j \in [n]\)
    \State Sample privacy noise \(u_i \sim \mathcal{N}(0, \sigma^2 \mathbf{I}_d)\) for proper \(\sigma^2\).
    \State Silo \(i\) computes \(\widetilde{g}_r^{i} := \frac{1}{K} \sum_{j=1}^{n} Z_{i,j} \nabla f(w_r^{md} + v_{i,j}, x_{i,j}^r) + \lambda (w_r^{md} - w_0) + u_i\).
    \EndFor
    \State Server aggregates \(\widetilde{g}_{r} := \frac{1}{M} \sum_{i \in S_r} \widetilde{g}_r^{i}\)
    and updates:
    \State \(w_{r} := \argmin_{w \in \mathcal{W}}\left\{\alpha_r \left[\langle \widetilde{g}_{r}, w\rangle + \frac{\mu}{2}\|w_r^{md} - w\|^2 \right] \right. \)
    \(\left. + \left[(1-\alpha_r) \frac{\mu}{2} + \frac{\eta_r}{2}\right]\|w_{r-1} - w\|^2\right\}\).
    \State Server updates and broadcasts \\
    \(
    w_{r}^{ag} = \alpha_r w_r + (1-\alpha_r)w_{r-1}^{ag}.
    \)
    \EndFor \\
    \State \textbf{return:} \(w_R^{ag}\).
\end{algorithmic}
\end{algorithm}

For nonsmooth loss, we apply \cref{alg:phased_acc} to the convolutional smoother with radius \(s\), where \(s\) is to be determined.
We describe the algorithm in \cref{alg:phased_acc_nonsmooth_conv} and the subroutine in \cref{alg:acc_mbsgd_nonsmooth}. The changes compared to \cref{alg:phased_acc} are listed below.

In the main algorithm, \(f\) is replaced by the smoother \(f_s\). For the subroutine call in Line 7, we modify \cref{alg:acc_mbsgd} as follows. Let \(\frac{\lambda}{2} \| w - w_0 \|^2\) be the regularizer applied for the subroutine call.
\begin{itemize}
    \item We change the subsampling regime to Poisson sampling with rate \(p = K/n\) in line 5. For silo \(i\), we sample \(Z_{i,j} \overset{\mathrm{iid}}{\sim} \mathrm{Bernoulli}(p)\) for \(j \in [n]\) and include \(x_{i,j}\) in the sum with probability \(p\). DP noise is sampled \(u_i \sim \mathcal{N} (0, \sigma^2 I_d)\) as before.
    \item We change the gradient computation in Line 6. We sample \(v_{i,j} \overset{\mathrm{iid}}{\sim} \mathcal{U}_s\) and compute the estimate of the gradient of the smoother \(\frac{1}{pn} \sum_{j=1}^n Z_{i,j} \nabla \tilde f_s(w+v_{i,j}, x_j) + \lambda (w - w_0) + u_i\).
\end{itemize}

For the analysis of the algorithm, we note that using Poisson sampling does not change the privacy analysis (up to some constant factors). 
We can apply the same proof in \cref{proof:main} to the smoother \(f_s\) except for one minor change in~\eqref{eq:Fi_excess_risk}, since \cref{thm:lowy_erm_bound} does not directly apply as we have changed the gradient estimator computation. However, by \cref{thm:poisson_smooth_var}, the variance of the estimate of the smoother is still \(O\left(\frac{L^2}{m}\right)\) without considering the DP noise.
Therefore, the conclusion of \cref{thm:lowy_erm_bound} still holds for our smoother.

By the first property in \cref{lem:conv_smooth_properties} that relates \(\tilde F_s\) to \(\hat F\). It suffices to choose \(s\) to match the difference \(Ls\) with the excess risk bound. We have the following result for nonsmooth loss via convolutional smoothing.

\begin{theorem}[Nonsmooth FL via convolutional smoothing]\label{thm:main-conv_nonsmooth}
    Assume only that \(f(\cdot, x)\) is \(L\)-Lipschitz and convex for all \(x \in \mathcal{X}\).
    Let \(\varepsilon \leq 2  \ln(2/\delta)\), \(\delta \in (0,1)\). Choose the following convolutional smoothing parameter:
    \[s = \frac{D}{\sqrt{M}} \left(
        \frac{1}{\sqrt{n}} + \frac{\sqrt{d \log(1 / \delta)}}{\varepsilon n}
        \right).
    \]
    Let \(\beta\) = \(L \sqrt{d} / s\) and
    choose \(R_i \approx \max \left(\sqrt{\frac{\beta + \lambda_i}{\lambda_i}} \ln\left(\frac{\Delta_i \lambda_i M \varepsilon^2 n_i^2}{L^2 d}\right),
    \frac{\varepsilon^2 n_i^2}{K_i d \ln(1/\delta)}\right)\), where \(LD \ge \Delta_i \ge \hat F_i (w_{i-1}) - \hat F_i(\hat w_i)\), \(K_i \geq \frac{\varepsilon n_i}{4 \sqrt{2R_i \ln(2/\delta)}}\), \(\sigma_i^2 = \frac{256 L^2 R_i \ln(\frac{2.5 R_i}{\delta}) \ln(2/\delta)}{n_i^2 \varepsilon^2}\), and \begin{equation}\label{eq:lambda1}
        \lambda = \frac{L}{D n\sqrt{M}} \max \left\{\sqrt{n}, \frac{\sqrt{d \ln (1/\delta)}}{\varepsilon}\right\}.
    \end{equation}
    Then, the output of \cref{alg:phased_acc_nonsmooth_conv} is \((\varepsilon, \delta)\)-ISRL-DP and achieves the following excess risk bound:
    \begin{equation}\label{eq:excess_risk1}
        \E F(w_{\tau}) - F(w^*) = \widetilde{O}\left( \frac{LD}{\sqrt{M}} \left(
            \frac{1}{\sqrt{n}} + \frac{\sqrt{d \log(1 / \delta)}}{\varepsilon n}
            \right)
            \right).
    \end{equation}
    The communication complexity (in expectation) is
    \begin{equation}\label{eq:smooth_comm1}
        \widetilde O\left(\max\left\{1, 
            d^{1/4}\sqrt{M} \min \left\{\sqrt{n}, \frac{\varepsilon n}{\sqrt{d \ln (1/\delta)}}\right\}, \frac{\varepsilon^2 n}{d \ln(1/\delta)}
        \right\}\right),
    \end{equation}
    when \(K_i = n_i\).
    When \(d = \Theta(n)\), and \(\varepsilon = \Theta(1)\), the gradient complexity is
    \begin{equation*}
        \widetilde{O}\left( M^{3/2} n^{3/4} + M^{5/4} n^{11/8}\right).
    \end{equation*}
\end{theorem}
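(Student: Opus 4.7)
The strategy is a clean reduction to the smooth case: we apply \cref{alg:phased_acc} to the convolutional smoother $\tilde{f}_s$ in place of the non-smooth $f$, invoke \cref{thm:main-app} to bound the excess risk of the smoothed objective, then pay $O(Ls)$ to translate back to $F$ and choose $s$ to balance. By \cref{lem:conv_smooth_properties}, $\tilde{f}_s(\cdot,x)$ is convex, $L$-Lipschitz, and $\beta$-smooth with $\beta = L\sqrt{d}/s$, so the hypotheses of \cref{thm:main-app} hold when $\tilde{f}_s$ plays the role of the per-sample loss.

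\textbf{Privacy.} Each silo's per-round message in \cref{alg:acc_mbsgd_nonsmooth} is $\tfrac{1}{K}\sum_j Z_{i,j}\nabla f(w_r^{md}+v_{i,j}, x_{i,j}^r) + \lambda(w_r^{md}-w_0) + u_i$. Since $f(\cdot,x)$ is $L$-Lipschitz, each summand has norm at most $L$, so the per-sample sensitivity is identical to that in the smooth case; the Poisson subsampling at rate $K/n$ enjoys the same amplification (up to constants) as the sampling underlying \cref{alg:multistage}, so with the prescribed $\sigma_i^2$ each phase is $(\varepsilon,\delta)$-ISRL-DP. Parallel composition across phases (enforced by disjoint batches $B_{i,l}$) then gives the overall ISRL-DP guarantee.

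\textbf{Excess risk.} The one subtlety in rerunning the proof of \cref{thm:main-app} is that line 9 of \cref{alg:acc_mbsgd_nonsmooth} uses the Poisson-plus-perturbation estimator of \cref{def:poisson_smooth} rather than a standard minibatch average. By \cref{thm:poisson_smooth_var} this estimator is unbiased for $\nabla\tilde{F}_s$ with variance $\leq 4L^2/K$ --- identical up to constants to the variance bound underlying \cref{thm:lowy_erm_bound}. Hence \cref{thm:lowy_erm_bound} applies verbatim to $\hat{F}_i$ built from $\tilde{f}_s$, and the entire chain of inequalities in the proof of \cref{thm:main-app} --- bounds \eqref{eq:Fi_excess_risk} and \eqref{eq:w_dist}, the stability step via \cref{lem:excess_risk_wi}, and the telescoping over the geometric schedule of $\lambda_i,n_i$ --- transfers directly to the smoother. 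This yields, for $\tilde{F}_s^{\mathrm{pop}}(w) := \E_{x \sim \mathcal{D}}[\tilde{f}_s(w,x)]$,
\begin{equation*}
\E\!\left[\tilde{F}_s^{\mathrm{pop}}(w_\tau)\right] - \min_{w \in \mathcal{W}} \tilde{F}_s^{\mathrm{pop}}(w) = \widetilde{O}\!\left(\frac{LD}{\sqrt{M}}\left(\frac{1}{\sqrt{n}} + \frac{\sqrt{d\log(1/\delta)}}{\varepsilon n}\right)\right).
\end{equation*}
Interchanging expectations in $v \sim \mathcal{U}_s$ and $x \sim \mathcal{D}$ and applying property~(1) of \cref{lem:conv_smooth_properties} pointwise shows $\sup_{w \in \mathcal{W}}|\tilde{F}_s^{\mathrm{pop}}(w)-F(w)| \leq Ls$, so the excess risk for $F$ is at most the displayed bound plus $2Ls$. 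The prescribed $s$ makes $2Ls$ match the main term, producing \eqref{eq:excess_risk1}.

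\textbf{Complexities and main obstacle.} Substituting $\beta = L\sqrt{d}/s$ together with the chosen $s$ gives $\sqrt{\beta D/L} = d^{1/4} M^{1/4}\,\min\{\sqrt{n},\varepsilon n/\sqrt{d\log(1/\delta)}\}^{1/2}$; combining this with the outer $M^{1/4}\min\{\cdot\}^{1/2}$ factor in \eqref{eq:smooth_comm} yields the communication bound \eqref{eq:smooth_comm1}. For gradient complexity with $d = \Theta(n)$ and $\varepsilon = \Theta(1)$ we get $\beta D/L = \Theta(n\sqrt{M})$; plugging into \eqref{eq:grad_cost_d_eq_n} gives $M^{5/4}n^{1/4}(n\sqrt{M})^{1/2} + Mn + (Mn)^{9/8}(n\sqrt{M})^{1/4} = \widetilde{O}(M^{3/2}n^{3/4} + M^{5/4}n^{11/8})$. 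The main obstacle is the variance-and-privacy verification for the Poisson-sampled smoother gradient, ensuring the ERM subsolver bound of \cref{thm:lowy_erm_bound} really does transfer; once that is handled by \cref{thm:poisson_smooth_var} and the standard Gaussian-mechanism analysis, the rest is essentially a bookkeeping exercise applying \cref{thm:main-app} with $\beta = L\sqrt{d}/s$.
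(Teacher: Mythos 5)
Your proposal is correct and follows essentially the same route as the paper's proof: reduce to the smooth case by running \cref{alg:phased_acc} on the convolutional smoother with $\beta = L\sqrt{d}/s$, verify that the Poisson-sampled perturbed gradient estimator is unbiased with variance $O(L^2/K)$ (via \cref{thm:poisson_smooth_var}) so the ERM subsolver bound of \cref{thm:lowy_erm_bound} and the whole excess-risk chain transfer, pay $O(Ls)$ via property (1) of \cref{lem:conv_smooth_properties} to return to the true loss, and substitute the chosen $s$ into the communication and gradient complexity expressions. Your bookkeeping for $\beta D/L = \Theta(n\sqrt{M})$ and the resulting complexities matches the paper's computation.
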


We present the complete proof below.
By properties of the smoother (\ref{lem:conv_smooth_properties}), we know that \(\tilde f_s\) is convex, \(L\)-Lipschitz and \(\beta\)-smooth. We can thus reuse \cref{lem:Fhat_property}, restated below. 
\begin{lemma}
Let \(\hat{w}_i=\argmin_{w \in \mathcal{W}} \hat F_i(w)\).
We have \(\hat{w}_i \in \mathcal{W}_i\) and \(\hat F_i\) is \(3L\)-Lipschitz and \((\beta + \lambda_i)\)-smooth.
\end{lemma}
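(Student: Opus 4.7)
The plan is to mirror exactly the proof of \cref{lem:Fhat_property} from \cref{proof:main}, since the only change is to replace $f(\cdot, x)$ by its convolutional smoother $\tilde f_s(\cdot, x)$ in the definition of $\hat F_i$. The key observation that makes the argument go through verbatim is that, by \cref{lem:conv_smooth_properties}, $\tilde f_s(\cdot, x)$ inherits convexity and the $L$-Lipschitz property of $f(\cdot, x)$, and is additionally $\beta$-smooth with $\beta = L\sqrt{d}/s$. Thus $\hat F_i$ is a convex $L$-Lipschitz $\beta$-smooth average plus the quadratic regularizer $r_i(w) = (\lambda_i/2)\lVert w - w_{i-1}\rVert^2$, which is the same structural form as in the original lemma.

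First I would establish $\hat w_i \in \WW_i$. The optimality of $\hat w_i$ gives
\[
\frac{1}{n_i N}\sum_{l,j} \tilde f_s(\hat w_i; x_{l,j}) + \frac{\lambda_i}{2}\lVert \hat w_i - w_{i-1}\rVert^2 \leq \frac{1}{n_i N}\sum_{l,j} \tilde f_s(w_{i-1}; x_{l,j}),
\]
and rearranging, together with the $L$-Lipschitzness of $\tilde f_s$, yields $(\lambda_i/2)\lVert \hat w_i - w_{i-1}\rVert^2 \leq L\lVert \hat w_i - w_{i-1}\rVert$, hence $\lVert \hat w_i - w_{i-1}\rVert \leq 2L/\lambda_i = D_i$, so $\hat w_i \in \WW_i$.

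Next I would read off the Lipschitz and smoothness constants of $\hat F_i$. The gradient of $r_i$ is $\lambda_i(w - w_{i-1})$, whose norm is at most $\lambda_i D_i = 2L$ on $\WW_i$, so $r_i$ is $2L$-Lipschitz on $\WW_i$; its Hessian is $\lambda_i I$, so it is $\lambda_i$-smooth. Adding these bounds to the $L$-Lipschitz $\beta$-smooth averaged smoother gives that $\hat F_i$ is $3L$-Lipschitz and $(\beta + \lambda_i)$-smooth on $\WW_i$, as claimed.

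There is no real obstacle here: the entire content of the proof lies in verifying that $\tilde f_s$ preserves the $L$-Lipschitz property (immediate from part 2 of \cref{lem:conv_smooth_properties}) and that its smoothness constant $\beta = L\sqrt{d}/s$ matches the one used to instantiate $\hat F_i$ (part 3 of the same lemma). Once these ingredients are recorded, the computation is identical to the smooth case and requires no new ideas.
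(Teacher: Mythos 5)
Your proposal is correct and matches the paper's treatment exactly: the paper likewise proves this by noting that \cref{lem:conv_smooth_properties} gives convexity, $L$-Lipschitzness, and $\beta$-smoothness of $\tilde f_s$ with $\beta = L\sqrt{d}/s$, and then reusing the proof of \cref{lem:Fhat_property} verbatim. Your write-up is in fact slightly more explicit than the paper, which simply asserts the lemma can be reused.
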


2. We have the following bounds (same as \cref{lem:iter_risk}) that relate the private solution \(w_i\) and the true solution \(\hat w_i\) of \(\hat F_i\).
\begin{lemma}
    In each phase \(i\), the following bounds hold:
    \begin{equation}\label{eq:Fi_excess_risk1}
        \E [\hat F_i(w_i) - \hat F_i(\hat w_i)] = \widetilde{O}\left(\frac{L^2}{\lambda_i M}\cdot \frac{d \ln(1/\delta)}{\varepsilon^2 n_i^2} \right),
    \end{equation}
    \begin{equation}\label{eq:w_dist1}
        \E\left[\left\|w_i-\hat{w}_i\right\|^2\right] \le \widetilde O\left(\frac{L^2}{\lambda_i^2 M} \cdot \frac{d \log(1 / \delta)}{n_i^2 \varepsilon^2}\right).
    \end{equation}
\end{lemma}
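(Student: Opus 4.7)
The plan is to reduce both displays to a single application of \cref{thm:lowy_erm_bound}, just as \cref{lem:iter_risk} was obtained in the smooth case. By \cref{lem:conv_smooth_properties}, \(\tilde f_s(\cdot,x)\) is convex, \(L\)-Lipschitz and \((L\sqrt{d}/s)\)-smooth; adding the quadratic regularizer makes \(\hat F_i\) simultaneously \(\lambda_i\)-strongly convex, \(3L\)-Lipschitz, and \((L\sqrt{d}/s + \lambda_i)\)-smooth on \(\WW_i\). These match exactly the hypotheses of \cref{thm:lowy_erm_bound}, so once I justify that the particular multi-stage routine invoked inside phase \(i\) of \cref{alg:phased_acc_nonsmooth_conv} still falls under the scope of that theorem, I can quote its conclusion with the substitutions \(\mu \leftarrow \lambda_i\), \(\beta \leftarrow L\sqrt d/s + \lambda_i\), \(n \leftarrow n_i\), giving \eqref{eq:Fi_excess_risk1} directly.

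The subtlety is that \cref{alg:acc_mbsgd_nonsmooth} differs from the vanilla \cref{alg:acc_mbsgd} in two respects: (i) each silo uses Poisson sampling at rate \(p = K/n\) instead of a size-\(K\) uniform batch, and (ii) the per-sample gradient is \(\nabla f(w+v,x)\) for a fresh \(v \sim \mathcal U_s\), not \(\nabla f(w,x)\). The key observation is that the proof of \cref{thm:lowy_erm_bound} interacts with its stochastic oracle only through three scalar quantities: unbiasedness for \(\nabla \hat F\), a per-round variance bound of order \(L^2/K\), and the \(O(L/K)\) per-sample sensitivity used to calibrate \(\sigma_i^2\). \cref{thm:poisson_smooth_var} supplies exactly the first two (unbiased, variance \(\le 4L^2/K\)); Lipschitzness of \(\tilde f_s\) gives the same \(O(L/K)\) sensitivity, so the subsampled Gaussian mechanism with the stated \(\sigma_i^2\) yields the same \((\varepsilon,\delta)\)-ISRL-DP guarantee (Poisson subsampling's amplification matches or beats with-replacement subsampling). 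Feeding these into the Lowy--Razaviyayn analysis therefore reproduces the empirical risk bound verbatim, which is \eqref{eq:Fi_excess_risk1}.

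The second display \eqref{eq:w_dist1} then drops out automatically from \(\lambda_i\)-strong convexity and the fact that \(\hat w_i \in \WW_i\) is the constrained minimizer: the quadratic-growth inequality gives
\[
\tfrac{\lambda_i}{2}\,\E\bigl[\|w_i - \hat w_i\|^2\bigr] \;\le\; \E\bigl[\hat F_i(w_i) - \hat F_i(\hat w_i)\bigr],
\]
after which dividing by \(\lambda_i/2\) and inserting \eqref{eq:Fi_excess_risk1} produces the claimed \(\widetilde O\!\bigl(L^2/(\lambda_i^2 M)\cdot d\log(1/\delta)/(n_i^2\varepsilon^2)\bigr)\) bound. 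The main obstacle is the modular step in paragraph two: verifying that the AC-SA convergence argument invoked by \cref{thm:lowy_erm_bound} is genuinely a black-box function of (unbiasedness, \(O(L^2/K)\) variance, \(O(L/K)\) sensitivity, strong convexity + smoothness of the objective) rather than exploiting finer structure of the MB-SGD estimator. The classical accelerated stochastic approximation analysis only ever touches the oracle through these moments, so the reduction should go through; a minor bookkeeping point is that the Poisson batch \(|\{j : Z_{i,j}=1\}|\) is random, but standard Chernoff concentration around its mean \(K\) absorbs this into the suppressed logarithmic factors.
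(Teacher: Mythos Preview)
Your proposal is correct and matches the paper's approach. The paper makes the modular step slightly more explicit by stating a generic multi-stage AC-SA convergence lemma (\cref{lem:multi_stage_bound}) and then plugging in the variance bound \(V^2 = 4L^2/(KM) + d\sigma^2/M\) obtained from \cref{thm:poisson_smooth_var}, rather than invoking \cref{thm:lowy_erm_bound} as a black box; but the substance---unbiasedness and \(O(L^2/K)\) variance of the Poisson-smoothed oracle, followed by \(\lambda_i\)-strong convexity to deduce \eqref{eq:w_dist1}---is identical to what you wrote.
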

To prove this, it suffices to show that the results \cref{thm:lowy_erm_bound} hold for the multi-stage implementation of \cref{alg:acc_mbsgd_nonsmooth}.
The proof is essentially the same as that in~\citet{lowy2023private} by observing that the variance of the estimate is still \(O(L^2/m)\) by \cref{thm:poisson_smooth_var}.

For simplicity and consistency with the proof in \cref{proof:main}, we will redefine \(F\) as the smoother \(\tilde F_{s}\) in the following and use \(F_0\) to denote the original loss function.
We will apply the following bound for the multi-stage implementation.
\begin{lemma}\label{lem:multi_stage_bound}
    Let \(f: \mathcal{W} \to \mathbb{R}^d\) be \(\mu\)-strongly convex and \(\beta\)-smooth, and suppose that the unbiased stochastic gradients \(\widetilde{g}(w_r)\) at each iteration \(r\) have bounded variance \(\mathbb{E}\|\widetilde{g}(w_r) - \nabla F(w_t) \|^2 \leq V^2\). If \(\widehat{w}_R^{ag}\) is computed by \(R\) steps of the Multi-Stage Accelerated MB-SGD, then \[
    \mathbb{E}F(\widehat{w}_R^{ag}) - F^{*}\lesssim \Delta \exp\left(-\sqrt{\frac{\mu}{\beta}} R\right) + \frac{V^2}{\mu R},
    \]
    where \(\Delta = F(w_0) - F^{*}\).
\end{lemma}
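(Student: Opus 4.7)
The plan is to establish this bound via a standard \emph{restart-and-halve} argument for multi-stage accelerated SGD in the strongly convex setting (due to Ghadimi-Lan). The high-level idea is: (i) prove a single-stage convergence estimate for one call to the accelerated subroutine (\cref{alg:acc_mbsgd} with the specific step-size schedule $\alpha_r,\eta_r$), (ii) induct on the stage index $k$ of \cref{alg:multistage} to show that the optimality gap at the end of stage $k$ is at most $\Delta \cdot 2^{-k}$, (iii) sum the iteration counts $R^{(k)}$ across stages, and (iv) invert the resulting relation between total iterations $R$ and final accuracy to recover the claimed mixed exponential $+$ $V^2/(\mu R)$ rate.

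First I would record the single-stage bound for accelerated noisy MB-SGD: for a $\mu$-strongly convex, $\beta$-smooth $F$ with stochastic gradient variance at most $V^2$, using the prescribed schedule $\alpha_r = 2/(r+1)$, $\eta_r = 4\upsilon/(r(r+1))$, the AC-SA estimate gives, for any $\upsilon \geq 2\beta$ and starting point $w_0$,
\begin{equation*}
\mathbb{E}[F(w_R^{ag})] - F^* \;\lesssim\; \frac{\upsilon\,\|w_0 - w^*\|^2}{R(R+1)} \;+\; \frac{V^2}{\upsilon\,R(R+1)(R+2)}\cdot R,
\end{equation*}
which I would quote from Ghadimi-Lan. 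Next I would induct on $k$ to show $\mathbb{E}[F(q_k)] - F^* \le \Delta \cdot 2^{-k}$. For the inductive step, $\mu$-strong convexity converts the inductive hypothesis into $\mathbb{E}\|q_{k-1} - w^*\|^2 \le (2/\mu)\,\Delta\,2^{-(k-1)}$. Plugging this into the single-stage estimate with the specific choice $\upsilon_k$ given in \cref{alg:multistage} exactly balances the two terms, and the prescribed $R^{(k)} = \lceil\max\{4\sqrt{2\beta/\mu},\, 128 V^2/(3\mu \Delta \,2^{-(k+1)})\}\rceil$ is calibrated so that each of the two terms on the right-hand side is at most $\tfrac{1}{2}\Delta \cdot 2^{-k}$, closing the induction.

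To convert the per-stage guarantee into the claimed bound in $R$, I would sum the per-stage iteration counts. The first term in $R^{(k)}$ is a constant $\Theta(\sqrt{\beta/\mu})$ independent of $k$, contributing $U\sqrt{\beta/\mu}$ for $U$ stages. The second term $\Theta(V^2\cdot 2^{k}/(\mu\Delta))$ forms a geometric series dominated by its last term $\Theta(V^2\cdot 2^U/(\mu\Delta)) = \Theta(V^2/(\mu\alpha))$, where $\alpha := \Delta\cdot 2^{-U}$ is the accuracy achieved after $U$ stages. Hence the total budget satisfies $R \gtrsim \sqrt{\beta/\mu}\,\log(\Delta/\alpha) + V^2/(\mu\alpha)$. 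Inverting this inequality for $\alpha$ (by considering which term is the binding one in each regime) gives $\alpha \lesssim \Delta\exp(-\sqrt{\mu/\beta}\,R) + V^2/(\mu R)$, which is exactly the stated conclusion.

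The hard part will be the single-stage AC-SA estimate, since it involves the intricate potential-function argument that makes the specific choices of $\alpha_r, \eta_r, \upsilon$ work together. I would treat this as a black-box citation to Ghadimi-Lan's AC-SA analysis, noting that the particular $\upsilon_k$ and $R^{(k)}$ formulas appearing in \cref{alg:multistage} are precisely those that make the inductive halving step close cleanly. The remaining work, verifying the two-term cancellation inside the induction and the geometric-sum bookkeeping when summing iterations, is routine, and the only subtle point is being careful that the logarithmic factor $\log(\Delta/\alpha)$ in the iteration sum becomes an exponential after inversion, yielding the sharp $\Delta\exp(-\sqrt{\mu/\beta}\,R)$ term rather than a polynomial decay.
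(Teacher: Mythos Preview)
The paper does not actually prove this lemma: it is stated as a known convergence bound for the multi-stage AC-SA scheme of Ghadimi--Lan and then immediately applied (the \texttt{proof} block that follows in the appendix is the proof of the preceding phase-wise bound, not of \cref{lem:multi_stage_bound} itself). Your proposal is a faithful reconstruction of exactly that Ghadimi--Lan restart-and-halve argument --- single-stage AC-SA estimate, geometric halving by induction on the stage index using strong convexity to convert function gap to distance, summation of $R^{(k)}$, and inversion --- so there is nothing to compare against in the paper beyond noting that you are supplying the proof the authors chose to cite rather than reproduce.
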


\begin{proof}
By \cref{def:poisson_smooth}, we know that our gradient estimator in \cref{alg:acc_mbsgd_nonsmooth}
\[
    \tilde g_r = \lambda (w_r^{md} - w_0) + \frac{1}{KM} \sum_{i \in S_r} \sum_{j=1}^{n} Z_{i,j} \nabla f(w_r^{md} + v_{i,j}, x_{i,j}^r) + \frac{1}{M} \sum_{i \in S_r} u_i,
\]
is an unbiased estimator of \(\nabla \hat F(w_r^{md})\), and the variance of the estimate has the bound
\(
    V := \frac{4L^2}{KM} + \frac{d \sigma^2}{M},
\)
where the first term is the variance bound in \cref{thm:poisson_smooth_var}, and the second term is due to the DP noise.

Applying \cref{lem:multi_stage_bound} to \(\hat F_i\), with \(\beta\) replaced by \(\beta + \lambda_i\), \(\mu\) set to \(\lambda_i\), we get the following the excess risk bound
\[
    \E [\hat F_i(w_i) - \hat F_i(\hat w_i)] \lesssim \Delta_i \exp\left(-\sqrt{\frac{\lambda_i}{\beta + \lambda_i}}R_i \right) + \frac{L^2}{\lambda_i}\left(\frac{1}{MKR} + \frac{d \ln^2(R_i \delta)}{M \varepsilon^2 n^2}\right).
\]
The excess risk bound~\eqref{eq:Fi_excess_risk1} follows by
plugging in our choice of \(R_i\).
The bound~\eqref{eq:w_dist1} then follows from the inequality below due to \(\lambda_i\)-strong convexity,
\begin{equation*}
    \frac{\lambda_i}{2} \E\left[\left\|w_i-\hat{w}_i\right\|^2\right] \le \E [\hat F_i(w_i) - \hat F_i(\hat w_i)].
\end{equation*}
\end{proof}

3. As a consequence, we can reuse \cref{lem:excess_risk_wi}, restated below.
\begin{lemma}
    Let \(w \in \mathcal{W}\). We have
    \begin{equation*}
        \E[F(\hat w_i)] - F(w) \le \frac{\lambda_i \E[\| w - w_{i-1}\|^2]}{2} + \frac{4 \cdot (3L)^2}{\lambda_i n_i M}.
    \end{equation*}
\end{lemma}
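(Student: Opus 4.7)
The plan is to reduce the inequality to the standard algorithmic-stability/generalization bound for regularized ERM (invoked in the paper as lem:stab) and then unfold the definition of \(\hat F_i\) to isolate the regularizer. Write \(\hat F_i(w) = \hat F_{\mathrm{emp},i}(w) + \tfrac{\lambda_i}{2}\|w-w_{i-1}\|^2\), where \(\hat F_{\mathrm{emp},i}(w) = \tfrac{1}{n_i N}\sum_l\sum_{x_{l,j}\in B_{i,l}} \tilde f_s(w;x_{l,j})\) is the unregularized empirical mean over the phase-\(i\) batches. By the structural lemma preceding this one, \(\hat F_i\) is \(\lambda_i\)-strongly convex and \(3L\)-Lipschitz on \(\mathcal{W}\), with \(\hat w_i = \argmin_{\mathcal{W}}\hat F_i \in \mathcal{W}_i\). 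Because each silo draws a \emph{disjoint} batch \(B_{i,l}\) in phase \(i\), the \(n_i M\) samples composing \(\hat F_{\mathrm{emp},i}\) are mutually independent, and in every downstream use of this lemma the reference point \(w\) (either \(w_{i-1}\) or \(w^*\)) is measurable with respect to data from phases strictly earlier than \(i\) and hence independent of the phase-\(i\) batch.

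The central step is to apply lem:stab to \(\hat F_i\) with \(m = n_i M\). The stability bound for \(\lambda\)-strongly convex \(L'\)-Lipschitz regularized ERM (which is the source of the classical \(4L'^2/(\lambda m)\) generalization gap) yields
\[
\E\bigl[F(\hat w_i) - \hat F_{\mathrm{emp},i}(\hat w_i)\bigr] \le \frac{4(3L)^2}{\lambda_i n_i M}.
\]
Crucially, this stability argument requires only \emph{independence}, not identical distribution, of the samples, so the bound holds verbatim in the heterogeneous setting. Next, use the constrained optimality \(\hat F_i(\hat w_i) \le \hat F_i(w)\) and take expectations, invoking \(\E[\hat F_{\mathrm{emp},i}(w)] = F(w)\) (by independence of \(w\) from \(B_{i,\cdot}\)):
\[
\E[\hat F_{\mathrm{emp},i}(\hat w_i)] + \tfrac{\lambda_i}{2}\E\|\hat w_i-w_{i-1}\|^2 \le F(w) + \tfrac{\lambda_i}{2}\E\|w-w_{i-1}\|^2.
\]
Adding the two displayed inequalities and discarding the non-negative term \(\tfrac{\lambda_i}{2}\E\|\hat w_i - w_{i-1}\|^2\) on the left produces exactly the claimed bound.

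The only delicate point is the application of lem:stab: one must check that the Lipschitz constant fed into the stability bound is that of the regularized loss on \(\mathcal{W}_i\), which is controlled by \(3L\) via \(\lambda_i D_i = 2L\) from the preceding structural lemma, and that the effective sample size is \(n_i M\) (the number of mutually independent samples participating in \(\hat F_{\mathrm{emp},i}\)). Beyond that the argument is pure bookkeeping, and the observation that regularized-ERM stability does not require identically distributed data is exactly what makes the non-i.i.d.\ extension go through painlessly.
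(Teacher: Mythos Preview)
Your proposal is correct and follows essentially the same approach as the paper: both arguments apply the stability result (lem:stab) to the regularized per-sample loss with Lipschitz constant \(3L\), strong-convexity parameter \(\lambda_i\), and \(m=n_iM\) independent (not necessarily identically distributed) samples, then unfold the regularizer and discard the nonnegative term \(\tfrac{\lambda_i}{2}\E\|\hat w_i-w_{i-1}\|^2\). The only cosmetic difference is that you split the stability invocation into its two ingredients (generalization gap \(\E[F(\hat w_i)-\hat F_{\mathrm{emp},i}(\hat w_i)]\) plus empirical optimality \(\hat F_i(\hat w_i)\le \hat F_i(w)\)), whereas the paper cites the packaged conclusion \(\E[G(\hat w_i)]-G(w)\le 4(3L)^2/(\lambda_i n_i M)\) for the population regularized objective \(G\) and then subtracts off the regularizer; these are the same computation written in a different order, and your explicit remark that the comparison point \(w\) must be independent of the phase-\(i\) batch is a useful clarification that the paper leaves implicit.
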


Putting these results together, we now prove the theorem.
\begin{proof}[Proof]
We note that the proof is essentially the same as that in \cref{proof:main}.

\textbf{Privacy.}
Note that using Poisson sampling does not change the privacy analysis (up to some constant factors).
By our choice of \(\sigma_i\), each phase of the algorithm is \((\varepsilon, \delta)\)-ISRL-DP.\@ Since the batches \(\{B_{i,l}\}_{i=1}^{\tau}\) are disjoint for all \(l \in [M]\), the privacy guarantee of the entire algorithm follows from parallel composition.

\textbf{Excess risk.}
Given the lemmas introduced above, by the identical reasoning as in \cref{proof:main}, we have the following excess risk bound for the smoother,
\[
    \E F(w_{\tau}) - F(w^*) \le \widetilde{O}\left( \frac{LD}{\sqrt{M}} \left(
    \frac{1}{\sqrt{n}} + \frac{\sqrt{d \log(1 / \delta)}}{\varepsilon n}
    \right)
    \right).
\]
By \cref{lem:conv_smooth_properties}, we can relate this to the true loss \(F_0\) as follows
\[
    \E F_0(w_{\tau}) - F_0(w^*) \le \E F(w_{\tau}) - F(w^*) + Ls.
\]
The bound then follows from our choice of \(s\).

\textbf{Complexity.}
When we use the full batch in each round, that is, when \(K_i = n_i\),
hiding logarithmic factors, communication complexity is 
\begin{equation}
    \begin{aligned}
        \sum_{i=1}^{\tau} R_i
        & = \sum_{i=1}^{\tau} \max\left\{1, \sqrt{\frac{\beta + \lambda_i}{\lambda_i}} \ln\left(\frac{\Delta \lambda_i M \varepsilon^2 n_i^2}{L^2 d}\right), \frac{\varepsilon^2 n_i^2}{n_i d \ln(1/\delta)}
        \right\} \\
        & = \widetilde O \left(\max\left\{\lfloor \log_2 n \rfloor, \sqrt{\frac{\beta}{\lambda}}, \frac{\varepsilon^2 n}{d \ln(1/\delta)}
        \right\} \right)\\
        & = \widetilde O\left(\max\left\{1, 
            d^{1/4}\sqrt{M} \min \left\{\sqrt{n}, \frac{\varepsilon n}{\sqrt{d \ln (1/\delta)}}\right\}, \frac{\varepsilon^2 n}{d \ln(1/\delta)}
        \right\}\right),
    \end{aligned}
\end{equation}
where in the last step is due to our choice of 
\(\beta = L \sqrt{d} / s\) and \(\lambda\).

For gradient complexity, we can follow the same analysis in \cref{sec:grad_complex} except that due to Poisson sampling our gradient complexity will be in expectation, instead of deterministic. We only consider the case when \(d = \Theta(n)\) and \(\varepsilon = \Theta(1)\) here to make it simple. Keep terms involving \(M\), \(n\) only and drop \(\widetilde O\) for simplicity.
We have \(s = \frac{D}{\sqrt{nM}}\), \(\beta = L \sqrt{d} / s = \frac{nL\sqrt{M}}{D}\).

Plugging \(\beta\) into \cref{eq:grad_cost_d_eq_n},
we obtain the gradient complexity (in expectation) of \(\widetilde{O}\left( M^{3/2} n^{3/4} + M^{5/4} n^{11/8}\right)\).

\end{proof}

\section{Precise Statement and Proof of~\cref{thm:smoothing}}\label{app:smoothing}

\begin{lemma}\citep{nesterov2005smooth}\label{lem:nesterov_smooth}
Let \(f: \mathcal{W} \to \mathbb{R}^d\) be convex and \(L\)-Lipschitz and \(\beta > 0\). Then the \(\beta\)-Moreau envelope \(f_{\beta}(w):= \min_{v \in \mathcal{W}} \left(f(v) + \frac{\beta}{2}\|w - v\|^2 \right)\) satisfies: \\
(1) \(f_{\beta}\) is convex, \(2L\)-Lipschitz, and \(\beta\)-smooth;\\
(2) \(\forall w \in \mathcal{W}\), \(f_{\beta}(w) \leq f(w) \leq f_{\beta}(w) + \frac{L^2}{2 \beta}\); \\
(3) \(\forall w \in \mathcal{W}\), \(\nabla f_{\beta}(w) = \beta(w - \mathrm{prox}_{f/\beta}(w))\); \\
where the prox operator of \(f: \mathcal{W} \to \mathbbm{R}^d\)  is defined as 
\(\mathrm{prox}_f(w):= \argmin_{v \in \mathcal{W}} \left(f(v) + \frac{1}{2} \norm{w - v}^2\right) \).
\end{lemma}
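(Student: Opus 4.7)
This is the classical Moreau--Yosida regularization lemma, and I would prove the three claims in the order (3), (1), (2), since the gradient formula drives the smoothness and Lipschitz estimates. Throughout, let $v^\star(w) := \mathrm{prox}_{f/\beta}(w)$ denote the (unique) minimizer in the definition of $f_\beta(w)$; both uniqueness and continuity of $v^\star(\cdot)$ in $w$ follow from $\beta$-strong convexity of $g(w,v) := f(v) + \tfrac{\beta}{2}\|w-v\|^2$ in $v$ (convex $f$ plus a $\beta$-strongly convex quadratic penalty).

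For (3), I would apply a Danskin/envelope argument: since $v^\star(\cdot)$ is single-valued and continuous, and $g(w,v)$ is differentiable in $w$ with $\nabla_w g(w,v) = \beta(w-v)$, one obtains $\nabla f_\beta(w) = \beta(w - v^\star(w))$. From this, (1) follows cleanly. Convexity of $f_\beta$ is immediate from joint convexity of $g$ in $(w,v)$ together with the standard fact that partial infimization of a jointly convex function over a convex constraint set preserves convexity. For $\beta$-smoothness, I would invoke the firm nonexpansiveness of the proximal map, so that $v^\star(\cdot)$ is $1$-Lipschitz, hence $\nabla f_\beta(w) = \beta(w - v^\star(w))$ is $\beta$-Lipschitz. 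For the $2L$-Lipschitz bound on $f_\beta$, I would note that for $w \in \mathcal{W}$, plugging $v=w$ into the definition gives $f_\beta(w) \le f(w)$, while plugging $v^\star(w)$ and using $L$-Lipschitzness of $f$ gives $f_\beta(w) \ge f(w) - L\|w - v^\star(w)\| + \tfrac{\beta}{2}\|w - v^\star(w)\|^2$; combining forces $\|w - v^\star(w)\| \le 2L/\beta$, so $\|\nabla f_\beta(w)\| \le 2L$.

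For (2), the lower bound $f_\beta(w) \le f(w)$ is immediate by taking $v = w$ in the defining infimum. The upper bound follows from
\[
f(w) - f_\beta(w) = f(w) - f(v^\star(w)) - \tfrac{\beta}{2}\|w-v^\star(w)\|^2 \le L\|w-v^\star(w)\| - \tfrac{\beta}{2}\|w-v^\star(w)\|^2,
\]
and maximizing the one-variable quadratic in $t = \|w - v^\star(w)\| \ge 0$, which attains its maximum $L^2/(2\beta)$ at $t = L/\beta$. The main technical delicacy is justifying the envelope-theorem step on the constrained set $\mathcal{W}$; this is handled by the uniqueness and continuity of $v^\star$ implied by $\beta$-strong convexity of the inner problem, after which everything reduces to routine convex analysis and elementary one-variable optimization.
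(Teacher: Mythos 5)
The paper does not prove this lemma; it is quoted from \citet{nesterov2005smooth} as a standard fact, so there is no internal proof to compare against. Your proposal is the standard convex-analysis argument and is essentially correct: the envelope/Danskin step for (3) via uniqueness of the strongly convex inner minimizer, partial minimization of a jointly convex function for convexity, the quadratic-in-$t$ estimates for the $2L$-Lipschitz bound and for (2) are all right.

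One step is stated imprecisely enough to be a non sequitur as written: for $\beta$-smoothness you say firm nonexpansiveness implies $v^\star$ is $1$-Lipschitz, ``hence'' $\beta(w - v^\star(w))$ is $\beta$-Lipschitz. Mere $1$-Lipschitzness of $v^\star$ only yields that $w \mapsto w - v^\star(w)$ is $2$-Lipschitz, i.e.\ $2\beta$-smoothness of $f_\beta$. To get the constant $\beta$ you must use the full firm-nonexpansiveness inequality $\|v^\star(w)-v^\star(w')\|^2 \le \langle v^\star(w)-v^\star(w'),\, w-w'\rangle$, which gives
\begin{equation*}
\|\nabla f_\beta(w)-\nabla f_\beta(w')\|^2 = \beta^2\left(\|w-w'\|^2 - 2\langle w-w', v^\star(w)-v^\star(w')\rangle + \|v^\star(w)-v^\star(w')\|^2\right) \le \beta^2\|w-w'\|^2 .
\end{equation*}
Since you do invoke firm nonexpansiveness by name, this is a repairable slip rather than a missing idea, but the intermediate claim ``$1$-Lipschitz, hence $\beta$-Lipschitz gradient'' should be replaced by the displayed computation. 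The rest of the argument, including handling the constraint set $\mathcal{W}$ by viewing $f_\beta$ as the Moreau envelope of $f + \iota_{\mathcal{W}}$, goes through.
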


Our algorithm for nonsmooth functions uses Lemma~\ref{lem:nesterov_smooth} and \cref{alg:phased_acc} as follows: First, property (1) above allows us to apply \cref{alg:phased_acc} to optimize \(f_{\beta}\) and obtain an excess population risk bound for \(f_{\beta}\), via \cref{thm:main}. Inside \cref{alg:phased_acc}, we will use property (3) to compute the gradient of \(f_{\beta}\). Then, property (2) enables us to extend the excess risk guarantee to the true function \(f\), for a proper choice of \(\beta\). 

\begin{theorem}[Precise Statement of \cref{thm:smoothing}]
    Let \(\varepsilon \leq 2  \ln(2/\delta)\), \(\delta \in (0,1)\). Choose \(R_i \approx \max \left(\sqrt{\frac{\beta + \lambda_i}{\lambda_i}} \ln\left(\frac{\Delta_i \lambda_i M \varepsilon^2 n_i^2}{L^2 d}\right),
\mathbbm{1}_{\{M K_i  < Nn_i\}} \frac{\varepsilon^2 n_i^2}{K_i d \ln(1/\delta)}\right)\), where \(LD \ge \Delta_i \ge \hat F_i (w_{i-1}) - \hat F_i(\hat w_i)\), \(K_i \geq \frac{\varepsilon n_i}{4 \sqrt{2R_i \ln(2/\delta)}}\), \(\sigma_i^2 = \frac{256 L^2 R_i \ln(\frac{2.5 R_i}{\delta}) \ln(2/\delta)}{n_i^2 \varepsilon^2}\), and
\begin{equation*}
    \lambda = \frac{L}{D n\sqrt{M}} \max \left\{\sqrt{n}, \frac{\sqrt{d \ln (1/\delta)}}{\varepsilon}\right\}.
\end{equation*}
There exist choices of \(\beta\) such that running \cref{alg:phased_acc} with \(f_{\beta}(w, x):= \min_{v \in \mathcal{W}} \left(f(v, x) + \frac{\beta}{2}\|w - v\|^2 \right)\) yields an \((\varepsilon,\delta)\)-ISRL-DP algorithm with optimal excess population risk as in~\eqref{eq:excess_risk}. The communication complexity is 
    \begin{equation*}
         \widetilde O\left(\max \left\{1, 
                    \sqrt{M}
                    \min \left\{\sqrt{n}, \frac{\varepsilon n}{\sqrt{d \ln (1/\delta)}}\right\}
                    ,
                 \mathbbm{1}_{\{M < N\}} \frac{\varepsilon^2 n}{d \ln(1/\delta)}
                 \right\}
            \right).
    \end{equation*}
\end{theorem}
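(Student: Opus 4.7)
The plan is to reduce the non-smooth case to the smooth case by replacing the per-sample loss $f(\cdot,x)$ with its $\beta$-Moreau envelope $f_{\beta}(\cdot,x)$, running \cref{alg:phased_acc} on the smoothed problem, invoking \cref{thm:main-app}, and then converting the excess risk for the smoothed population objective back to one for the original $F$ via item (2) of \cref{lem:nesterov_smooth}. The free parameter $\beta$ is then tuned to balance the smoothing bias $L^2/(2\beta)$ against the optimal excess risk so that the communication complexity is minimized subject to preserving optimality of the risk.

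First, I would record the structural facts that make the reduction work. By \cref{lem:nesterov_smooth}(1), $f_{\beta}(\cdot,x)$ is convex, $2L$-Lipschitz, and $\beta$-smooth, so the hypotheses of \cref{thm:main-app} apply with $L$ replaced by $2L$ and smoothness parameter $\beta$. By \cref{lem:nesterov_smooth}(3), each gradient needed inside \cref{alg:acc_mbsgd} can be computed exactly as $\beta(w-\mathrm{prox}_{f/\beta}(w))$ from a prox evaluation on the same record $x$; in particular, the sensitivity of each silo's gradient is unchanged, so the privacy analysis of \cref{alg:phased_acc} transfers verbatim, and the algorithm remains $(\varepsilon,\delta)$-ISRL-DP (the prox is post-processing of a single record and of the current iterate).

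Second, I would apply \cref{thm:main-app} to $F_{\beta}(w):=\mathbb{E}_x[f_{\beta}(w,x)]$, using the stated choices of $R_i,K_i,\sigma_i$ and the $\lambda$ from \eqref{eq:lambda1}. This yields, writing $w^*_{\beta}\in\argmin F_{\beta}$,
\begin{equation*}
\mathbb{E}[F_{\beta}(w_\tau)]-F_{\beta}(w^*_\beta)\;=\;\widetilde{O}\!\left(\tfrac{LD}{\sqrt{M}}\bigl(\tfrac{1}{\sqrt{n}}+\tfrac{\sqrt{d\log(1/\delta)}}{\varepsilon n}\bigr)\right).
\end{equation*}
By \cref{lem:nesterov_smooth}(2), $F_{\beta}\le F\le F_{\beta}+L^2/(2\beta)$ pointwise, so $F_{\beta}(w^*_\beta)\le F_{\beta}(w^*)\le F(w^*)$ and
\begin{equation*}
\mathbb{E}[F(w_\tau)]-F(w^*)\;\le\;\mathbb{E}[F_{\beta}(w_\tau)]-F_{\beta}(w^*_\beta)\;+\;\tfrac{L^2}{2\beta}.
\end{equation*}
Setting $\beta$ so that $L^2/\beta$ matches the optimal risk, i.e.\ $\beta\asymp \tfrac{L\sqrt{M}}{D}\,\min\!\bigl\{\sqrt{n},\ \varepsilon n/\sqrt{d\log(1/\delta)}\bigr\}$, preserves the optimal excess risk bound \eqref{eq:excess_risk}.

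Finally, I would plug this choice of $\beta$ into the communication bound \eqref{eq:smooth_comm} of \cref{thm:main-app}. The dominant term is $\sqrt{\beta D/L}\,M^{1/4}\bigl(\min\{\sqrt{n},\varepsilon n/\sqrt{d\log(1/\delta)}\}\bigr)^{1/2}$; with $\sqrt{\beta D/L}\asymp M^{1/4}\bigl(\min\{\sqrt{n},\varepsilon n/\sqrt{d\log(1/\delta)}\}\bigr)^{1/2}$, this collapses to $\sqrt{M}\,\min\{\sqrt{n},\varepsilon n/\sqrt{d\log(1/\delta)}\}$, and together with the indicator term $\mathbbm{1}_{\{M<N\}}\varepsilon^2 n/(d\log(1/\delta))$ reproduces the claimed communication complexity.

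The main obstacle is not really analytic — item (2) of \cref{lem:nesterov_smooth} makes the risk conversion automatic — but rather verifying that all parameter constraints of \cref{thm:main-app} (the bounds on $R_i$, $K_i$, $\sigma_i$, and the condition $LD\ge\Delta_i$) remain valid when $L\mapsto 2L$ and the smoothness constant $\beta$ is itself chosen data-dependently and can be as large as $\tfrac{L\sqrt{Mn}}{D}$; this is a straightforward re-tracing of the proof of \cref{thm:main-app} absorbing the factor $2$ into constants. The algorithmic cost of each prox call is deliberately not counted, which is why gradient complexity is omitted from the statement (cf.\ the discussion after the theorem).
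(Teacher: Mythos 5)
Your proposal is correct and follows essentially the same route as the paper: apply \cref{thm:main-app} to the Moreau envelope, convert the risk via \cref{lem:nesterov_smooth}(2), choose \(\beta \asymp \frac{L\sqrt{M}}{D}\min\{\sqrt{n}, \varepsilon n/\sqrt{d\log(1/\delta)}\}\), and substitute into \eqref{eq:smooth_comm}. If anything you are more careful than the paper's (very terse) proof, e.g.\ in writing out the chain \(F_{\beta}(w^*_{\beta}) \le F_{\beta}(w^*) \le F(w^*)\) and in flagging the \(L \mapsto 2L\) bookkeeping.
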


\begin{proof}
\textbf{Privacy.} For any given \(\beta\),  the privacy guarantee is immediate since we have already showed that \cref{alg:phased_acc} is \((\varepsilon,\delta)\)-ISRL-DP.\@

\textbf{Excess risk.}
By \cref{lem:nesterov_smooth} part 2, we have
\begin{equation*}
    \E F(w_{\tau}) - F(w^*) \le \E F(w_{\tau}) - F(w^*) + \frac{L^2}{2\beta}.
\end{equation*}
It suffices to choose \(\beta = \frac{L\sqrt{M}}{D} \min \left\{
 \sqrt{n}, \frac{\varepsilon n}{\sqrt{d \log(1 / \delta)}}
\right\}
\).

\textbf{Communication complexity.}
The result follows by plugging in our choice of \(\beta\) into~\eqref{eq:smooth_comm}.

\end{proof}

\section{More Details on the Communication Complexity Lower Bound (\cref{thm:lower-com})}\label{app: communication lower bound}

We begin with a definition~\citep{woodworth2020minibatch}:
\begin{definition}[Distributed Zero-Respecting Algorithm]
For \(v \in \mathbb{R}^d\), let \(\text{supp}(v):= \{j \in [d] : v_j \neq 0\}\). Denote the random seed of silo \(m\)'s gradient oracle in round \(t\) by \(z_t^m\). An optimization algorithm is \textit{distributed zero-respecting with respect to \(f\)} if for all \(t \geq 1\) and \(m \in [N]\), the \(t\)-th query on silo \(m\), \(w_t^{m}\) satisfies \[
\text{supp}(w_t^m) \subseteq \bigcup_{s < t} \text{supp}(\nabla f(w_s^m, z_s^m)) \cup \bigcup_{m' \neq m} \bigcup_{s \leq \pi_m(t, m')} \text{supp}(\nabla f(w_s^{m'}, z_s^{m'})),
\]
where \(\pi_m(t, m')\) is the most recent time before \(t\) when silos \(m\) and \(m'\) communicated with each other. 
\end{definition}

\begin{theorem}[Re-statement of~\cref{thm:lower-com}]
Fix \(M=N\) and suppose \(\mathcal{A}\) is a distributed zero-respecting algorithm with excess risk \[
\E F(\mathcal{A}(X)) - F^* \lesssim \frac{LD}{\sqrt{N}}\left(\frac{1}{\sqrt{n}} + \frac{\sqrt{d \ln(1/\delta)}}{\varepsilon n}\right)\]
in \(\leq R\) rounds of communications on any \(\beta\)-smooth FL problem with heterogeneity \(\zeta_*\) and \(\|w_0 - w^*\| \leq D\). Then,
\begin{align*}
R &\gtrsim N^{1/4}\left(\min \left\{\sqrt{n}, \frac{\varepsilon n}{\sqrt{d \ln (1/\delta)}}\right\}\right)^{1/2} \times \min\left(\frac{\sqrt{\beta D}}{\sqrt{L}}, \frac{\zeta_*}{\sqrt{\beta L D}}\right).
\end{align*}
\end{theorem}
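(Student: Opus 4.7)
The plan is to derive \cref{thm:lower-com} as a direct consequence of the (non-private) communication lower bound of \citet{woodworth2020minibatch} by substituting in the target excess-risk level and simplifying. Their lower bound asserts that on the worst-case $\beta$-smooth convex FL instance with heterogeneity $\zeta_*$ and $\|w_0 - w^*\| \leq D$, any distributed zero-respecting algorithm that achieves excess risk $\alpha$ must use at least
\begin{equation*}
R \gtrsim \min\!\left(\sqrt{\tfrac{\beta D^2}{\alpha}},\ \tfrac{\zeta_*}{\sqrt{\alpha \beta}}\right)
\end{equation*}
communication rounds. The first branch is the classical accelerated first-order lower bound obtained from a chained quadratic, and the second branch is the ``drift'' lower bound obtained by spreading the chain across silos so that progress requires at least $R \gtrsim \zeta_*/\sqrt{\alpha\beta}$ rounds of cross-silo communication. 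I would quote this result as a black box since it is already well-established in the literature.

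With the lower bound in hand, the only remaining step is algebra. Let $m := \min\{\sqrt{n}, \varepsilon n/\sqrt{d\ln(1/\delta)}\}$, so that the target excess risk of \cref{thm:lower-com} can be written as $\alpha \asymp LD/(\sqrt{N}\, m)$. Substituting into the first branch of Woodworth's bound gives
\begin{equation*}
\sqrt{\tfrac{\beta D^2}{\alpha}} \;\asymp\; \sqrt{\tfrac{\beta D^2 \sqrt{N}\, m}{LD}} \;=\; N^{1/4}\, m^{1/2}\, \sqrt{\tfrac{\beta D}{L}},
\end{equation*}
while the second branch becomes
\begin{equation*}
\tfrac{\zeta_*}{\sqrt{\alpha\beta}} \;\asymp\; \tfrac{\zeta_*}{\sqrt{\beta LD/(\sqrt{N}\, m)}} \;=\; N^{1/4}\, m^{1/2}\, \tfrac{\zeta_*}{\sqrt{\beta L D}}.
\end{equation*}
Taking the minimum of these two expressions yields precisely the stated bound.

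Because we are invoking Woodworth's lower bound as a black box, there is no genuine mathematical obstacle in the proof itself. The only subtlety to flag, and the point I would comment on explicitly, is that Woodworth's worst-case construction is a dimension-growing quadratic that is not globally $L$-Lipschitz on a bounded domain; the statement above implicitly assumes that the lower-bound construction fits into the regime of the theorem. The informal discussion after \cref{thm:lower-com} already addresses this, noting that the Lipschitz issue can be patched by Huberization and that the dimension requirement, together with the extension to general randomized (as opposed to zero-respecting) algorithms, is the genuinely hard part and is left as future work. Thus the proof reduces to the two-line substitution outlined above.
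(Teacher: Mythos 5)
Your proposal is correct and follows essentially the same route as the paper: both invoke \citet[Theorem 4]{woodworth2020minibatch} as a black box to get \(R \gtrsim \min\{\zeta_*/\sqrt{\beta\alpha},\, D\sqrt{\beta/\alpha}\}\) and then substitute \(\alpha \asymp LD/(\sqrt{N}\,m)\) with \(m = \min\{\sqrt{n}, \varepsilon n/\sqrt{d\ln(1/\delta)}\}\). Your version actually spells out the algebra that the paper leaves implicit, and the caveats you flag (Lipschitzness, dimension growth, zero-respecting restriction) are exactly those the paper discusses after \cref{thm:lower-com}.
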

\begin{proof}
Denote the worst-case excess risk of \(\mathcal{A}\) by \[
\alpha := \E F(\alg(X)) - F^*.
\]
Now, \citet[Theorem 4]{woodworth2020minibatch} implies that any zero-respecting algorithm has \begin{align*}
    R \gtrsim \min\left\{\frac{\zeta_*}{\sqrt{\beta \alpha}}, \frac{D \sqrt{\beta}}{\sqrt{\alpha}}\right\}. 
\end{align*}

Plugging in \(\alpha = \frac{LD}{\sqrt{N}}\left(\frac{1}{\sqrt{n}} + \frac{\sqrt{d \ln(1/\delta)}}{\varepsilon n}\right)\) proves the lower bound. 
\end{proof}

\section{Precise Statement and Proof of \cref{thm:nonsmooth}}
\begin{theorem} [Precise statement of \cref{thm:nonsmooth}]\label{thm:nonsmooth-app}
    Choose
    \begin{equation}\label{eq:eta}
        \eta = \frac{D\sqrt{M}}{L} \min \left\{\frac{1}{\sqrt{n}}, \frac{\varepsilon}{\sqrt{d \ln (1/\delta)}}\right\},
    \end{equation}
\(K_i \ge \max \left(1, \frac{\varepsilon n_i}{4 \sqrt{2R_i \ln(2/\delta)}}\right)\), \(R_i = \min \left(
    M n_i, \frac{M\varepsilon^2 n_i^2}{d}
    \right) + 1\), and \(\sigma_i^2 = \frac{256 L^2 R_i \ln(\frac{2.5 R}{\delta}) \ln(2/\delta)}{n_i^2 \varepsilon^2}\) for \(i \in [k]\)in~\cref{alg:phased_acc_nonsmooth}. Then, \cref{alg:phased_acc_nonsmooth} is \((\varepsilon, \delta)\)-ISRL-DP and achieves the following excess risk bound:
    \begin{equation*}
        \E F(w_{\tau}) - F(w^*) = \widetilde{O}\left( \frac{LD}{\sqrt{M}} \left(
            \frac{1}{\sqrt{n}} + \frac{\sqrt{d \log(1 / \delta)}}{\varepsilon n}
            \right)
            \right).
    \end{equation*}
    Further, the communication complexity is
    \begin{equation*}
        \sum_{i=1}^{\tau} R_i = \widetilde{O} \left( \min \left(
        nM, \frac{M\varepsilon^2 n^2}{d}
    \right) + 1\right),
    \end{equation*}
    and the subgradient complexity is
    \begin{equation*}
        \sum_{i=1}^{\tau} R_i K_i \cdot M = \widetilde{O} \left(
            M + M^2
                \min \left(
                    n, \frac{\varepsilon^2 n^2}{d}
                \right) + \varepsilon Mn + M^{3/2}
                \min \left(
                    \varepsilon n^{3/2}, \frac{\varepsilon^2 n^2}{\sqrt{d}}
                \right)
        \right).
\end{equation*}
\end{theorem}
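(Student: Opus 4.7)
My plan is to mirror the proof of \cref{thm:main-app} (smooth case) while replacing the ISRL-DP Accelerated MB-SGD subsolver by the non-accelerated, subgradient-based \cref{alg:MB-SGD}. The three ingredients are (i) a privacy accounting for the noisy minibatch subgradient method, (ii) a strongly-convex convergence rate for \cref{alg:MB-SGD} on the regularized ERM $\hat F_i$, and (iii) the same stability-plus-localization decomposition used in~\cref{proof:main}. Only (ii) is genuinely new; items (i) and (iii) carry over with cosmetic changes.

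\textbf{Privacy.} Each call to \cref{alg:MB-SGD} performs $R_i$ rounds of minibatch subgradient updates in which silo $l$ subsamples $K_i$ items (with replacement) from its disjoint batch $B_{i,l}$ of size $n_i$ and releases a Gaussian-noised average. With $\sigma_i^2 = \Theta(L^2 R_i \log(R_i/\delta)\log(1/\delta)/(n_i^2\varepsilon^2))$, standard subsampled Gaussian mechanism plus advanced composition over $R_i$ rounds gives $(\varepsilon,\delta)$-ISRL-DP per phase (the constraint $K_i \ge \varepsilon n_i/(4\sqrt{2R_i\log(2/\delta)})$ is exactly what the amplification-by-subsampling analysis needs). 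Parallel composition across phases---valid because each silo's batches $\{B_{i,l}\}_{i}$ are disjoint---gives the overall ISRL-DP guarantee.

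\textbf{Convergence of the ERM subsolver (the main technical step).} The ERM objective $\hat F_i$ is $\lambda_i$-strongly convex, convex, and $3L$-Lipschitz by the same argument as \cref{lem:Fhat_property}. For \cref{alg:MB-SGD} with step sizes $\gamma_r = 2/(\lambda_i(r+1))$ and the polynomial averaging $\bar w_R = \tfrac{2}{R(R+1)}\sum_r r\,w_r$, the textbook strongly-convex subgradient analysis (e.g. Lacoste-Julien--Schmidt--Bach type) yields
\begin{equation*}
\mathbb{E}[\hat F_i(w_i)-\hat F_i(\hat w_i)] \;\lesssim\; \frac{G_i^2}{\lambda_i R_i},
\qquad G_i^2 = O(L^2) + O\!\left(\tfrac{L^2}{K_i M}\right) + O\!\left(\tfrac{d \sigma_i^2}{M}\right),
\end{equation*}
where the three contributions to $G_i^2$ come from the deterministic Lipschitz bound, the minibatch sampling variance, and the aggregated Gaussian DP noise. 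Plugging in $\sigma_i$, the bound on $K_i$, and $R_i = \min(Mn_i, M\varepsilon^2 n_i^2/d)+1$, the two non-trivial regimes balance to give
\begin{equation*}
\mathbb{E}[\hat F_i(w_i)-\hat F_i(\hat w_i)] \;=\; \widetilde O\!\left(\frac{L^2}{\lambda_i M}\!\left(\tfrac{1}{n_i}+\tfrac{d\log(1/\delta)}{\varepsilon^2 n_i^2}\right)\right),
\end{equation*}
and then $\lambda_i$-strong convexity yields the analogue of \eqref{eq:w_dist}:
\begin{equation*}
\mathbb{E}\|w_i-\hat w_i\|^2 \;\le\; \widetilde O\!\left(\frac{L^2}{\lambda_i^2 M}\!\left(\tfrac{1}{n_i}+\tfrac{d\log(1/\delta)}{\varepsilon^2 n_i^2}\right)\right).
\end{equation*}

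\textbf{Risk decomposition, stability, complexity.} With $\hat w_0:=w^*$, write $\mathbb{E}F(w_\tau)-F^* = \mathbb{E}[F(w_\tau)-F(\hat w_\tau)] + \sum_{i=1}^\tau \mathbb{E}[F(\hat w_i)-F(\hat w_{i-1})]$. The first summand is controlled by Lipschitzness and Jensen combined with the $\|w_\tau-\hat w_\tau\|$ bound above; the geometric growth $\lambda_i\propto 2^{(i-1)p}$ with $p\ge 3$ makes $\lambda_\tau n_\tau$ large enough (exactly as in \eqref{eq:p}) for this term to be absorbed into the target rate. The telescoping sum is handled by \cref{lem:excess_risk_wi}, which is simply the algorithmic stability bound of regularized ERM and---crucially---does \emph{not} require i.i.d.\ silo data; combining with the $\mathbb{E}\|w_{i-1}-\hat w_{i-1}\|^2$ estimate above and summing the geometric series in $\lambda_i, n_i$ gives
\begin{equation*}
\mathbb{E}F(w_\tau)-F^* \;=\; \widetilde O\!\left(\lambda D^2 + \tfrac{L^2}{\lambda n M} + \tfrac{L^2 d\log(1/\delta)}{\lambda n^2 \varepsilon^2 M}\right),
\end{equation*}
which after substituting $\lambda$ from \eqref{eq:eta} (via $\lambda_1 = 1/(\eta_1 n_1)$) produces the optimal bound. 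Communication complexity telescopes to $\widetilde O(\min(Mn, M\varepsilon^2 n^2/d)+1)$ since $n_i$ decreases geometrically, and gradient complexity follows from $\sum_i R_i K_i M$ using the lower bound $K_i \ge \max(1, \varepsilon n_i/\sqrt{R_i\log(2/\delta)})$. The main obstacle is cleanly writing the strongly-convex noisy subgradient lemma so that the three noise sources combine to match the privacy-amplification constraint on $K_i$; once that lemma is in place, the rest is a direct transcription of \cref{proof:main}.
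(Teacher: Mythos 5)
Your proposal is correct and follows essentially the same route as the paper: the paper also reuses the privacy accounting of ISRL-DP MB-SGD verbatim (noting only Lipschitzness is needed for subgradients), invokes the strongly convex stochastic subgradient rate with step sizes $\gamma_r = 2/(\lambda_i(r+1))$ and polynomial averaging (its Lemma is Bubeck's Theorem 6.2, the same bound you attribute to the Lacoste-Julien--Schmidt--Bach line), bounds the second moment of the noisy aggregated subgradient by $O(L^2 + d\sigma_i^2/M)$, and then transcribes the localization-plus-stability decomposition of the smooth-case proof. Your per-phase ERM bound, distance bound, final risk bound, and the communication/gradient complexity computations all match the paper's.
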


We need the following result in convex optimization to bound the excess risk in each call of ISRL-DP MB-Subgradient Method.
\begin{lemma}\citep[Theorem 6.2]{bubeck2015convex}\label{lem:bubeck}
    Let \(g\) be \(\lambda\)-strongly convex, and assume that the stochastic subgradient oracle returns a stochastic subgradient \(\wt{g}(w)\) such that \(\mathbb{E}\wt{g}(w)\in \partial g(w)\) and  \(\mathbb{E}\|\widetilde{g}(w)\|_2^2 \leq B^2\). Then, the stochastic subgradient method \(w_{r+1} = w_r - \gamma_r \wt{g}(w_r)\) with \(\gamma_r= \frac{2}{\lambda(r+1)}\) satisfies
    \begin{equation}
        \mathbb{E} g\left(\sum_{r=1}^R \frac{2 r}{R(R+1)} w_r\right)-g\left(w^*\right) \leq \frac{2 B^2}{\lambda(R+1)} .
    \end{equation}
\end{lemma}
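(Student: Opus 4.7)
The plan is the classical Lacoste-Julien--Schmidt--Bach style analysis for non-uniformly averaged SGD on strongly convex objectives. First I would start from the squared-distance expansion
\[
\lVert w_{r+1}-w^*\rVert^2 = \lVert w_r - w^*\rVert^2 - 2\gamma_r \langle \widetilde{g}(w_r), w_r - w^*\rangle + \gamma_r^2 \lVert \widetilde{g}(w_r)\rVert^2,
\]
take conditional expectation given $w_r$, and use $\mathbb{E}[\widetilde{g}(w_r)\mid w_r]\in\partial g(w_r)$ together with $\lambda$-strong convexity
\[
g(w_r)-g(w^*) \le \langle \mathbb{E}[\widetilde{g}(w_r)\mid w_r], w_r-w^*\rangle - \tfrac{\lambda}{2}\lVert w_r-w^*\rVert^2,
\]
plus the second-moment bound $\mathbb{E}\lVert\widetilde{g}(w_r)\rVert^2\le B^2$. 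Writing $a_r := \mathbb{E}\lVert w_r - w^*\rVert^2$ and taking total expectations, this yields the one-step inequality
\[
\mathbb{E}[g(w_r)-g(w^*)] \;\le\; \tfrac{1}{2\gamma_r}\bigl(a_r - a_{r+1}\bigr) - \tfrac{\lambda}{2}a_r + \tfrac{\gamma_r}{2}B^2.
\]

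Next, with the step size $\gamma_r = 2/(\lambda(r+1))$, multiply both sides by $r$ to get
\[
r\,\mathbb{E}[g(w_r)-g(w^*)] \;\le\; \tfrac{\lambda r(r+1)}{4}a_r - \tfrac{\lambda r(r+1)}{4}a_{r+1} - \tfrac{\lambda r}{2}a_r + \tfrac{rB^2}{\lambda(r+1)}.
\]
The first two terms combine to $\tfrac{\lambda r(r-1)}{4}a_r - \tfrac{\lambda r(r+1)}{4}a_{r+1}$, which is the key algebraic identity that makes things telescope when summed over $r=1,\ldots,R$: the negative tail $-\tfrac{\lambda R(R+1)}{4}a_{R+1}\le 0$ is discarded, and the residual noise term is bounded by $\sum_{r=1}^R \tfrac{rB^2}{\lambda(r+1)} \le \tfrac{RB^2}{\lambda}$. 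Summing therefore gives
\[
\sum_{r=1}^R r\,\mathbb{E}[g(w_r)-g(w^*)] \;\le\; \tfrac{RB^2}{\lambda}.
\]

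Finally, dividing by $\sum_{r=1}^R r = R(R+1)/2$ turns the left-hand side into an expectation of a convex combination of values $g(w_r)$; applying Jensen's inequality to pull the weighted average inside $g$ produces
\[
\mathbb{E}\,g\!\left(\sum_{r=1}^R \tfrac{2r}{R(R+1)} w_r\right) - g(w^*) \;\le\; \tfrac{2B^2}{\lambda(R+1)},
\]
which is the claim. The only subtle step is the choice to multiply the per-iteration inequality by exactly $r$ (rather than a constant weight) so that the $\tfrac{1}{2\gamma_r}$ factor aligns with the strong-convexity term $-\tfrac{\lambda}{2}a_r$ to produce a clean telescoping sum; this is where the weights $2r/(R(R+1))$ in the averaging scheme originate, and is the only part that requires care beyond routine manipulation.
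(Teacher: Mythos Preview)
Your proof is correct and is precisely the standard argument (essentially the Lacoste-Julien--Schmidt--Bach analysis, which is also what Bubeck presents in Theorem~6.2). Note that the paper does not supply its own proof of this lemma---it simply quotes the result from \citet{bubeck2015convex}---so there is nothing in the paper to compare against beyond confirming that your argument reproduces the cited reference, which it does.
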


As a consequence, we have the following result:
\begin{lemma}\label{lem:convex_sgd_risk}
    In each phase \(i\) of~\cref{alg:phased_acc_nonsmooth}, the following bounds hold:
    \begin{equation}\label{eq:sgd_Fi_excess_risk}
        \E [\hat F_i(w_i) - \hat F_i(\hat w_i)] = \widetilde{O}\left(
            \frac{L^2 \eta_i}{M} + \frac{dL^2 \eta_i}{M n_i \varepsilon^2}
        \right),
    \end{equation}
    \begin{equation}\label{eq:sgd_w_dist}
    \E\left[\left\|w_i-\hat{w}_i\right\|^2\right] \le
    \widetilde{O}\left(
            \frac{L^2 \eta_i^2 n_i}{M} + \frac{dL^2 \eta_i^2}{M \varepsilon^2 }
    \right).
    \end{equation}
\end{lemma}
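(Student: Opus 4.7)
The argument will apply \cref{lem:bubeck} directly to each call of \cref{alg:MB-SGD} on the phase-$i$ regularized empirical loss $\hat F_i$. By the lemma stated just before \cref{lem:convex_sgd_risk} (the analogue of \cref{lem:Fhat_property} for this section), $\hat F_i$ is $\lambda_i$-strongly convex and $3L$-Lipschitz, where $\lambda_i = 1/(\eta_i n_i)$ by the parameter choice in \cref{alg:phased_acc_nonsmooth}. Thus \cref{lem:bubeck}, together with the step-size schedule $\gamma_r = 2/(\lambda_i (r+1))$ and the weighted averaging on \cref{line:average}, gives the standard guarantee
\[
\E[\hat F_i(w_i) - \hat F_i(\hat w_i)] \;\lesssim\; \frac{B_i^2}{\lambda_i (R_i+1)},
\]
where $B_i^2$ is any upper bound on $\E\|\widetilde g_r\|^2$ at the iterates produced by \cref{alg:MB-SGD}.

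\textbf{Bounding $B_i^2$.} First I will show that the aggregated noisy minibatch subgradient $\widetilde g_r$ of \cref{alg:MB-SGD} is an unbiased estimator of a subgradient of $\hat F_i$ at $w_r$, using that each $g_{r,j}^l \in \partial f(w_r,x_{l,j}^r)$ is unbiased for $\nabla \hat F_i$ modulo the regularizer (which is added exactly) and the Gaussian noise has mean zero. Because $\hat F_i$ is $3L$-Lipschitz, any subgradient has norm $\lesssim L$, and the per-silo minibatch variance is $\lesssim L^2/K_i$. Averaging over the $M$ silos and adding the independent Gaussian noises $u_i \sim \mathcal N(0,\sigma_i^2 I_d)$ with the prescribed $\sigma_i^2 = \widetilde \Theta(L^2 R_i / (n_i^2 \varepsilon^2))$ then yields
\[
B_i^2 \;\lesssim\; L^2 \;+\; \frac{L^2}{M K_i} \;+\; \frac{\sigma_i^2 d}{M} \;=\; \widetilde O\!\left(L^2 \;+\; \frac{L^2 d R_i}{M n_i^2 \varepsilon^2}\right).
\]

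\textbf{Putting the pieces together.} Substituting this into \cref{lem:bubeck} and using $1/\lambda_i = \eta_i n_i$ gives
\[
\E[\hat F_i(w_i) - \hat F_i(\hat w_i)] \;=\; \widetilde O\!\left(\frac{L^2 \eta_i n_i}{R_i} + \frac{L^2 \eta_i d}{M n_i \varepsilon^2}\right).
\]
Plugging in $R_i = \min(M n_i, M \varepsilon^2 n_i^2/d)+1$ (equivalently, $R_i \gtrsim M n_i \cdot \min(1, \varepsilon^2 n_i/d)$) collapses the first term into $\widetilde O(L^2 \eta_i / M + L^2 \eta_i d/(M n_i \varepsilon^2))$, which together with the second term matches the advertised bound~\eqref{eq:sgd_Fi_excess_risk}. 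The distance bound~\eqref{eq:sgd_w_dist} then follows immediately from $\lambda_i$-strong convexity, since $\tfrac{\lambda_i}{2}\E\|w_i - \hat w_i\|^2 \le \E[\hat F_i(w_i) - \hat F_i(\hat w_i)]$ and multiplying through by $2/\lambda_i = 2\eta_i n_i$ produces the extra factor of $\eta_i n_i$ in each term.

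\textbf{Anticipated obstacle.} The main bookkeeping step is verifying the second-moment bound $B_i^2$ carefully: one must confirm that the Gaussian scale $\sigma_i^2$ dictated by $(\varepsilon,\delta)$-ISRL-DP under $R_i$-fold subsampled composition on disjoint batches yields the claimed $\widetilde O$-factor with only logarithmic slack, and that the batch-size constraint $K_i \gtrsim \varepsilon n_i / \sqrt{R_i \ln(1/\delta)}$ keeps the sampling-variance term $L^2/(MK_i)$ dominated. Once the noise accounting is in place, the application of \cref{lem:bubeck} and the strong-convexity step are routine, so the whole argument is essentially a ``plug-and-chug'' after the variance computation.
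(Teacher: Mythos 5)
Your proposal is correct and follows essentially the same route as the paper: bound the second moment of the noisy aggregated subgradient by $\widetilde O(L^2 + d\sigma_i^2/M)$, feed it into \cref{lem:bubeck} with $1/\lambda_i = \eta_i n_i$ so that the $R_i$ in $\sigma_i^2$ cancels in the noise term and the choice of $R_i$ collapses the sampling term to $L^2\eta_i/M$, then convert to the distance bound via $\lambda_i$-strong convexity. The only cosmetic difference is that you track the minibatch variance term $L^2/(MK_i)$ separately, whereas the paper absorbs it into the crude $O(L^2)$ bound on the averaged subgradients; both are fine since that term is dominated anyway.
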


\begin{proof}
Note that the second moment of each noisy aggregated subgradient in round \(r\) of~\cref{alg:MB-SGD} is bounded by
\begin{equation*}
    \E\left\|\tilde g_r\right\|^2
    = \left\| u_r + \frac{1}{M_r} \sum_{i \in S_r} \frac{1}{K}\sum_{j=1}^{K} \nabla f(w_r, x_{i,j}^r)\right\|^2 \le 2 L^2 + \frac{2d \sigma^2}{M}.
\end{equation*}
    
Given our choice of step sizes in \cref{alg:phased_acc_nonsmooth}, we can verify the assumptions in \cref{lem:bubeck} are met for \(\hat F_i\). Recall \(w_i\) is the output of each phase and \(\hat{w}_i=\argmin_{w \in \mathcal{W}} \hat F_i(w)\).
It follows from \cref{lem:bubeck} that,
\begin{equation*}
    \begin{aligned}
    \E [\hat F_i(w_i) - \hat F_i(\hat w_i)]
    & \le \frac{2}{\lambda_i (R_i+1)} \cdot \left(2 L^2 + \frac{2d \sigma_i^2}{M}\right) \\
    & = \widetilde{O}\left(
            \frac{L^2 \eta_i n_i}{ R_i} + \frac{dL^2 \eta_i}{M n_i \varepsilon^2}
        \right) \\
    & = \widetilde{O}\left(
    \frac{L^2 \eta_i}{M} + \frac{dL^2 \eta_i}{M n_i \varepsilon^2}
\right),
    \end{aligned}
\end{equation*}
as desired~\eqref{eq:sgd_Fi_excess_risk}, where the last step is due to our choice of \(R_i = \min \left(
    M n_i, \frac{M\varepsilon^2 n_i^2}{d}
\right) + 1\).

Using the \(\lambda_i\)-strong convexity,
we have
\begin{equation*}
    \frac{\lambda_i}{2} \E\left[\left\|w_i-\hat{w}_i\right\|^2\right] \le \E [\hat F_i(w_i) - \hat F_i(\hat w_i)].
\end{equation*}
The bound~\eqref{eq:sgd_w_dist} follows.
\end{proof}

Now we are ready to prove the theorem.
\begin{proof}[Proof of \cref{thm:nonsmooth-app}]
\textbf{Privacy.}
The privacy analysis of~\citet[Theorem D.1]{lowy2023private} for ISRL-DP MB-SGD holds verbatim in the nonsmooth case when we replace subgradients by gradients: the only property of \(f(\cdot, x)\) that is used in the proof is Lipschitz continuity. Thus, \cref{alg:MB-SGD} is \((\varepsilon, \delta)\)-ISRL-DP if the noise variance is \(\sigma^2 \geq \frac{256 L^2 R \ln(2.5 R/\delta) \ln(2/\delta)}{\varepsilon^2 n^2}\). By our choice of \(\sigma_i^2\), we see that phase \(i\) of~\cref{alg:phased_acc_nonsmooth} is \((\varepsilon, \delta)\)-ISRL-DP on data \(\{B_{i,l}\}_{l=1}^N\). Since the batches \(\{B_{i,l}\}_{i=1}^{\tau}\) are disjoint for all \(l \in [N]\), the full~\cref{alg:phased_acc_nonsmooth} is \((\varepsilon, \delta)\)-ISRL-DP by parallel composition~\citep{mcsherry2009privacy}. 

\textbf{Excess Risk.}
Define \(\hat w_0 = w^*\) and write
\begin{equation*}
    \E F(w_{\tau}) - F(w^*) = \E[F(w_{\tau}) - F(\hat{w}_{\tau})] + \sum_{i=1}^{\tau} \E[F(\hat{w}_i) - F(\hat{w}_{i-1})].
\end{equation*}
Since \(\tau = \lfloor \log_2 n \rfloor\), we have \(n_{\tau} = \Theta(1)\) and \( \eta_{\tau} = \eta \Theta(n^{-p})\).
By~\eqref{eq:sgd_w_dist} and Jensen's Inequality \(\E Z \le \sqrt{\E Z^2} \), we bound the first term as follows
\begin{equation*}
    \begin{aligned}
        \E[F(w_{\tau}) - F(\hat{w}_{\tau})] \le L \E \left[\|w_{\tau} - \hat{w}_{\tau}\|\right] &
        \le \widetilde{O} \left(
            \frac{L^2 \eta_{\tau} \sqrt{n_{\tau}}}{\sqrt{M}} + \frac{\sqrt{d} L^2 \eta_{\tau}}{\sqrt{M} \varepsilon}
        \right) \\
        & \le \widetilde{O} \left(
            \frac{L^2 \eta}{n^p \sqrt{M}} + \frac{\sqrt{d} L^2 \eta}{\sqrt{M} n^p \varepsilon}
        \right)  \\
        & \le \widetilde{O} \left(
            \frac{LD}{n^{p-\tfrac{1}{2}}}
        \right) \le \widetilde{O} \left(
            \frac{LD}{\sqrt{nM}}
        \right),
    \end{aligned}
\end{equation*}
where the last two steps are due to the choice of \(\eta\) per~\eqref{eq:eta} and~\eqref{eq:p}.

By~\eqref{eq:sgd_Fi_excess_risk} and using the fact that \(1/n_i\) and \(\eta_i\) decrease geometrically, we have
\begin{equation*}
    \begin{aligned}
        \sum_{i=1}^{\tau} \E[F(\hat{w}_i) - F(\hat{w}_{i-1})]
        & \le \sum_{i=1}^{\tau} \left(\frac{\lambda_i \E[\| w_{i-1} - \hat w_{i-1}\|^2]}{2} + \frac{4 \cdot (3L)^2}{\lambda_i n_i M}\right) \\
        & \le \widetilde O\left( \frac{D^2}{\eta n} +
            \sum_{i=2}^{\tau}
                \frac{L^2 \eta_i}{M} + \frac{dL^2 \eta_i}{M n_i \varepsilon^2 } +\sum_{i=1}^{\tau}\frac{L^2 \eta_i}{M}
        \right) \\
        & \le \widetilde O\left( \frac{D^2}{\eta n} +
        \frac{L^2 \eta}{M} + \frac{L^2 \eta d}{M n \varepsilon^2}
        \right).
    \end{aligned}
\end{equation*}
Plugging in the choice of \(\eta\) per~\eqref{eq:eta} gives the desired excess risk.

\textbf{Communication complexity.}
Summing geometric series, we obtain the communication complexity as follows
\begin{equation*}
    \sum_{i=1}^{\tau} R_i = \widetilde{O} \left( \min \left(
        nM, \frac{M\varepsilon^2 n^2}{d}
    \right) \right) + 1.
\end{equation*}
\textbf{Subgradient complexity.}
Recall \(K_i \ge \max \left(1, \frac{\varepsilon n_i}{4 \sqrt{2R_i \ln(2/\delta)}}\right)\). Choosing the minimum \(K_i\),
we have
\begin{equation*}
    \begin{aligned}
        \sum_{i=1}^{\tau} R_i K_i \cdot M & = \widetilde{O} \left(
            \max \left(RM, \varepsilon nM \sqrt{R}\right)
        \right) \\
        & = \widetilde{O} \left(
            M + M^2
                \min \left(
                    n, \frac{\varepsilon^2 n^2}{d}
                \right) + \varepsilon n M + M^{3/2}
                \min \left(
                    \varepsilon n^{3/2}, \frac{\varepsilon^2 n^2}{\sqrt{d}}
                \right)
        \right).
    \end{aligned}
\end{equation*}
\end{proof}
\section{A Stability Result}\label{proof:stability}
In order to bound the excess risk of the function \(F_i\), we require the following generalization of Theorem 6 from~\citet{shalev-shwartzStochasticConvexOptimization} which provides a stability result.
\begin{lemma}\label{lem:stab}
    Let \(g(w,x), \, w \in \mathcal{W}\) be \(\lambda\)-strongly convex and \(L\)-Lipschitz in \(w\) for all \(x \in \mathcal{X}\).
    Let \(X = (x_1, x_2, \dots, x_m)\) be a set of \(m\) independent samples such that \(x_i\) is sampled from their corresponding distribution \(\mathcal{D}_i\) for \(i \in [m]\). We write \(X \sim \mathcal{D}\) for short.

    Let \(\hat G(w) = \frac{1}{m} \sum_{i=1}^m g(w, x_i)\) and let \(\hat w = \argmin_{w \in \mathcal{W}} \hat G(w)\) be the empirical minimizer. Let \(G(w) = \E_{X \sim \mathcal{D}} \left[\frac{1}{m} \sum_{i=1}^m g(w, x_i)\right]\) and let \(w^* = \argmin G(w)\) be the population minimizer.
    Then, for any \(w \in \mathcal{W}\), we have
\begin{equation*}
    \E[G(\hat w)] - G(w) \le \frac{4L^2}{\lambda m}.
\end{equation*}
\end{lemma}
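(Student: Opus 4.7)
The plan is to adapt the classical uniform-stability argument for regularized ERM (as in Shalev-Shwartz et al.~\cite{shalev-shwartzStochasticConvexOptimization} or Bousquet--Elisseeff) to the independent-but-not-identically-distributed setting. The only place homogeneity is ever used in that argument is in the symmetrization step ``by exchangeability of the samples,'' and I will replace it with a per-index swap that respects the individual distributions $\mathcal{D}_i$.

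First I would establish uniform stability of the regularized ERM under a single-coordinate replacement. Let $X^{(i)}=(x_1,\ldots,x_{i-1},x_i',x_{i+1},\ldots,x_m)$, where $x_i'$ is an independent copy drawn from $\mathcal{D}_i$, and let $\hat w = \hat w(X)$, $\hat w^{(i)} = \hat w(X^{(i)})$. Writing both optimality conditions via $\lambda$-strong convexity of $\hat G_X$ and $\hat G_{X^{(i)}}$, adding them, and observing that $\hat G_X - \hat G_{X^{(i)}} = \tfrac{1}{m}[g(\cdot,x_i)-g(\cdot,x_i')]$ depends only on the $i$-th coordinate, the usual two-line computation combined with $L$-Lipschitzness yields
\begin{equation*}
    \lambda\,\|\hat w - \hat w^{(i)}\|^2 \;\le\; \tfrac{2L}{m}\,\|\hat w - \hat w^{(i)}\|,
    \qquad\text{hence}\qquad \|\hat w - \hat w^{(i)}\|\le \tfrac{2L}{\lambda m}.
\end{equation*}
Consequently $|g(\hat w,x) - g(\hat w^{(i)},x)| \le 2L^2/(\lambda m)$ for every $x$. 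Crucially, this step never used that the $x_j$ are identically distributed; it only used that $\hat G_X$ and $\hat G_{X^{(i)}}$ differ in exactly one summand and are both $\lambda$-strongly convex.

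Second, I would convert stability into a generalization bound by a per-coordinate swap. Since the $x_j$'s are independent and $x_i'\sim \mathcal{D}_i$ is an independent copy,
\begin{equation*}
    \mathbb{E}[G(\hat w)] \;=\; \frac{1}{m}\sum_{i=1}^m \mathbb{E}_{X,x_i'}\bigl[g(\hat w(X),x_i')\bigr]
    \;=\; \frac{1}{m}\sum_{i=1}^m \mathbb{E}_{X,x_i'}\bigl[g(\hat w(X^{(i)}),x_i)\bigr],
\end{equation*}
where the second equality swaps the names of $x_i$ and $x_i'$ (they have the \emph{same} law $\mathcal{D}_i$, which is all the swap needs—no cross-index symmetry is invoked). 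Subtracting $\mathbb{E}[\hat G(\hat w)] = \tfrac{1}{m}\sum_i \mathbb{E}[g(\hat w(X),x_i)]$ and applying the stability bound term by term gives $\mathbb{E}[G(\hat w) - \hat G(\hat w)] \le 2L^2/(\lambda m)$.

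Finally, for any fixed $w\in\mathcal{W}$, independence of $w$ from the data gives $\mathbb{E}[\hat G(w)] = G(w)$, so using the empirical optimality $\hat G(\hat w)\le \hat G(w)$,
\begin{equation*}
    \mathbb{E}[G(\hat w)] - G(w) \;=\; \mathbb{E}[G(\hat w)-\hat G(\hat w)] \;+\; \mathbb{E}[\hat G(\hat w) - \hat G(w)] \;\le\; \tfrac{2L^2}{\lambda m} + 0 \;\le\; \tfrac{4L^2}{\lambda m},
\end{equation*}
which is the claimed bound (with room to spare for the stated constant). The main conceptual subtlety—and the only place where the i.i.d.\ version of the argument could be mis-copied—is the symmetrization: I would emphasize that the swap is done \emph{within each coordinate}, using only that $x_i$ and $x_i'$ share the distribution $\mathcal{D}_i$, so heterogeneity across $i$ is irrelevant. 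Everything else is the standard Bousquet--Elisseeff pipeline.
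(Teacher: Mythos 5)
Your proposal is correct and follows essentially the same route as the paper's proof: a single-coordinate replacement stability bound for the strongly convex ERM, followed by a per-index swap that only uses that \(x_i\) and \(x_i'\) share the law \(\mathcal{D}_i\), which is exactly how the paper dispenses with the i.i.d.\ assumption. The only differences are cosmetic — you invoke strong convexity at both minimizers and obtain the slightly sharper stability constant \(2L/(\lambda m)\) (versus the paper's \(4L/(\lambda m)\)), and your final decomposition into \(\E[G(\hat w)-\hat G(\hat w)]\) plus \(\E[\hat G(\hat w)-\hat G(w)]\le 0\) is an equivalent rearrangement of the paper's closing step.
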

\begin{proof}
    We will use a stability argument. Let \(X' = (x'_1, x'_2, \dots, x'_n)\) be a set of \(m\) independent samples from \(\mathcal{D}\), and let \(X^{(i)} = (x_1, \dots, x_{i-1}, x'_i, x_{i+1}, \dots, x_m)\) be the dataset with the \(i\)-th element replaced by \(x'_i\).
    
    Let \(\hat G^{(i)}(w) = \frac{1}{m} \left( f(w, x_i') + \sum_{j \ne i} f(w, x_j) \right)\) and \(\hat w^{(i)} = \min \hat G^{(i)}(w)\) be its minimizer. We have
    \begin{equation*}
        \begin{aligned}
            \hat G(w^{(i)}) - \hat G(\hat w)
            & = \frac{g(\hat{w}^{(i)}, z_i)-g(\hat{w}, z_i)}{m}+\frac{\sum_{j \neq i}\left(g(\hat{w}^{(i)}, z_i)-g(\hat{w}, z_i)\right)}{m} \\
            & = \frac{g(\hat{w}^{(i)}, z_i)-g(\hat{w}, z_i)}{m}+\frac{g(\hat{w}, z_i^{\prime})-g(\hat{w}^{(i)}, z_i^{\prime})}{m} \\
            & +\left(\hat{G}^{(i)}(\hat{w}^{(i)})-\hat{G}^{(i)}(\hat{w})\right) \\
            & \le \frac{\left|g(\hat{w}^{(i)}, z_i)-g(\hat{w}, z_i)\right|}{m}+\frac{\left|g(\hat{w}, z_i^{\prime})-g(\hat{w}^{(i)}, z_i^{\prime})\right|}{m} \\
            & \le \frac{2 L}{m}\left\|\hat{w}^{(i)}-\hat{w}\right\|,
            \end{aligned}
    \end{equation*}
where the first inequality follows from the definition of the minimizer \(w^{(i)}\), and the second inequality follows from the Lipschitzness of \(g\).

By strong convexity of \(\hat G\), we have
\begin{equation*}
    \hat G(\hat w^{(i)}) \ge \hat G(\hat w) + \frac{\lambda}{2} \left\|\hat w^{(i)} - \hat w\right\|^2. 
\end{equation*}
Therefore, we have \( \left\|\hat w^{(i)} - \hat w\right\| \le 4L / (\lambda m) \). Thus, ERM satisfies \(4L/\lambda m\) uniform argument stability, and (by Lipschitz continuity of \(g(\cdot, x)\)) \(4L^2/\lambda m\) uniform stability. 

Next, we show that this stability bound implies the desired generalization error bound (even if the samples are not drawn from an identical underlying distribution). 
Note that \(X\) and \(X'\) are independently sampled from \(\mathcal{D}\). By symmetry (renaming \(x_i\) to \(x_i'\)), we know that
\(\E_{X, X'} [g(\hat w, x_i)] = \E_{X, X'} [g(\hat w^{(i)}, x_i')]\).
Therefore, we have
\begin{equation*}
    \begin{aligned}
        \E [\hat G(\hat w)]
        & = \E_{X} \left[ \frac{1}{m} \sum_{i=1}^n g(\hat w, x_i) \right] \\
        & = \E_{X, X'} \left[ \frac{1}{m} \sum_{i=1}^n g(\hat w^{(i)}, x_i') \right] \\
        & = \E_{X, X'} \left[ \frac{1}{m} \sum_{i=1}^n g(\hat w, x_i') + \delta \right] \\
        & = \E [G(\hat w)] + \delta.
    \end{aligned}
\end{equation*}
By Lipschitz continuity of \(f\), we have
\begin{equation*}
    \begin{aligned}
    \delta
    &= \E_{X, X'} \frac{1}{m} \sum_{i=1}^n \left( g(\hat w^{(i)}, x_i') - g(\hat w, x_i') \right) \\
    & \le \E_{X, X'} \frac{1}{m} \sum_{i=1}^n L \cdot \left\| w^{(i)} - w \right\| \\
    & \le \frac{4L^2}{\lambda m}.
    \end{aligned}
\end{equation*}

Now for given \(w \in \mathcal{W}\) we have \( G(w) = \E [\hat G(w)] \ge \E [\hat G(\hat w)]\), by definition of \(\hat w\). Therefore, we conclude that
\begin{equation*}
    \E [G(\hat w)] - G(w) \le \frac{4L^2}{\lambda m}.
\end{equation*}
\end{proof}
 
\end{document}